\documentclass[11pt]{article}
\usepackage[utf8]{inputenc}
\usepackage[T1]{fontenc}
\usepackage{mlmodern}
\usepackage[dvipsnames]{xcolor}
\usepackage[colorlinks=true, linkcolor=Mahogany, citecolor=ForestGreen]{hyperref}
\usepackage{url}
\usepackage{booktabs}
\usepackage{amsfonts}
\usepackage{amsmath}
\usepackage{amssymb}
\usepackage{amsthm}
\usepackage{nicefrac}
\usepackage{microtype}
\usepackage{newtxtext}

\usepackage[square, sort&compress, numbers]{natbib}
\usepackage{parskip}
\usepackage[margin=1in]{geometry}
\usepackage{changepage}

\usepackage{todonotes}
\usepackage{dsfont}
\usepackage{marginnote}
\usepackage{tikz, pgf}
\usepackage{mathtools}
\newcommand*{\coloneq}{\vcentcolon\mathrel{\mkern-1.2mu}=} 
\usepackage{algorithm}
\usepackage{algpseudocode}
\usepackage{enumitem}
\usepackage{tcolorbox}
\usepackage{mathrsfs}
\usetikzlibrary{positioning}

\DeclarePairedDelimiter\ceil{\lceil}{\rceil}

\newtheorem{theorem}{Theorem}
\newtheorem{proposition}{Proposition}
\newtheorem{lemma}[proposition]{Lemma}
\newtheorem{corollary}{Corollary}
\newtheorem{definition}{Definition}

\newtheorem{remark}{Remark}

\definecolor{forestgreen}{RGB}{34,139,34}

\newcommand{\E}[2][]{\mathbb{E}_{#1}\left[#2\right]}

\newcommand{\norm}[1]{\left\lVert #1 \right\rVert}
\newcommand{\abs}[1]{\left| #1 \right|}
\newcommand{\opnorm}[1]{\left\lVert #1 \right\rVert_{\mathrm{op}}}

\newcommand{\Pbb}{\mathbb{P}}
\newcommand{\Ebb}{\mathbb{E}}
\newcommand{\Rbb}{\mathbb{R}}
\newcommand{\Nbb}{\mathbb{N}}
\newcommand{\Vcal}{\mathcal{V}}
\newcommand{\Ecal}{\mathcal{E}}

\newcommand{\Mcal}{\mathcal{M}}
\newcommand{\Gcal}{\mathcal{G}}
\newcommand{\Acal}{\mathcal{A}}

\newcommand{\diag}{\operatorname{Diag}}

\newcommand{\Var}{\operatorname{Var}}

\newcommand{\tssym}{\Psi}
\newcommand{\ts}{\tssym_{i,A}(j)}
\newcommand{\tsat}[3]{\tssym_{#1,#2}(#3)}
\newcommand{\undeftok}{\textsc{undef}}

\newcommand{\dunes}{\textsc{DuNeS}}
\newcommand{\algoone}{\textsc{LS-DuNeS}}
\newcommand{\algotwo}{\textsc{AU-DuNeS}}

\title{{Mixing-Free and Signal-Optimal Learning of Gaussian Graphical Models from Glauber Dynamics}}

\author{
{
\begin{tabular}{c}
Vignesh Tirukkonda \qquad Gautam Dasarathy \\
Arizona State University \\
{\footnotesize\texttt{\{vtirukko, gautamd\}@asu.edu}}
\end{tabular}
}
}

\date{July 19, 2026}

\begin{document}
\maketitle

\begin{abstract}
Gaussian graphical model selection is usually studied under independent sampling, but in many applications the data arise as a single trajectory of a dependent stochastic process. We study exact recovery of the graph from one trajectory of random-scan Gaussian Glauber dynamics. Existing techniques for this problem either inherit the mixing time of the chain, which can be super-polynomial in the dimension $p$ without strong assumptions, or are suboptimal in the minimum normalized edge strength $\kappa$. We propose two algorithms that are mixing-free and attain the $\kappa^{-2}$ dependence of the information-theoretic lower bounds. Both instantiate a shared dueling-neighborhood search meta-algorithm with a local statistic built directly from the update sequence. For every fixed precision matrix and deterministic initialization, the first algorithm fits a least-squares regression at the updates of each node and has pointwise recovery horizon $\widetilde O(pd^{2}/\kappa^{2})$, where $d$ is the maximum degree. Its horizon depends logarithmically on a local conditioning quantity and on the initialization potential. The second algorithm is based on counting occurences of a specific update pattern and requires $\widetilde O(pd^{4}/\kappa^{2})$ updates, with no dependence on any condition number. The central technical challenge is that both statistics are built from dependent, non-stationary observations. Our analysis tackles this by demonstrating how to extract fresh Gaussian innovations from the update sequence, which yields mixing-free control of appropriate quantities. Neither the algorithms nor their analyses invoke stationarity, a spectral gap, or mixing conditions.
\end{abstract}

\section{Introduction}\label{sec:introduction}

The Gaussian graphical model (GGM) is an important and widely used formalism for representing conditional dependence structure in high-dimensional Gaussian data. It has found applications in diverse areas such as bioinformatics, finance, and psychology~\citep{edwards2010forests, zhan2021graphical, epskamp2016gaussian}. A central statistical challenge associated with these models is \emph{structure learning}: recovering the conditional-independence graph, or equivalently the sparsity pattern of the precision matrix of the Gaussian distribution. This problem of learning the graph structure from node data has been studied extensively when the data is sampled independently (and identically) from the joint distribution; see, for example,~\citet{drton2016structure}.

In this paper we study a variant of the structure learning problem, where the data we observe comes from a single trajectory of the Gaussian Glauber dynamics also known as the random-scan Gaussian Gibbs sampler on the underlying distribution. That is, each time step supplies a local conditional update to a randomly chosen node (or coordinate). Indeed, since all other nodes remain the same, these observations are highly dependent. {This setting is motivated by the many practical scenarios in which i.i.d.\ samples are unavailable and the data are more naturally modeled as a stochastic process evolving over time~\citep{Dennis2022Markov, Lara2019GlauberEpidemicModel}. Glauber dynamics in particular has been used to model the spread of information and diseases across networks~\citep{Montanari2010InformationSpread, Lara2019GlauberEpidemicModel} and agent decisions in coordination games~\citep{Auletta2017}.} {A recent line of work has begun to address this problem}; see Section~\ref{sec:related} for more on this.

In the classical i.i.d. setting, the information theoretically optimal~\citep{wang2010information} sample complexity $\mathcal{O}(\log p/\kappa^{2})$ for structure learning has been shown to be achievable~\citep{misra2020information}; here $p$ is the number of vertices in the graph (or the ambient dimension) and $\kappa$ is the normalized minimum edge strength (defined in Section~\ref{sec:ggm}). Indeed, lower bounds with similar scaling hold even in the Gaussian Glauber dynamics setting~\citep{TRD25, SWMM26}. This raises the question -- is this sample complexity achievable by an algorithm for structure learning from a single Glauber trajectory?

\citet{TRD25} propose two procedures that make partial progress on this question. The first, BTR-GL, is mixing-based: it waits for the trajectory to mix and then reduces it to near-independent samples. It attains the corresponding dependence on the ``signal strength'' $\kappa$, but its sample complexity depends on the mixing time of the underlying Markov chain, and keeping that mixing time small requires strong assumptions on the dynamics, such as a Dobrushin-type contraction condition and bounded diagonal entries of the precision matrix. The second procedure of~\citet{TRD25}, LET-GL, is local and does not explicitly invoke mixing. Its guarantee, however, rests on similar Dobrushin-type assumptions, under which the chain will in fact have mixed by the time structure learning succeeds; moreover, its sample complexity scales as $\kappa^{-5}$. \citet{SWMM26} also propose two procedures. Their mixing-based procedure comes with a guarantee scaling as $\kappa^{-2}\left(t_{\mathrm{mix}}+d^{2}\kappa^{-2}\right)$, where $t_{\mathrm{mix}}$ is the mixing time, and so retains a $\kappa^{-4}$ term regardless of how fast the chain mixes. {Since their result depends on the mixing time, keeping the sample complexity competitive would require the strong assumptions discussed above}. Their local procedure is mixing-free and its guarantee does not require Dobrushin-type conditions. Its sample complexity, however, scales as $\kappa^{-5}$ like LET-GL. In short, no existing procedure is simultaneously free of mixing assumptions and optimal in its dependence on $\kappa$.

In this work, we propose two algorithms, \algoone{} and \algotwo{}, that (a) {\bf do not require fast mixing} and (b) retain the {\bf information-theoretically optimal dependence on the signal strength} $\kappa$\footnote{Throughout, optimality in $\kappa$ refers to the polynomial rate: our guarantees scale as $\kappa^{-2}$, matching the lower bounds discussed in Section~\ref{sec:ggm}. The logarithmic factors in our guarantees contain an additive $\log(1/\kappa)$ that the lower bounds do not have; this term is dominated by $\log p$ whenever $1/\kappa$ is polynomial in $p$. See Section~\ref{sec:discussion}.}. Our starting point is a meta-algorithm, directly inspired by~\cite{misra2020information}, which we call \dunes{} (for \underline{Du}eling \underline{Ne}ighborhood \underline{S}earch).  For a fixed node $i$, \dunes{} compares a candidate set $C$ with dueling sets $D$ and invites the elements of the latter to challenge the candidacy of $C$ via a carefully designed statistic. If any of the challengers succeed, $C$ is discarded. And, if there is no dueling set $D$ that is able to successfully challenge $C$, then $C$ is declared to be a successful candidate that contains the true neighborhood of $i$. A simple post-processing of $C$ then reveals the true neighborhood.  

The first instantiation, \algoone{}, performs a local regression for each node to create this test statistic. Although the covariate rows in these regressions are highly dependent, we know that each row has a fresh Gaussian innovation, and our analysis uses this structure carefully. The resulting procedure has information-theoretically optimal dependence on the minimum edge strength $\kappa$. For each fixed precision matrix $\Theta$ and deterministic initialization $x^{(0)}$, its pointwise recovery horizon depends only logarithmically on the \emph{degree-constrained spectral radius} $\rho_{2d}(\Theta)$ (Definition~\ref{def:rho2d}) and on the initialization potential $\norm{x^{(0)}}_\Theta^2$. We show in Proposition~\ref{prop:large-rho-slow-relaxation} that large values of $\rho_{2d}(\Theta)$ force a slow uniform mixing time; \algoone{}, however, can still be efficient.

The second instantiation, \algotwo{}, is based on looking for alternating update patterns of the form $i,i,j,i$ (as in~\cite{TRD25, SWMM26}); see Section~\ref{sec:algotwo} for the intuition behind this. Earlier work using pattern-based statistics, including work on Ising models~\citep{bresler2014learning, gaitonde2025bypassing, gaitonde2025better}, face a basic difficulty: the pattern is informative only when the neighborhood of $i$ is quiet, i.e,  none of the neighbors of node $i$ are updated during the pattern updates. However, the neighborhood is unknown, which means this event is undetectable. To handle this difficulty, these methods divide the trajectory into fixed-length windows and count the number of windows where this pattern occurs. They then estimate how often neighbor updates could contaminate the test statistic and apply a corresponding correction. This creates a tension: the larger the window is, the more likely one is to observe these $iiji$ patterns, but also, the more likely it is for contaminations to occur. This effectively severely dilutes the signal and leads to poor dependence on the minimum edge strength $\kappa${; the resulting sample complexities scale as $\kappa^{-5}$}.

The candidate-set strategy of \dunes{} removes this observability problem. Since we iterate through explicitly declared candidate neighborhoods, we no longer have to work with fixed length windows or correct the test statistic and can therefore retain the informative blocks without losing signal to contamination. Our local test statistic construction, based on the normalized residual sum of squares, avoids a further source of signal loss: the statistics in past work require individual updates to have sufficiently large magnitude and, in some cases, all samples to be simultaneously bounded above. Being scale-invariant, ours requires neither, which permits a sharper analysis through self-normalized martingale bounds. The resulting algorithm is information-theoretically optimal in $\kappa$ while not requiring any extra assumptions such as  bounded condition number, irrepresentability conditions, or Dobrushin-type assumptions.

We summarize the main contributions of this paper as follows.
\begin{itemize}
    \item We introduce the \dunes{} meta algorithm, a natural abstraction of DICE~\cite{misra2020information}, that is guaranteed to recover the exact graph provided its local test statistic satisfies a uniform separation property.
    \item We develop \algoone{}, which instantiates the template with a regression-based statistic. For each node, it reads the trajectory at the updates of that node, fits an ordinary least-squares regression on the tested coordinates, and uses appropriately normalized regression coefficients to perform the test.
    \item We develop \algotwo{}, which instantiates the template with a block statistic built from the alternating update pattern $i,i,j,i$ restricted to the candidate and dueling sets. Since the sets are explicit, the test statistic does not have to correct for possible contamination from neighbor updates required by earlier update-pattern-based methods.
    \item We prove finite-sample recovery guarantees for both procedures. Writing $\widetilde O$ for a bound that suppresses factors logarithmic in $p$, $d$, $1/\kappa$, $1/\delta$, $\rho_{2d}(\Theta)$, and the initialization potential:
    \begin{itemize}
        \item \algoone{} has pointwise recovery horizon
        $N=\widetilde O\!\left(p\,d^{2}/\kappa^{2}\right)$ for every
        $(\Theta,x^{(0)})$. Its guarantee depends on
        $\rho_{2d}(\Theta)$ and
        $\norm{x^{(0)}}_\Theta^2$ only logarithmically: the horizon is
        essentially linear in $p$ when $\rho_{2d}(\Theta)$ is
        polynomial, and remains polynomial even when
        $\rho_{2d}(\Theta)$ grows exponentially in $p$, a regime in
        which mixing-based horizons are exponential.
        \item \algotwo{} recovers the graph from 
        $N=\widetilde O\!\left(p\,d^{4}/\kappa^{2}\right)$ updates,
        uniformly over every precision matrix in the model
        class and every deterministic initialization, with no
        dependence on $\rho_{2d}(\Theta)$, the initialization
        potential, or any other condition number.
    \end{itemize}
    Both guarantees are mixing-free and attain the $\kappa^{-2}$ dependence of the information-theoretic lower bound. The analyses behind these guarantees share one driving principle which directly builds on the structure of the Glauber dynamics: whenever a tested coordinate has just been refreshed, the corresponding observation decomposes into a predictable part and a fresh Gaussian innovation. \algoone{} finds this fresh randomness in the regression rows that immediately follow an update of a tested coordinate, and \algotwo{} in the refresh of the probed coordinate within each occurrence of the pattern. The remaining technical difficulties are specific to each statistic. For \algoone{}, we must control the spectrum of a regression design built from dependent, non-stationary observations. We use a Lyapunov-style drift inequality to bound the spectrum from above, and the fresh innovations to bound it from below. For \algotwo{}, the observations drawn from successive occurrences of the pattern are themselves dependent. A key difficulty here is to construct an appropriate filtration under which we can still invoke self-normalized concentration, and we control the resulting unbounded statistic by a peeling argument.
\end{itemize}

\paragraph{Paper organization.}
Section~\ref{sec:related} reviews related work on structure learning. Section~\ref{sec:setup} introduces the GGM, the Glauber dynamics, and the recovery problem. Section~\ref{sec:skeleton} presents the \dunes{} template and its deterministic recovery guarantee. Section~\ref{sec:algoone} develops the regression-based algorithm \algoone{}. Section~\ref{sec:algotwo} develops the alternating-update based algorithm~\algotwo{}. Section~\ref{sec:discussion} compares the two procedures. The proofs are collected in the appendices.

\section{Related Work}\label{sec:related}

\paragraph{Structure learning from i.i.d.\ Gaussian samples.}
Structure learning in GGMs from independent samples has been studied extensively. Neighborhood-regression methods recover the graph one node at a time by sparse linear regression of each coordinate on the rest~\citep{meinshausen2006high}, while penalized maximum-likelihood methods such as the graphical lasso fit a sparse precision matrix jointly~\citep{friedman2008sparse, Yuan2007Gaussian, banerjee2007model}. Guarantees for both families typically require incoherence or irrepresentability conditions on $\Theta$~\citep{ravikumar2011high}, conditions that the information-theoretic treatment of the problem avoids. \citet{wang2010information} established lower bounds together with matching combinatorial estimators, and \citet{misra2020information} gave DICE, an iterative support-testing estimator whose sample complexity matches the information-theoretic lower bound up to a factor of the maximum degree $d$, under a non-degeneracy condition alone. The dueling-sets skeleton of \dunes{} inherits the candidate/cleanup architecture of DICE; the substantive difference is that our local statistics are built from Glauber update events rather than from independent samples.

\paragraph{GGM structure learning from dynamics.}
For discrete graphical models, \citet{bresler2014learning} first showed that Ising structure can be learned from a single Glauber trajectory. Following this, there has been a line of work improving sample complexity, generalizing to other dynamics, and weakening the observation model~\citep{gaitonde2023unifiedapproachlearningising, gaitonde2025bypassing, gaitonde2025better, dutt2021exponentialreductionsamplecomplexity}. For Gaussian graphical models, \citet{tirukkonda2025structure} initiated structure learning from a single Glauber trajectory. Subsequently, \citet{TRD25} proposed two complementary estimators: a local edge-testing estimator (LET-GL), which does not explicitly invoke mixing but whose guarantee requires Dobrushin-type assumptions, and a mixing-based reduction (BTR-GL), which uses a Dobrushin contraction condition $r<1$ to thin the trajectory into a near-independent sample and then invokes any i.i.d.\ Gaussian learner as a black box. Independently, \citet{SWMM26} also study learning GGMs from a Glauber trajectory without a mixing assumption. Relative to this line of work, the present paper removes a tradeoff. The local procedures do not wait for the chain to mix, but they are suboptimal in the signal strength: LET-GL scales as $\kappa^{-5}$ and its guarantee requires Dobrushin-type assumptions, while the local procedure of~\citet{SWMM26} avoids such assumptions but likewise scales as $\kappa^{-5}$. The optimal $\kappa^{-2}$ dependence was previously available only through the mixing-based reduction BTR-GL of~\citet{TRD25}, and only when the chain mixes fast, which is in turn ensured by the Dobrushin contraction condition $r<1$. Our algorithms operate at the level of a candidate/dueling-set support test rather than per-edge testing, inheriting the combinatorial structure of the DICE estimator while working directly on the trajectory, and they attain the $\kappa^{-2}$ dependence without a mixing assumption.

\paragraph{Mixing-based reductions.}
The most direct way to use a trajectory is to wait for the chain to mix and then subsample it into an approximately independent collection, the classical thinning heuristic of MCMC practice~\citep{maceachern1994subsampling, link2012thinning}, whose statistical efficiency has itself been scrutinized~\citep{owen2017statistically, riabiz2022optimal}. Making such a reduction rigorous requires quantitative convergence rates, which for the Gaussian Gibbs sampler are available under Dobrushin-type or log-concavity conditions~\citep{wang2014convergence, wang2017convergence, ascolani2024entropy, wadia2024gibbs} but are either not high-dimensional or not under the total variation metric, or are not entirely assumption-free. Guarantees obtained this way inherit the invoked convergence rate in their sample complexity: the BTR-GL bound of~\citet{TRD25}, for instance, carries a multiplicative $(1-r)^{-1}$ factor. The mixing-based procedure of~\citet{SWMM26} is subject to the same inheritance. Our analysis makes no such appeal. When $\rho_{2d}(\Theta)$ is super-polynomial in $p$, Proposition~\ref{prop:large-rho-slow-relaxation} shows that a mixing horizon valid uniformly over a sufficiently large bounded-potential class is also super-polynomial.

\paragraph{{Estimation from a single trajectory without mixing.}}
{Beyond graphical models, least-squares estimation on a single dependent trajectory without mixing arguments has been developed in linear system identification. For instance, \citet{simchowitz2018learning} show that ordinary least squares identifies a marginally stable linear system at a near-minimax rate, replacing mixing with a block martingale small-ball condition that lower bounds the empirical Gram matrix; see \citet{tsiamis2023statistical} for an overview of this line of work. Our design-control argument for \algoone{} is of a similar spirit: the regenerated rows of Section~\ref{sec:lower-design-control} plays the anti-concentration supplier role that the small-ball condition does there. The two settings differ in several ways including the update structure (single-site updates read at random times), the goal (exact model selection rather than parameter estimation), and initialization.}

\section{Setup and Notation}\label{sec:setup}

We begin with some notation we use throughout the manuscript. 

\textbf{Notation.}  We write \(\mathcal{G} = (\mathcal{V}, \mathcal{E})\) to denote an undirected graph on the vertex set \(\mathcal{V}\) with edges \(\mathcal{E}\subseteq {\mathcal{V} \choose 2}\). For a set $S$, we write $|S|$ for its size, and for an integer $p > 0$, we write $[p]$ to denote the set $\{1,2,\dots, p\}$. For a {$p$-dimensional} vector $X \in \mathbb{R}^{p}$ and $i \in [p]$, let $X_{i}$ denote the $i$th coordinate of $X$ and for a set $S \subset [p]$, let $X_{S} \in \mathbb{R}^{|S|}$ denote the sub-vector of \(X\) indexed by \(S\). We abbreviate $X_{-S}:= X_{[p] \setminus S}$. {For integers $0 \leq n_1 \leq n_2$, we denote the sequence  \(\left(X^{(n_1)}, X^{(n_1+1)}, \dots, X^{(n_2)}\right)\) with the abbreviated notation $X^{(n_1:n_2)}$, allowing for $n_2 = \infty$. We use the same colon convention for any indexed collection, placing the range in the slot of the index it replaces (e.g., $I^{(1:N)}$ and $\xi_{1:n_0}$).} For a matrix $M \in \mathbb{R}^{p \times p}$, and sets $A, B \subset [p]$, let $M_{AB}$ be the {submatrix} with rows indexed from $A$ and columns from $B$. We use $\|X\|_l$ to denote the $l$-norm of a vector $X \in \mathbb{R}^p$. For $M \succeq 0 \in \mathbb{R}^{p \times p}$, the weighted 2-norm of {a} vector $X \in \mathbb{R}^p$ is defined by $\|X\|_M=\sqrt{X^{\top} M X}$. We use $\lambda_{\min }(M)$ to denote the minimum eigenvalue of the positive definite matrix $M$. For sigma algebras $\mathscr A$ and $\mathscr B$, we write $\mathscr A\vee\mathscr B:=\sigma(\mathscr A\cup\mathscr B)$ for their join, the smallest sigma algebra containing both.

\subsection{Gaussian Graphical Models}\label{sec:ggm}

Let $\Gcal = (\Vcal, \Ecal)$ be an undirected graph with vertex set $\Vcal = [p]$ and edge set $\Ecal \subseteq \binom{\Vcal}{2}$, and let $X = (X_1, \ldots, X_p) \sim \mathcal{N}_p(0, \Theta^{-1})$ with positive-definite precision matrix $\Theta \succ 0$ and covariance $\Sigma = \Theta^{-1}$. The pair $(\Gcal, X)$ is a \emph{Gaussian graphical model} (GGM)~\cite{lauritzen1996} when
\begin{equation}
    \Theta_{ij} = 0 \quad \Longleftrightarrow \quad \{i,j\} \notin \Ecal.
    \label{eq:ggm-edge}
\end{equation}
For each node $i \in \Vcal$, we write $S_i \coloneq \{ j \in \Vcal \setminus \{i\} : \{i, j\} \in \Ecal \}$ for its neighborhood, $d_i \coloneq |S_i|$ for the degree, and $d \coloneq \max_{i \in \Vcal} d_i$ for the maximum degree. For each $i \in \Vcal$, the conditional law of $X_i$ given $X_{-i}$ is Gaussian~\cite{lauritzen1996},
\begin{equation}
    X_i \big| X_{-i} \sim
    \mathcal{N}\!\left( \sum_{j \in S_i} \beta_{ij} X_j, \sigma_i^2 \right),
    \qquad
    \beta_{ij} \coloneq -\frac{\Theta_{ij}}{\Theta_{ii}},
    \quad
    \sigma_i^2 \coloneq \Theta_{ii}^{-1},
    \label{eq:full-cond}
\end{equation}
with $\beta_{ij} = 0$ when $j$ is not a neighbor {of} $i$ by~\eqref{eq:ggm-edge}. {We write} $\beta_i \coloneq (\beta_{ij})_{j \in \Vcal \setminus \{i\}} \in \Rbb^{p-1}$ for the full regression-coefficient vector; for $A \subseteq \Vcal \setminus \{i\}${, we write $\beta_{i,A} \coloneq (\beta_{ij})_{j \in A}$ for its restriction to $A$}. For an edge $\{i,j\} \in \Ecal$, {we} define the \emph{normalized edge strength}
\begin{equation}
    \kappa_{ij} \coloneq
    \frac{|\Theta_{ij}|}{\sqrt{\Theta_{ii} \Theta_{jj}}} = |\beta_{ij}|\sqrt{\frac{\Theta_{ii}}{\Theta_{jj}}} = \frac{\lvert \beta_{ij} \rvert}{\sigma_{i}\sqrt{\Theta_{jj}}}. \label{eq:kappa-ij}
\end{equation}
The \emph{minimum normalized edge strength} of the model is $\kappa \coloneq \min_{\{i,j\} \in \Ecal} \kappa_{ij}$.
The parameter $\kappa$ measures the strength of the weakest edge in the model and governs the intrinsic difficulty of structure learning. In the i.i.d.\ setting, \citet{wang2010information} show that exact recovery requires $\Omega(\log p/\kappa^{2})$ samples, and \citet{TRD25} and \citet{SWMM26} establish lower bounds with the same $\kappa^{-2}$ scaling for observations from a single Glauber trajectory\footnote{The bound of \citet{TRD25} is stated in terms of the minimum regression coefficient $\beta_{\min}=\min_{\{i,j\}\in\Ecal}|\Theta_{ij}|/\Theta_{ii}$. The precision matrices in their hard ensemble have constant diagonal, on which $\beta_{\min}$ and $\kappa$ coincide, so the bound reads $\Omega(p\log(p-d)/\kappa^{2})$.}. 

\subsection{Random-scan Gaussian Glauber dynamics}\label{sec:glauber}

The data are generated by discrete-time random-scan Glauber dynamics started from a deterministic state $X^{(0)}=x^{(0)}\in\Rbb^p$. At time $t\ge1$, an independent update label is drawn uniformly at random, i.e., $I^{(t)}\sim\operatorname{Unif}(\Vcal)$. {If} $I^{(t)}=i$, then the update is done as follows:
\begin{align}
    X_i^{(t)}
    &=
    \sum_{j\in S_i}\beta_{ij}X_j^{(t-1)}
    +
    \varepsilon_i^{(t)},
    \qquad
    \varepsilon_i^{(t)}\sim\mathcal N(0,\sigma_i^2),
    \label{eq:glauber-additive}\\
    X_j^{(t)}
    &=
    X_j^{(t-1)}
    \qquad \text{for all } j\neq i.
    \label{eq:glauber-frozen}
\end{align}
We note that the update rule~\eqref{eq:glauber-additive} replaces the state of node $i$ with a linear combination of its neighbors' states plus an independent Gaussian innovation. Local averaging updates of this form are the building blocks of randomized gossip algorithms for distributed computation~\citep{boyd2006randomized} and of DeGroot-type models of opinion formation, including their asynchronous variants~\citep{degroot, elboim2024asynchronous, Asynchronous_opinion_dynamics_in_social_networks}. Trajectories of the kind we study here therefore arise not only from Gibbs samplers but also as observations of such networked processes.

{Before proceeding, let us pause and examine the randomness underlying the update equations~\eqref{eq:glauber-additive} and~\eqref{eq:glauber-frozen}, as our analysis leans on its precise structure.
The process is driven by two independent sources: the label sequence
$I^{(1:\infty)}$, drawn i.i.d.\ uniformly from
$\Vcal$, and an array $\{\varepsilon_i^{(t)}: i\in\Vcal,\, t\ge 1\}$ of
mutually independent innovations with $\varepsilon_i^{(t)}\sim\mathcal
N(0,\sigma_i^2)$. The update at time $t$ consumes the entry
$\varepsilon_{I^{(t)}}^{(t)}$ of this array. Notice that this array is independent of the label sequence. The innovation $\varepsilon_{I^{(t)}}^{(t)}$ at time \(t\)  is not, since its variance is determined by the realized label. In particular, conditionally on the label sequence, the innovations consumed by the dynamics are independent Gaussians whose variances are fixed by the labels, and it is in this conditional form that the independence is used throughout our analysis.} For a deterministic horizon $N > 0$, the observed data are the pair {$(X^{(0:N)},I^{(1:N)})$}.  Given the observed data {$(X^{(0:N)},I^{(1:N)})$}, {we next associate with each subset of coordinates a \emph{clock}, recording the times at which the subset is updated, and a \emph{trace}, recording the states observed at those times.}

\begin{definition}[{$B$-clock}]\label{def:set-update-times}
Fix a nonempty set $B\subseteq \Vcal$. Set $t_B^{(0)}:=0$ and, for
$l\ge1$, define
\begin{equation}
    t_B^{(l)}
    :=
    \inf\{t>t_B^{(l-1)}: I^{(t)}\in B\}.
    \label{eq:update-times}
\end{equation}
{We call the ordered sequence $\mathcal{T}_B \coloneq \left(t^{(l)}_B\right)_{l \geq 1}$ the {${B}$-clock}: it ticks precisely when a coordinate in $B$ is refreshed, and we refer to its entries as the \emph{$B$-update times} or simply ticks of the \(B\)-clock.}
\end{definition}

\begin{definition}[{$B$-trace}]\label{def:set-subtrajectory}
Fix a nonempty set $B\subseteq \Vcal$. {The \emph{$B$-trace} is the trajectory read at the ticks of the $B$-clock,}
\begin{equation}
    {X^{(\mathcal{T}_B)}} := \left( X^{(t_B^{(r)})} \right)_{r \geq 1} = \left(X^{(t)}\right)_{t \in \mathcal{T}_B}.
    \label{eq:restricted-subtrajectory}
\end{equation}
\end{definition}

{Notice that the $B$-trace retains the full $p$-dimensional state; it is only the observation times that are restricted. We can write, for instance, $X_A^{(\mathcal{T}_B)}$ to denote the $A$-coordinates of the states observed along the $B$-clock. With a mild abuse of notation we write $\mathcal{T}_{i} := \mathcal{T}_{\{i\}}$ for the $i$-clock and $X^{(\mathcal{T}_i)}$ for the $i$-trace. Finally, since some coordinate is refreshed at every time step, the $\Vcal$-clock is the entire timeline, $\mathcal{T}_{\Vcal} = (1,2,\dots)$; we call it the \emph{global clock}, and the observed samples excluding the initialization are the first $N$ entries of the $\Vcal$-trace, namely $X^{(1:N)}$.}

\subsection{{Structure Learning Problem}}\label{sec:problem}
{We formalize the recovery problem as exact identification of the edge
set $\Ecal$ of an unknown GGM from a single Glauber trajectory.}

\paragraph{{Model class.}}
Fix integers $p \geq 3$, $d \in
\{1,\ldots,\lfloor(p-1)/2\rfloor\}$, and $\kappa \in (0,1]$; thus,
$p\geq 2d+1$. Let $\Mcal(p,d,\kappa)$ be the set of precision matrices
$\Theta \succ 0$ on $[p]$ whose graph
$\Gcal(\Theta)=([p],\Ecal(\Theta))$, defined by~\eqref{eq:ggm-edge},
has maximum degree at most $d$ and minimum normalized edge strength at
least $\kappa$. The restriction $p\geq 2d+1$ ensures that every
size-$d$ candidate admits a disjoint size-$d$ dueling set and that the
size-$2d$ sets used below exist.

\paragraph{{Observation model.}}
{For an unknown $\Theta \in \Mcal(p,d,\kappa)$, we observe a single
length-$N$ trajectory of random-scan Gaussian Glauber dynamics targeting
$\mathcal N_p(0,\Theta^{-1})$, started from a deterministic
initialization $X^{(0)}=x^{(0)}\in\Rbb^p$. The data are the state
sequence $X^{(0:N)}$ and the update labels $I^{(1:N)}$.}

\paragraph{{Estimator and recovery horizons.}}
Let $\mathfrak E_{p,d}$ be the collection of edge sets
$E\subseteq \binom{[p]}{2}$ whose graph on $[p]$ has maximum degree at
most $d$. A \emph{structure estimator} is a measurable map
$\Acal:\Rbb^{(N+1)\times p}\times [p]^N\to\mathfrak E_{p,d}$, with
output
\begin{align}
        \widehat\Ecal \coloneq
        \Acal\bigl(X^{(0:N)},I^{(1:N)}\bigr).
\end{align}
Write $\Pbb_{\Theta,x^{(0)}}$ for the law of the data
$(X^{(0:N)},I^{(1:N)})$ generated by the dynamics of
Section~\ref{sec:glauber} with target
$\mathcal N_p(0,\Theta^{-1})$ and deterministic initialization
$X^{(0)}=x^{(0)}$. For a fixed pair $(\Theta,x^{(0)})$, the
\emph{pointwise recovery horizon} of $\Acal$ at confidence
$\delta\in(0,1)$ is
\begin{equation}
    N_{\mathrm{pt}}^\star(\Theta,x^{(0)},\delta;\Acal)
    \coloneq
    \min\Bigl\{N\in\Nbb:
        \Pbb_{\Theta,x^{(0)}}\!
        \bigl(\widehat\Ecal\neq\Ecal(\Theta)\bigr)
        \leq \delta
    \Bigr\}.
    \label{eq:pointwise-recovery-horizon}
\end{equation}
For a class
$\mathfrak C\subseteq\Mcal(p,d,\kappa)\times\Rbb^p$ of precision
matrices and deterministic initializations, the corresponding
\emph{uniform sample complexity} is
\begin{equation}
    N_{\mathrm{unif}}^\star(\mathfrak C,\delta;\Acal)
    \coloneq
    \min\Bigl\{N\in\Nbb:
        \sup_{(\Theta,x^{(0)})\in\mathfrak C}
        \Pbb_{\Theta,x^{(0)}}\!
        \bigl(\widehat\Ecal\neq\Ecal(\Theta)\bigr)
        \leq \delta
    \Bigr\}.
    \label{eq:uniform-sample-complexity}
\end{equation}
We write
$\mathfrak C_{\mathrm{base}}(p,d,\kappa)
:=\Mcal(p,d,\kappa)\times\Rbb^p$ for the base class of all permitted
precision matrices and deterministic initializations. Theorem~\ref{thm:algoone-main}
bounds the pointwise recovery horizon of \algoone{}, and
Corollary~\ref{cor:algoone-uniform-envelope} gives a uniform guarantee
after imposing explicit bounds on $\rho_{2d}(\Theta)$ and the
initialization potential. Theorem~\ref{thm:algo2-finite-sample} gives a
uniform guarantee for \algotwo{} over $\mathfrak C_{\mathrm{base}}(p,d,\kappa)$.
We allow both estimators to depend on $p$, $d$, $\kappa$, and
$\delta$.

\section{A Meta Algorithm: \dunes}\label{sec:skeleton}

We now introduce the Dueling Neighborhood Search (\dunes{}) meta algorithm to recover the graph one node at a time. It is built around a \emph{local statistic}: a data-driven score $\ts \in \mathbb{R}$, indexed by a node $i$, a tested set $A \subseteq \Vcal\setminus\{i\}$, and a \emph{probe} $j \in A$, that measures the influence of the probe on node $i$ after accounting for the remaining coordinates of $A$. The neighborhood search proceeds via a sequence of duels. For the node under consideration, a \emph{candidate} neighborhood $C \subseteq \mathcal{V}\setminus \{i\}$ with $|C| = d$ is tested against every \emph{dueling set} $D \subseteq \mathcal{V} \setminus (C \cup \{i\})$ with $|D| = d$. Writing $A = C \cup D$ for the tested set, the candidate survives the duel if no \emph{challenger}, that is, no probe $j \in D$ drawn from the dueling side, registers a large score. The rationale is simple. If $C$ misses a true neighbor of $i$, then some dueling set contains the missed neighbor, and that challenger registers a large influence on $i$ and defeats the candidate. If instead $C \supseteq S_i$, then every challenger lies outside the neighborhood, its influence on $i$ is screened off by the candidate, and $C$ survives all of its duels. For this rationale to be sound, the statistic must \emph{separate} neighbors from non-neighbors whenever the tested set contains the true neighborhood (see Definition~\ref{def:generic-separation}). That is, one should be able to select a threshold $\tau$ such that, whenever $A = C \cup D \supseteq S_{i}$, for every $j \in A$:
\begin{align}
    \ts > \tau \text{ if } j \in S_{i} \qquad \text{ and } \qquad \ts \le \tau \text{ if } j \notin S_{i}.\label{eq:generic-separation-intuitive}
\end{align}

This dueling-sets formulation is inspired by the DICE (Degree-constrained Inverse Covariance Estimator) algorithm of~\citet{misra2020information}, which matches the information-theoretic lower bound for GGM structure learning from independent samples up to a factor of the maximum degree $d$. DICE proceeds in three phases: a first important phase estimates each conditional variance $\Theta_{ii}^{-1}$ by a cardinality-constrained least-squares problem; next, a support-testing phase thresholds estimated normalized edge strengths $\widehat\kappa_{ij}$, normalized by the first-phase variance estimates, at $\kappa/2$; and a final phase prunes the accepted candidate by thresholding the same statistic. The variance-estimation phase and the ensuing statistical machinery there are specific to i.i.d.\ samples and do not carry over to our Glauber setting.

Our procedure abstracts the combinatorial search and its recovery logic away from this specific statistic. We retain the candidate/dueling-set structure and the final phase that prunes the accepted candidate, but replace the estimated edge strength with a generic local statistic $\ts$. And, in Definition~\ref{def:generic-separation}, we isolate the one property, separation, that the search needs. As we shall see (Proposition~\ref{prop:generic-separation-recovery}), exact recovery is then a deterministic consequence. This abstraction lets us carry the search over to our setting: DICE certifies its statistic using empirical covariances of iid Gaussian samples, a luxury the Glauber trajectory does not afford. Our data form a single dependent trajectory in which coordinates refresh only at random update times and consecutive states differ in a single coordinate. All of the probabilistic work in the sequel therefore goes into establishing separation, over a finite horizon, for two statistics built directly from this dependent data.

We now specify the procedure under the standing model-class
restriction $p\geq 2d+1$, which ensures that all size-$d$ dueling sets
below exist. The procedure is nodewise, so we fix a node $i\in\Vcal$ throughout the nodewise routine below. The full algorithm applies this routine to each node and then aggregates (with a union) the resulting neighborhood estimates. Throughout this nodewise description, we abbreviate $\bar A \coloneq A \cup \{i\}$ for the tested set augmented with the node itself.

\textbf{Phase 1:}
{For a candidate set $C\subseteq\Vcal\setminus\{i\}$ with $|C|=d$, let}
\begin{align}
    \mathcal D(C) := \bigl\{ D\subseteq\Vcal\setminus(C\cup\{i\}): |D|=d \bigr\}
\end{align}
{be the collection of dueling sets against which $C$ is tested. A candidate $C$ is \emph{accepted} if}
\begin{equation}
    \max_{j\in D}\tsat{i}{C\cup D}{j} \le \tau \qquad \text{for every }D\in\mathcal D(C).
    \label{eq:generic-acceptable}
\end{equation}
{Let $\widehat C_i$ be any accepted candidate.}

\textbf{Phase 2:}
{Choose any $D\in\mathcal D(\widehat C_i)$ and set $A:=\widehat C_i\cup D$. The final nodewise estimate is obtained by thresholding the statistic once more on $\widehat C_i$:}
\begin{equation}
    {\widehat S_i := \bigl\{j\in\widehat C_i: \tsat{i}{A}{j}>\tau\bigr\}.}
    \label{eq:generic-cleanup}
\end{equation}
{After repeating this nodewise routine for every $i\in\Vcal$, aggregate the nodewise estimates by the OR rule,}
\begin{equation}
    \widehat \Ecal := \bigl\{\{i,j\}: j\in\widehat S_i \text{ or } i\in\widehat S_j\bigr\}.
    \label{eq:generic-Ehat}
\end{equation}

{Algorithm~\ref{alg:generic-search} summarizes the meta-procedure. Its only data-dependent input is the local statistic $\tssym$, which each of our two algorithms instantiates in the sequel. We allow the statistic to return $\undeftok$ (which as we will see below happens when there is insufficient data) to instantly terminate the procedure, in which case the search outputs the empty edge set.}

\begin{algorithm}[t]
\caption{{\dunes{} with local statistic $\tssym$}}
\label{alg:generic-search}
\begin{algorithmic}[1]
\Require {local statistic
$\tssym:\mathfrak T\to\Rbb\cup\{\undeftok\}$; threshold $\tau>0$}
\If{$\tsat{i}{A}{j}=\undeftok$ for some $(i,A,j)\in\mathfrak T$}
    \State \Return $\widehat\Ecal:=\emptyset$
    \Comment{insufficient data}
\EndIf
\For{$i\in\Vcal$}
    \For{$C\subseteq\Vcal\setminus\{i\}$ with $|C|=d$}
    \Comment{Phase 1: candidate search}
        \If{$\max_{j\in D}\tsat{i}{C\cup D}{j}\leq\tau$ for every
        $D\in\mathcal D(C)$}
            \State $\widehat C_i\gets C$; \textbf{break}
            \Comment{accept the first acceptable candidate}
        \EndIf
    \EndFor
    \If{no candidate was accepted}
        \State $\widehat S_i\gets\emptyset$; \textbf{continue}
    \EndIf
    \State choose any $D\in\mathcal D(\widehat C_i)$ and set
    $A\gets\widehat C_i\cup D$
    \Comment{Phase 2: cleanup}
    \State $\widehat S_i\gets
    \bigl\{j\in\widehat C_i:\tsat{i}{A}{j}>\tau\bigr\}$
\EndFor
\State \Return $\widehat\Ecal:=\bigl\{\{i,j\}: j\in\widehat S_i
\text{ or } i\in\widehat S_j\bigr\}$
\Comment{OR rule}
\end{algorithmic}
\end{algorithm}

The deterministic property needed of the test statistic $\ts$ for accurate graph learning is uniform separation over the triples queried by \dunes{}. We write the set of queried triples as
\begin{equation}
    \mathfrak T := \bigl\{ (i,A,j): i\in\Vcal,  A\subseteq\Vcal\setminus\{i\},  |A|=2d,  j\in A \bigr\}.  \label{eq:tested-triples}
\end{equation}
The token $\undeftok$ is only an availability flag: it records that a data-dependent statistic cannot be evaluated on some queried triple. We say $\tssym$ is \emph{available} on $\mathfrak T$ if $\tsat{i}{A}{j}\in\Rbb$ for every $(i,A,j)\in\mathfrak T$. We can now state the separation property on the event that the statistic is available.

\begin{definition}[Separation property]\label{def:generic-separation}
When $\tssym$ is available on $\mathfrak T$, we say it satisfies the separation property if there exists a uniform threshold $\tau$ such that, for every $(i,A,j)\in\mathfrak T$ with $S_i\subseteq A$,
\begin{align}
\tsat{i}{A}{j}\le \tau \quad \text{if } j\notin S_i,
    \qquad \text{and}\qquad
    \tsat{i}{A}{j}> \tau \quad \text{if } j\in S_i.
    \label{eq:generic-separation}
\end{align}
\end{definition}

The following proposition separates availability from correctness: once the statistic is available, the separation property alone drives exact recovery. This result follows immediately from \cite{misra2020information}, however, we provide a proof in Appendix~\ref{app:generic-separation-recovery-proof} for the sake of completeness. 

\begin{proposition}[Deterministic recovery from separation]
\label{prop:generic-separation-recovery}
Assume $p\geq 2d+1$. Suppose $\tssym$ is available on
$\mathfrak T$ and satisfies the separation property with threshold
$\tau$. Then Algorithm~\ref{alg:generic-search} run with statistic
$\tssym$ and threshold $\tau$ outputs $\widehat\Ecal=\Ecal$.
\end{proposition}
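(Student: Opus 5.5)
Since $\tssym$ is available, the early-return branch of Algorithm~\ref{alg:generic-search} is never taken. I will fix a node $i$ and show $\widehat S_i=S_i$. The OR rule~\eqref{eq:generic-Ehat} then gives $\widehat\Ecal=\Ecal$, because $\{i,j\}\in\Ecal$ iff $j\in S_i$ iff $i\in S_j$. The nodewise argument has three steps: (a) some candidate is accepted; (b) every accepted candidate contains $S_i$; (c) the cleanup returns exactly $S_i$.

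\textbf{Step (a): existence.} Since $d_i\le d$ and $p-1\ge 2d$, I can pad $S_i$ with arbitrary non-neighbors to obtain $C^\star\supseteq S_i$ with $|C^\star|=d$. Take any $D\in\mathcal D(C^\star)$ and set $A=C^\star\cup D$; then $|A|=2d$ and $S_i\subseteq A$, so $(i,A,j)\in\mathfrak T$. Every $j\in D$ lies outside $C^\star\supseteq S_i$, so $j\notin S_i$, and separation~\eqref{eq:generic-separation} gives $\tsat{i}{A}{j}\le\tau$. Hence $C^\star$ satisfies~\eqref{eq:generic-acceptable}. The loop therefore accepts some candidate $\widehat C_i$, possibly one earlier in the enumeration than $C^\star$.

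\textbf{Step (b): every accepted candidate contains $S_i$.} This is the main point, since separation says nothing about tested sets that miss a neighbor. Suppose $C$ is accepted but $M:=S_i\setminus C\neq\emptyset$. I will build a dueling set $D\supseteq M$ with $|D|=d$ and $D\cap(C\cup\{i\})=\emptyset$. This is possible because $|M|\le d$ and $|\Vcal\setminus(C\cup\{i\})|=p-1-d\ge d$.

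Then $A=C\cup D\supseteq S_i$, so separation applies to $A$. Each $j\in M\subseteq D$ is a neighbor, so $\tsat{i}{A}{j}>\tau$, and $C$ fails~\eqref{eq:generic-acceptable}, a contradiction. The key is that separation is only invoked on tested sets containing $S_i$, and the dueling set is chosen so that the missed neighbors make the tested set contain $S_i$.

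\textbf{Step (c): cleanup.} In Phase~2, any $D\in\mathcal D(\widehat C_i)$ gives $A=\widehat C_i\cup D\supseteq\widehat C_i\supseteq S_i$ by Step (b). Separation then gives, for $j\in\widehat C_i$, $\tsat{i}{A}{j}>\tau$ iff $j\in S_i$. Hence $\widehat S_i=\widehat C_i\cap S_i=S_i$.

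The only points needing care are the cardinality checks: the padding in Step (a) and the dueling set in Step (b) both rely on $p\ge 2d+1$. Everything else is a direct application of Definition~\ref{def:generic-separation}.
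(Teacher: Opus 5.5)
Your proof is correct and follows the paper's argument essentially step for step. You pad $S_i$ to a candidate $C^\star$ that is accepted, and you eliminate any candidate missing a neighbor by choosing a dueling set that contains the missed neighbors. You then note that cleanup on a tested set containing $S_i$ is exact, and the OR rule finishes the proof. Your explicit check that $|\Vcal\setminus(C\cup\{i\})|=p-1-d\ge d$ in Step (b) is a welcome detail that the paper leaves implicit.
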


In the next two sections we describe two instantiations of \dunes{}: \algoone{} (Section~\ref{sec:algoone}) and \algotwo{} (Section~\ref{sec:algotwo}).

\section{\algoone{}}\label{sec:algoone}

With the \dunes{} template and its separation requirement in place, what remains is to construct a local statistic that meets them. We instantiate the \dunes{} template with a local statistic based on linear regression using conditional updates of the Glauber dynamics. We call this instantiation Least-Squares \dunes{} (\algoone{}).

The DICE statistic we are replacing is a useful guide for the one we must build. Recall from Section~\ref{sec:skeleton} that DICE tests candidates with an empirical estimate of the normalized edge strength $\kappa_{ij}$ in~\eqref{eq:kappa-ij}: fitted partial-regression coefficients, normalized using the conditional-variance estimates from its first phase. Two features of this score do the statistical work. It is invariant to coordinate scaling, and it separates zero from nonzero partial correlations at the signal scale. Any statistic that replaces it should preserve both.

Indeed, what we cannot preserve is the way DICE computes and certifies its score. The empirical covariance of the raw trajectory concentrates around the target covariance only after the chain has mixed, and such a reduction would reintroduce spectral-gap, condition-number, or Dobrushin-type assumptions~\cite{TRD25}. Instead, \algoone{} uses the structure in the update sequence itself. For each node $i$, we read the trajectory only at the $i$-clock, that is, at the times at which node $i$ itself is refreshed (recall Definitions~\ref{def:set-update-times} and~\ref{def:set-subtrajectory}). Along {the corresponding} $i$-trace, every observed response is generated by the exact conditional Gaussian regression for coordinate $i$, with fresh Gaussian noise at each $i$-update. The covariate rows are dependent, but each response noise is a fresh Gaussian innovation from the corresponding $i$-update.

This suggests a strategy to build the statistic. We can treat the first $m$ entries of the $i$-trace as a local regression problem, and for each candidate coordinate, we estimate its regression coefficient and scale it appropriately (i.e., by the standard error determined by the realized design matrix $Z$). 

The main work is then to show that, despite the dependence among the covariate rows, this regression score separates neighbors from non-neighbors once the $i$-trace contains enough rows.
We first record an instance-dependent quantity that governs the budget of \algoone{}. It measures the covariance $\Sigma$ only through its principal submatrices of size $2d$, rescaled by the corresponding conditional precisions. In this sense, it is a local quantity, in contrast to a global bound on the condition number of $\Theta$. 

\begin{definition}[Degree-constrained spectral radius]\label{def:rho2d}
For every $A\subseteq[p]$, write $D_A:=\diag{(\Theta_{kk}:k\in A)}$. The \emph{degree-constrained spectral radius} of $\Theta$ is
\begin{equation}
\rho_{2d}(\Theta)
    :=
    \max_{\substack{A\subseteq[p]\\ |A|=2d}}
    \lambda_{\max}\!\left(D_A^{1/2}\Sigma_{AA}D_A^{1/2}\right).
    \label{eq:rho2d-definition}
\end{equation}
\end{definition}

\begin{remark}[Interpretation]\label{rmk:rho2d-interpretation}
Let $\nu_{\max}(\Theta):=\max_{j\in[p]}\Var(X_j)/\Var(X_j\mid X_{\Vcal\setminus\{j\}})=\max_{j\in[p]}\Theta_{jj}\Sigma_{jj}$ be the worst nodewise marginal-to-conditional variance ratio. Since the largest eigenvalue of a positive semidefinite matrix is at least each of its diagonal entries and at most its trace,
\begin{equation}
\nu_{\max}(\Theta)
    \;\leq\;
    \rho_{2d}(\Theta)
    \;\leq\;
    2d\,\nu_{\max}(\Theta).
    \label{eq:rho2d-numax-sandwich}
\end{equation}
Up to the sparsity factor $2d$, the degree-constrained spectral radius is thus the largest factor by which conditioning on the rest of the graph reduces the variance of a single node.
\end{remark}
Let $\bar\rho\geq1$ be an upper bound on the degree-constrained spectral radius and $\bar E\geq0$ on the initialization potential. We allow these bounds to be arbitrary functions of $(p,d,\kappa)$. Define the class of precision matrices and deterministic initializations \begin{equation}
   \mathfrak C(p,d,\kappa;\bar\rho,\bar E)
    :=
    \left\{(\Theta,x^{(0)})\in\mathfrak C_{\mathrm{base}}(p,d,\kappa):
    \rho_{2d}(\Theta)\leq\bar\rho,\
    \norm{x^{(0)}}_\Theta^2\leq\bar E
    \right\}.
    \label{eq:algoone-envelope-class}
\end{equation}

We will state the uniform sample complexity defined in~\eqref{eq:uniform-sample-complexity} in terms of this restricted class.  The pointwise guarantee in Theorem~\ref{thm:algoone-main} depends on the degree-constrained spectral radius only through its logarithm, and places no upper bound on it. The $\kappa^{-2}$ dependence is therefore unchanged for every fixed instance. A guarantee obtained by first waiting for mixing, in contrast, must account for a mixing horizon that is valid uniformly over its permitted initializations. Since $\rho_{2d}(\Theta)\leq \lambda_{\min}(D^{-1/2}\Theta D^{-1/2})^{-1}$, a large $\rho_{2d}(\Theta)$ forces the normalized precision matrix toward singularity. The next proposition identifies a bounded-potential
initialization for which this degeneracy slows the dynamics.

\begin{proposition}[Large $\rho_{2d}$ forces slow mixing]
\label{prop:large-rho-slow-relaxation}
There are universal constants $c>0$ and $E_\star<\infty$ such
that, for every $\Theta\in\Mcal(p,d,\kappa)$ and every potential level
$E_0\geq E_\star$, at least $c\,p\,\rho_{2d}(\Theta)$ updates
are needed before the random-scan dynamics is within total variation
distance $1/4$ of $\mathcal N(0,\Theta^{-1})$ from every
initialization $x$ with $\norm{x}_\Theta^2\leq E_0$; that is, the
restricted mixing time defined in~\eqref{eq:energy-ball-mixing-time}
satisfies
$t_{\mathrm{mix}}^{(E_0)}(1/4)\geq c\,p\,\rho_{2d}(\Theta)$.
\end{proposition}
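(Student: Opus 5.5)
The plan is to exhibit one bounded-potential initialization and one linear observable that relaxes on the time scale $p/\mu$. Here $\mu$ is the smallest eigenvalue of the normalized precision $M:=D^{-1/2}\Theta D^{-1/2}$, with $D:=\diag(\Theta_{kk}:k\in[p])$. Since $\rho_{2d}(\Theta)\le\lambda_{\min}(M)^{-1}=1/\mu$, a lower bound of order $p/\mu$ gives the claim. A preliminary reduction is that $t^{(E_0)}_{\mathrm{mix}}(1/4)$ is nondecreasing in $E_0$, because it is a supremum over a larger ball. So it suffices to prove the bound at the single level $E_0=E_\star$, with $E_\star$ a universal constant chosen at the end (I expect $E_\star=64$ to work).

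\textbf{Step 1: mean and second-moment dynamics.} By~\eqref{eq:glauber-additive}, $\Ebb[X^{(t)}\mid X^{(t-1)}]=(I-\tfrac1p D^{-1}\Theta)X^{(t-1)}$. Let $u$ be a unit bottom eigenvector of $M$ and set $f(x):=\sqrt{\mu}\,u^\top D^{1/2}x$. Under the stationary law $\mathcal N(0,\Theta^{-1})$ we have $\Var f=\mu\,u^\top M^{-1}u=1$, so $f$ is standard normal there. Start from $x:=\sqrt{E_0/\mu}\,D^{-1/2}u$, for which $\norm{x}_\Theta^2=E_0$. The linear mean recursion then gives
\[
\Ebb f(X^{(t)})=\sqrt{E_0}\,(1-\mu/p)^t .
\]

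\textbf{Step 2: conditional-on-labels decomposition.} Conditionally on $I^{(1:t)}$, we can write $X^{(t)}=L x+\eta$, where $L$ is a product of single-site conditional-mean maps and $\eta$ is centered Gaussian with covariance $Q$. Two facts control this decomposition.
\begin{itemize}[leftmargin=*]
\item Bound on $Q$: if $x$ were drawn from $\mathcal N(0,\Sigma)$, stationarity would give $L\Sigma L^\top+Q=\Sigma$, hence $Q\preceq\Sigma$ and $\Var(f(\eta))\le 1$.
\item Bound on $L$: each site update's mean map is the $\Theta$-orthogonal projection that minimizes $\norm{\cdot}_\Theta$ in coordinate $i$, so $L$ is a $\Theta$-norm contraction. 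Cauchy--Schwarz with $u^\top M^{-1}u=1/\mu$ gives $f(y)^2\le\norm{y}_\Theta^2$, hence $m_L:=f(Lx)$ satisfies $m_L^2\le\norm{x}_\Theta^2=E_0$.
\end{itemize}
Combining these, $\Var(m_L)\le E_0-(\Ebb m_L)^2$.

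\textbf{Step 3: Chebyshev versus the Gaussian tail.} Take $t\le\epsilon p/\mu$, so that $\Ebb m_L\ge(1-\epsilon)\sqrt{E_0}$ and $\Var(m_L)\le 2\epsilon E_0$. Then
\[
\Var f(X^{(t)})\le 1+2\epsilon E_0 ,
\]
and Chebyshev gives
\[
\Pbb\bigl(f(X^{(t)})<\tfrac12\sqrt{E_0}\bigr)\le\frac{1+2\epsilon E_0}{\bigl((\tfrac12-\epsilon)\sqrt{E_0}\bigr)^2}.
\]
With $E_0=64$ and $\epsilon=1/64$ the right-hand side is below $0.3$. Under the stationary law, the Gaussian tail bound gives $\Pbb(f\ge 4)\le e^{-8}$. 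The event $\{f\ge 4\}$ therefore witnesses a total variation distance above $1/4$ at every $t\le p/(64\mu)$. Hence $t_{\mathrm{mix}}^{(E_\star)}(1/4)\ge c\,p/\mu\ge c\,p\,\rho_{2d}(\Theta)$ with $c=1/64$.

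The main obstacle is Step 2: the law at time $t$ is a Gaussian mixture over label sequences, so Gaussian comparison cannot be applied directly. The plan handles the mixture fluctuation of the mean via the $\Theta$-contraction of the update maps, which must be checked carefully. The bound $Q\preceq\Sigma$, obtained through the stationary coupling, controls the within-component spread.
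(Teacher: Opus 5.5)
Your proof is correct and follows essentially the paper's route. You use the same slow linear mode $v_\star^\top D^{1/2}x$ started from the potential-$E_0$ point aligned with it, the same schedule-conditional decomposition with $V_\sigma V_\sigma^\top\preceq\Sigma$ from per-site invariance, and the same $\Theta$-contraction envelope $\abs{s_\sigma}\le f_\star(x_\star)$. The differences are cosmetic: you feed the envelope into the law of total variance, apply one unconditional Chebyshev bound and a Gaussian tail under $\pi_\Theta$, and reduce to $E_0=64$ by monotonicity of $t_{\mathrm{mix}}^{(E_0)}$ in $E_0$, whereas the paper shows favorable schedules have probability at least $0.98$ and applies Chebyshev conditionally on each one.
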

The formal definition of the restricted mixing time, and the
proof, appear in Appendix~\ref{app:large-rho-slow-mixing}.
Proposition~\ref{prop:large-rho-slow-relaxation} calibrates the
comparison with mixing-based guarantees~\citep{TRD25, SWMM26}: their
horizons grow with the mixing time, which is at least of order
$p\,\rho_{2d}(\Theta)$ from bounded-potential initializations, and is
therefore super-polynomial in $p$ whenever $\rho_{2d}(\Theta)$ is
super-polynomial.\footnote{Along a sequence $(p,d(p),\Theta_p)$, this
means $\rho_{2d(p)}(\Theta_p)=p^{\omega(1)}$, equivalently
$\rho_{2d(p)}(\Theta_p)/p^C\to\infty$ for every fixed $C>0$.}
Theorem~\ref{thm:algoone-main} pays only $\log\rho_{2d}(\Theta)$.
With these properties in place, we turn to the regression identity that makes the construction possible.
\subsection{Regression score and test statistic}\label{sec:algoone-regression-score}
The construction rests on a single structural fact: along the $i$-clock, each observed update of node $i$ obeys the exact conditional regression~\eqref{eq:full-cond} on the frozen coordinates, with fresh Gaussian noise. The next lemma records this fact.
\begin{lemma}[Regression form on the $i$-trace]\label{lem:algoone-subtrajectory-regression}
Fix a node $i$ and a set $A\subseteq \Vcal\setminus\{i\}$ and consider samples from the $i$-trace $X^{(\mathcal{T}_i)} = (X^{(t^{(r)}_{i})})_{r \geq 1}$.  If $A\supseteq S_i$, then for every $r\geq 1$,
\begin{equation}
    Y^{(r)}
    =
    \bigl(Z_A^{(r)}\bigr)^\top\beta_{i,A}
    +
    \varepsilon^{(r)},
    \qquad
    \varepsilon^{(r)}\stackrel{\mathrm{iid}}{\sim}
    \mathcal N(0,\sigma_i^2)
    \label{eq:algoone-subtrajectory-regression}
\end{equation}
where $Y^{(r)} := X_i^{(t_i^{(r)})}$ is the $r$th response variable, and $Z_A^{(r)} := X_A^{(t_i^{(r)})} = X_A^{(t_i^{(r)}-1)}$ is the $r$th covariate vector and $\epsilon^{(r)} \coloneq \epsilon^{(t^{(r)})}_{i}$.
\end{lemma}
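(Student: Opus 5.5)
The plan is to read the identity directly off the update rule at the ticks of the $i$-clock, and then obtain the distributional claim by conditioning on the label sequence. The identity is deterministic given the labels. The only genuinely probabilistic content is that the consumed innovations $\varepsilon^{(r)}$ are i.i.d.\ $\mathcal N(0,\sigma_i^2)$.

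\textbf{Identity.} Fix $r\ge1$ and set $t:=t_i^{(r)}$. This is a stopping time for the label sequence, and $I^{(t)}=i$. By \eqref{eq:glauber-additive},
\[
X_i^{(t)}=\sum_{j\in S_i}\beta_{ij}X_j^{(t-1)}+\varepsilon_i^{(t)} .
\]
Since $A\subseteq\Vcal\setminus\{i\}$, \eqref{eq:glauber-frozen} gives $X_A^{(t)}=X_A^{(t-1)}$, which justifies the two expressions for $Z_A^{(r)}$. Because $S_i\subseteq A$ and $\beta_{ij}=0$ for $j\in A\setminus S_i$ by \eqref{eq:ggm-edge}, the sum equals $(X_A^{(t-1)})^\top\beta_{i,A}=(Z_A^{(r)})^\top\beta_{i,A}$. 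This yields \eqref{eq:algoone-subtrajectory-regression} with $\varepsilon^{(r)}=\varepsilon_i^{(t_i^{(r)})}$.

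\textbf{Distribution of the noise.} Here I use the structure of the randomness described in Section~\ref{sec:glauber}. The array $\{\varepsilon_k^{(s)}\}$ is independent of $I^{(1:\infty)}$, and its entries are mutually independent with $\varepsilon_i^{(s)}\sim\mathcal N(0,\sigma_i^2)$. Since the chain is random-scan with uniform labels, $t_i^{(r)}<\infty$ almost surely for all $r$. Now condition on $I^{(1:\infty)}$. The times $t_i^{(1)}<t_i^{(2)}<\cdots$ are then deterministic and distinct. The variables $\varepsilon^{(r)}=\varepsilon_i^{(t_i^{(r)})}$ are therefore distinct entries of row $i$ of the array, so conditionally they are i.i.d.\ $\mathcal N(0,\sigma_i^2)$. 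This conditional law does not depend on the labels, so integrating out $I^{(1:\infty)}$ shows that $(\varepsilon^{(r)})_{r\ge1}$ is unconditionally i.i.d.\ $\mathcal N(0,\sigma_i^2)$ and independent of the label sequence.

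\textbf{Main obstacle.} The step needing care is exactly this last one: the innovation is selected at a random index, so its law must be obtained through the conditioning on the labels above rather than asserted directly. It is also worth recording the fact the later analysis actually uses. Let $\mathscr F_r$ be the sigma algebra generated by $X^{(0)}$, the labels, and the innovations consumed strictly before time $t_i^{(r)}$. Then $Z_A^{(r)}$ is $\mathscr F_r$-measurable, and $\varepsilon^{(r)}$ is independent of $\mathscr F_r$. Indeed, $X^{(t_i^{(r)}-1)}$ is a function of $x^{(0)}$, the labels, and innovations at times $s<t_i^{(r)}$, and all of these are independent of the array entry at time $t_i^{(r)}$ given the labels. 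The covariates $Z_A^{(r)}$ themselves depend on earlier $\varepsilon^{(r')}$ and are not independent of the noise sequence. The lemma does not claim otherwise, since it asserts only the identity together with the i.i.d.\ law of the noise.
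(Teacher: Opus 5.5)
Your proposal is correct and follows the paper's proof. The identity is read off the Glauber update at $t=t_i^{(r)}$ using $X_A^{(t)}=X_A^{(t-1)}$ and $\beta_{ij}=0$ off $S_i$. The i.i.d.\ law of $\varepsilon^{(r)}$ follows because the innovation array is independent of the labels and the $i$-clock ticks are functions of the labels alone; your explicit conditioning on $I^{(1:\infty)}$, together with the almost-sure finiteness of the ticks, spells out what the paper states in one line.
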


We prove this in Appendix~\ref{app:algoone-subtrajectory-regression-proof}. The $i$-trace therefore satisfies a familiar regression equation at each update of $i$. Importantly, even though its covariate rows remain dependent, the corresponding response noise is injected {\em only after\/} that row has been determined. We will use this crucial structure in our construction and analysis of the test statistic. In the sequel, we suppose that we only work with the first $m$ such updates. Sections~\ref{sec:design-control} and~\ref{sec:algoone-separation} will reveal how large $m$ needs to be for our separation condition to hold (with high probability), while Section~\ref{sec:algoone-finite-horizon} later shows that these number of rows occur by a prescribed horizon of the full dynamics (with high probability).

Stacking $m$ rows of \eqref{eq:algoone-subtrajectory-regression}, we can write
    \begin{equation}
        \mathbf{Y} = \mathbf{Z}\beta_{i,A} + \boldsymbol{\varepsilon},
        \label{eq:algoone-matrix-regression}
    \end{equation}
where $\mathbf{Y}:=(Y^{(1)},\ldots,Y^{(m)})^\top$, $\boldsymbol{\varepsilon}:=(\varepsilon^{(1)},\ldots,\varepsilon^{(m)})^\top$, and $\mathbf{Z}\in\Rbb^{m\times |A|}$ has $r$th row $\bigl(Z_A^{(r)}\bigr)^\top$. Therefore, when $\mathbf{Z}^\top\mathbf{Z}$ is invertible, the regression coefficients $\beta$ can be estimated using ordinary least squares (OLS) 
\begin{align}
\widehat\beta_{i,A} :=(\mathbf{Z}^\top \mathbf{Z})^{-1}(\mathbf{Z}^\top \mathbf{Y}) \quad \text{ and } \quad \widehat\beta_{i,A}-\beta_{i,A} := (\mathbf{Z}^\top \mathbf{Z})^{-1}(\mathbf{Z}^\top \boldsymbol{\varepsilon}).
\end{align}
To keep track of the two random quantities that drive the estimation error, define the empirical Gram matrix and the noise-design cross term by
\begin{align}
    V_m &:= \frac{1}{m}\mathbf{Z}^\top\mathbf{Z}, \qquad S_m :=
    \frac{1}{m}\mathbf{Z}^\top\boldsymbol{\varepsilon}.
    \nonumber
\end{align}
The OLS error can then be written as
\begin{align}
    \widehat{\beta}_{i,A} - \beta_{i,A} = {V_m}^{-1}S_m.
\end{align}
In particular, its error in the weighted 2-norm $\norm{\cdot}_{V_m}$ (or Mahalanobis norm, defined in Section~\ref{sec:setup}) is
\begin{equation}
    \norm{\widehat\beta_{i,A}-\beta_{i,A}}_{V_{m}}^{2} = S_m^\top V_m^{-1}S_m.
    \label{eq:algoone-ols-error}
\end{equation}

This identity isolates the two random objects that our analysis must control: the empirical Gram matrix $V_m$ and the noise--design cross term $S_m$. It also suggests the form our test statistic must assume. The fitted regression vector $\widehat{\beta}$ provides one coefficient $\widehat\beta_{ij}$ for each candidate $j\in A$. Our question is whether an observed coefficient is evidence of a nonzero conditional effect, or whether it is just noise. The answer of course depends on how well the realized covariate rows $\mathbf{Z}$ determine that particular coefficient. To see this, recall the behavior of least squares with a fixed design: conditionally on $\mathbf{Z}$, the coefficient $\widehat\beta_{ij}$ has standard deviation $\sigma_i\sqrt{((\mathbf{Z}^\top\mathbf{Z})^{-1})_{jj}}=\frac{\sigma_i}{\sqrt{m}}\sqrt{(V_m^{-1})_{jj}}$, and $((\mathbf{Z}^\top\mathbf{Z})^{-1})_{jj}$ is exactly the inverse of the squared length of the component of the $j$th column of $\mathbf{Z}$ orthogonal to the span of the remaining columns (see, e.g.,~\citep{Horn_Johnson_2012_Matrix, seber2003linear}). Thus, when the covariate for $j$ carries little variation beyond the other covariates in $A$, this orthogonal component is small and the fitted coefficient fluctuates more. The factor $\sqrt{(V_m^{-1})_{jj}}$ therefore records the design-dependent part of the fluctuation, while the noise scale $\sigma_i$ sets the remainder. Since $\sigma_i$ is unknown, we estimate it from the residual variance of the same regression.
\begin{definition}[Conditional variance estimator]
\label{def:algoone-conditional-variance-estimator}
    Define
    \begin{equation}
        \widehat\sigma_{i,A}^{2} := \frac{1}{m} \left\| \mathbf{Y}-\mathbf{Z}\widehat\beta_{i,A} \right\|_{2}^{2}
        = \frac{1}{m} \sum_{r=1}^{m} \left( Y^{(r)}-\bigl(Z_A^{(r)}\bigr)^\top\widehat\beta_{i,A} \right)^2.
        \label{eq:algoone-conditional-variance-estimator}
    \end{equation}
    Equivalently, the associated diagonal-precision estimator is $\widehat\Theta_{ii,A} := \widehat\sigma_{i,A}^{-2}$.
\end{definition}

Combining the realized-design factor with this residual estimate gives the local score used by \algoone{}.

\begin{definition}[\algoone{} test statistic]
\label{def:algoone-plugin-test-statistic}
 Fix a node $i$, a tested set $A\subseteq\Vcal\setminus\{i\}$, and a candidate coordinate $j\in A$. On the event $V_m\succ0$ and $\widehat\sigma_{i,A}>0$, define
    \begin{equation}
        \widehat{\tssym}_{i,A}(j) := \frac{\lvert\widehat\beta_{ij}\rvert} {\widehat\sigma_{i,A}\sqrt{(V_m^{-1})_{jj}}}.
        \label{eq:algoone-plugin-test-statistic}
    \end{equation}
If $V_m$ is singular or $\widehat\sigma_{i,A}=0$, we set $\widehat{\tssym}_{i,A}(j):=+\infty$.
\end{definition}

\subsection{Comparison with a Partial Oracle}
\label{sec:algoone-partial-oracle-estimator}

For the analysis, it is helpful to first replace the residual variance by the true conditional variance. Indeed, the resulting ``partial oracle'' score is not computed by the algorithm. We simply use it as an intermediate quantity that isolates coefficient-estimation error from the additional error caused by estimating $\sigma_i^2$.
\begin{definition}[Partial oracle estimates]
\label{def:algoone-partial-oracle-estimates}
    Fix a node $i$, a tested set $A\subseteq\Vcal\setminus\{i\}$, and $j\in A$.  On the event $V_m\succ0$, define
    \begin{equation}
        \widehat{\Psi}^{\ast}_{i,A}(j)
        :=
        \frac{\lvert\widehat\beta_{ij}\rvert}
        {\sigma_i\sqrt{(V_m^{-1})_{jj}}},
        \qquad
        \Psi^{\ast}_{i,A}(j)
        :=
        \frac{\lvert\beta_{ij}\rvert}
        {\sigma_i\sqrt{(V_m^{-1})_{jj}}}.
        \label{eq:algoone-partial-oracle-estimates}
    \end{equation}
\end{definition}

To show that our test statistic is effective without unnecessary condition number assumptions, we must directly compare it with the normalized edge strength $\kappa_{ij}$. We therefore work in the following normalized covariate coordinates for analytical convenience.
\begin{equation}
D_A:=\diag{(\Theta_{kk}:k\in A)},\qquad
    \widetilde Z^{(r)}:=D_A^{1/2}Z_A^{(r)},\qquad
    \widetilde V_m:=\frac{1}{m}\sum_{r=1}^m
    \widetilde Z^{(r)}{\bigl(\widetilde Z^{(r)}\bigr)}^\top
    =D_A^{1/2}V_mD_A^{1/2}.
    \label{eq:algoone-normalized-design}
\end{equation}
This change of coordinates is used only in the proof, and our test statistic is still defined in terms of the original empirical Gram matrix $V_m$. A lower bound on $\widetilde V_m$ will later turn the oracle score into a lower bound on $\kappa_{ij}$, while an upper bound will control the self-normalized error term.

We next control the error in this oracle comparison. The following bound is the quantity that will enter the separation argument. {We postpone its proof to Appendix~\ref{app:algoone-partial-oracle-error-proof}{, and only sketch the idea here. When the OLS error identity~\eqref{eq:algoone-ols-error} is rewritten in the normalized coordinates of~\eqref{eq:algoone-normalized-design}, the error takes the form of a noise term normalized by the rescaled empirical Gram matrix $\widetilde V_m$, a matrix built from the same random rows as the noise term. Quantities of this self-normalized form can be controlled by the martingale inequality of~\citet[Theorem~1]{abbasi2011improved} even though the covariate rows are dependent. The two-sided spectral condition appearing in the lemma then converts this control into a bound on each coordinate.}}
\begin{lemma}
\label{lem:algoone-partial-oracle-error}
Fix a tested pair $(i,A)$ and deterministic constants $0<\lambda_-\leq\lambda_+$. For every $\delta\in(0,1)$,
\begin{equation}
\Pbb\!\left(
    \bigl\{\lambda_- I_A\preceq\widetilde V_m\preceq\lambda_+ I_A\bigr\}
    \cap
    \left\{
    \max_{j\in A}
    \left| \widehat{\Psi}^{\ast}_{i,A}(j)- \Psi^{\ast}_{i,A}(j) \right|
    >
    2\sqrt{\frac{1}{m}
    \left(
        d\log\frac{2\lambda_+}{\lambda_-}
        +\log\frac{1}{\delta}
    \right)}
    \right\}
    \right)
    \leq\delta .
    \label{eq:algoone-partial-oracle-score-error}
\end{equation}
\end{lemma}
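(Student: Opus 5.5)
The plan is to reduce the coordinatewise score error to a single self-normalized quadratic form and then control that form with the vector martingale inequality of~\citet[Theorem~1]{abbasi2011improved}.

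\textbf{Step 1: reduce to one quadratic form.} On $\{V_m\succ0\}$ the reverse triangle inequality gives
\[
\bigl|\widehat\Psi^\ast_{i,A}(j)-\Psi^\ast_{i,A}(j)\bigr|\le \frac{|\widehat\beta_{ij}-\beta_{ij}|}{\sigma_i\sqrt{(V_m^{-1})_{jj}}}.
\]
By the OLS error identity, $\widehat\beta_{ij}-\beta_{ij}=e_j^\top V_m^{-1}S_m$. Cauchy--Schwarz in the $V_m^{-1}$ inner product then yields
\[
|e_j^\top V_m^{-1}S_m|\le\sqrt{e_j^\top V_m^{-1}e_j}\,\sqrt{S_m^\top V_m^{-1}S_m}.
\]
Hence, simultaneously for all $j\in A$,
\[
\bigl|\widehat\Psi^\ast_{i,A}(j)-\Psi^\ast_{i,A}(j)\bigr|\le \sigma_i^{-1}\sqrt{S_m^\top V_m^{-1}S_m}.
\]
This bound holds deterministically, so no union bound over $j$ is needed. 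The form $S_m^\top V_m^{-1}S_m=\frac1m\,\boldsymbol\varepsilon^\top\mathbf Z(\mathbf Z^\top\mathbf Z)^{-1}\mathbf Z^\top\boldsymbol\varepsilon$ is invariant under the rescaling $\mathbf Z\mapsto\mathbf Z D_A^{1/2}$. We may therefore write it as $\frac1m\|\widetilde M_m\|^2_{(m\widetilde V_m)^{-1}}$ with $\widetilde M_m:=\sum_{r\le m}\widetilde Z^{(r)}\varepsilon^{(r)}$. It therefore suffices to show that, with probability at least $1-\delta$ on the spectral event,
\[
\|\widetilde M_m\|^2_{(m\widetilde V_m)^{-1}}\le 4\sigma_i^2\Bigl(d\log\tfrac{2\lambda_+}{\lambda_-}+\log\tfrac1\delta\Bigr).
\]

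\textbf{Step 2: martingale structure (main obstacle).} I expect the main obstacle to be building a filtration under which $\widetilde Z^{(r)}$ is predictable and $\varepsilon^{(r)}$ is conditionally $\sigma_i$-sub-Gaussian. The $i$-clock times are random, and the rows depend on the whole past. First, I would work conditionally on the entire label sequence $I^{(1:\infty)}$. By the discussion in Section~\ref{sec:glauber}, the consumed innovations are then independent Gaussians with label-determined variances, and the clock $t_i^{(r)}$ becomes deterministic. Next, define
\[
\mathscr F_r:=\sigma(I^{(1:\infty)})\vee\sigma\bigl(X^{(0:t_i^{(r+1)}-1)}\bigr).
\]
Then $\widetilde Z^{(r+1)}=D_A^{1/2}X_A^{(t_i^{(r+1)}-1)}$ is $\mathscr F_r$-measurable, and $\varepsilon^{(r)}$ is $\mathscr F_r$-measurable. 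Conditionally on $\mathscr F_{r-1}$, the innovation $\varepsilon^{(r)}=\varepsilon_i^{(t_i^{(r)})}$ is $\mathcal N(0,\sigma_i^2)$, because it is a fresh array entry not consumed before time $t_i^{(r)}$. This is the content of Lemma~\ref{lem:algoone-subtrajectory-regression}, which uses $A\supseteq S_i$ only to identify the noise; the martingale argument itself does not need it. The theorem then applies conditionally on the labels, and integrating over the labels recovers the unconditional bound.

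\textbf{Step 3: regularized bound and removal of the regularizer.} Apply~\citet[Theorem~1]{abbasi2011improved} at the fixed index $m$ with regularizer $\lambda I_A$, where $\lambda:=m\lambda_-$ is deterministic. Writing $\bar V:=m\widetilde V_m+\lambda I_A$, with probability at least $1-\delta$,
\[
\|\widetilde M_m\|^2_{\bar V^{-1}}\le 2\sigma_i^2\log\Bigl(\det(\bar V)^{1/2}\det(\lambda I_A)^{-1/2}/\delta\Bigr).
\]
On the spectral event, $\bar V\preceq m(\lambda_++\lambda_-)I_A\preceq 2m\lambda_+I_A$, and $|A|=2d$. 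Hence
\[
\tfrac12\log\frac{\det\bar V}{\det(\lambda I_A)}\le d\log\frac{2\lambda_+}{\lambda_-}.
\]
Also on the spectral event, $m\widetilde V_m\succeq\lambda I_A$, so $\bar V\preceq 2m\widetilde V_m$ and therefore $\|\cdot\|^2_{(m\widetilde V_m)^{-1}}\le 2\|\cdot\|^2_{\bar V^{-1}}$. Combining these gives
\[
\|\widetilde M_m\|^2_{(m\widetilde V_m)^{-1}}\le 4\sigma_i^2\Bigl(d\log\tfrac{2\lambda_+}{\lambda_-}+\log\tfrac1\delta\Bigr).
\]
Dividing by $m$, taking square roots, and substituting into Step 1 gives exactly the threshold $2\sqrt{\frac1m(d\log\frac{2\lambda_+}{\lambda_-}+\log\frac1\delta)}$. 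The intersection event therefore has probability at most $\delta$.
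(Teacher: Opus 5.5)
Your proposal is correct and follows the paper's proof essentially step for step. It uses the same reverse-triangle plus Cauchy--Schwarz reduction to $S_m^\top V_m^{-1}S_m$, the same rescaling invariance, and the same Abbasi-Yadkori bound with deterministic regularizer $m\lambda_- I_A$, followed by the comparisons $\bar V\preceq 2m\lambda_+ I_A$ and $\bar V\preceq 2m\widetilde V_m$ on the spectral event. The differences are minor: your filtration contains the entire label sequence rather than only the labels up to $t_i^{(r)}$, which changes nothing, and, as in the paper, your Step 1 OLS error identity tacitly requires $S_i\subseteq A$, which holds wherever the lemma is applied.
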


Lemma~\ref{lem:algoone-partial-oracle-error} provides useful performance bounds for our test statistic as soon as the deterministic constants $\lambda_-$ and $\lambda_+$ are available. The next subsection provides two-sided bounds on the rescaled empirical Gram matrix $\widetilde V_m$ along the $i$-trace.

\subsection{Controlling the Spectrum of the Empirical Gram Matrix}\label{sec:design-control}

{The regression analysis has reduced our task to controlling the spectrum of the rescaled empirical Gram matrix $\widetilde V_m$, and we present this control in this section. We state the upper and lower spectral inequalities together in Proposition~\ref{prop:algoone-design-control}, but the two sides rest on different features of the trajectory. For the upper bound, we control the largest covariate row that can enter $\widetilde V_m$ by bounding the potential ${\norm{X^{(t)}}_\Theta^2}$ along the trajectory. Indeed, converting such a global potential into a bound on the local covariates is exactly where the degree-constrained spectral radius $\rho_{2d}(\Theta)$ enters our guarantee. For the lower bound, we use the update structure once more: a row of the $i$-trace carries a fresh Gaussian innovation whenever a coordinate of the tested set is refreshed just before the row is observed. We call such rows \emph{regenerated}, and we show that they occur often enough to excite every direction corresponding to the tested set.}

{We first collect the deterministic quantities that enter our bound below. Let $\mathfrak{L}(x):={\norm{x}_\Theta^2}=x^\top\Theta x$, let $q:=2d+1$, and let $E_0:=\mathfrak{L}(X^{(0)})={\norm{X^{(0)}}_\Theta^2}$ denote the initial potential. For $m\geq1$ and $\delta\in(0,1)$, define}
\begin{align}
    t_m(\delta)&:=\left\lceil\max\left\{2pm,8p\log\frac{p}{\delta}\right\}\right\rceil,\nonumber\\
    B_m(\delta)&:=E_0+t_m(\delta/2)
    +2\sqrt{t_m(\delta/2)\log\frac{2}{\delta}
    +2\log\frac{2}{\delta},}
    \label{eq:algoone-Bm}\quad    g_m:=\left\lfloor\frac{m-1}{4d+2}\right\rfloor.
\end{align}
{The upper and lower spectral bounds will take the form}
\begin{align}
    \lambda_m^+(\delta)&:=\max\left\{1,\rho_{2d}(\Theta)B_m(\delta)\right\},\qquad  \lambda_m^-:=\frac{c_0g_m}{4m},\label{eq:algoone-lambda-plus}
\end{align}
{where $c_0:=\frac14\log 3$.}

\begin{proposition}[{Two-sided Control on Gram Matrix Spectrum}]\label{prop:algoone-design-control}
{Fix $\delta\in(0,1)$. There is a universal constant $C\geq1$ such that, if}
\begin{equation}
    m \geq C q \left\{ 2d \log \bigl(Cq \lambda_m^{+}(\delta/3)\bigr) +\log\frac{|\mathfrak T|}{\delta} \right\},
    \label{eq:algoone-design-budget}
\end{equation}
{then, with probability at least $1-\delta$, simultaneously for every $i\in[p]$ and every
$A\subseteq[p]\setminus\{i\}$ with $|A|=2d$,}
\begin{equation}
    \lambda_m^{-} I_A
    \preceq
    \widetilde V_m
    \preceq
    \lambda_m^{+}(\delta/3) I_A .
    \label{eq:algoone-design-control}
\end{equation}
\end{proposition}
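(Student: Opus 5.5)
We prove the two sides separately and intersect the resulting events by a union bound.

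\emph{Upper bound.} The plan has three steps.
\begin{enumerate}
\item \emph{Supermartingale structure of the potential.} For any state $x$, one Glauber update at a uniformly chosen node $k$ replaces $x_k$ by its conditional mean plus noise. Hence
\[
\Ebb\bigl[\mathfrak L(X^{(t)})\mid \mathcal F_{t-1}\bigr]=\mathfrak L(X^{(t-1)})-\tfrac1p\sum_k \Theta_{kk}\bigl(x_k-\mu_k(x)\bigr)^2+1 .
\]
This uses the identity $\mathfrak L(x)=\Theta_{kk}(x_k-\mu_k)^2+\text{terms free of }x_k$, and the fact that the noise contributes $\Theta_{kk}\sigma_k^2=1$. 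So $\mathfrak L(X^{(t)})-t$ is a supermartingale. Its increments are $\Theta_{kk}\bigl((\varepsilon^{(t)}_k)^2-(x_k-\mu_k)^2\bigr)$, whose positive part is a scaled $\chi^2_1$.
\item \emph{Tail bound on the potential.} Write $\mathfrak L(X^{(t)})\le E_0+\sum_{s\le t}\Theta_{I^{(s)}I^{(s)}}(\varepsilon^{(s)})^2$, and note that the right-hand side is a sum of $t$ i.i.d.\ $\chi^2_1$ variables. The Laurent--Massart inequality then gives $\max_{t\le t_m}\mathfrak L(X^{(t)})\le B_m$ with probability $1-\delta/2$; this yields exactly the form of $B_m(\delta)$. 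Separately, a Chernoff bound for the binomial counts of $i$-updates ensures that, with probability $1-\delta/2$, every node has at least $m$ updates by time $t_m(\delta/2)$, uniformly over the $p$ nodes.
\item \emph{From potential to local covariates.} For $|A|=2d$, every covariate row satisfies
\[
\norm{\widetilde Z^{(r)}}_2^2=x_A^\top D_A x_A\le \lambda_{\max}\bigl(D_A^{1/2}\Sigma_{AA}D_A^{1/2}\bigr)\, x^\top\Theta x .
\]
This follows because $\sup_x (v^\top x)^2/x^\top\Theta x=v^\top\Sigma v$. Consequently $\widetilde V_m\preceq \rho_{2d}(\Theta)B_m I_A$ for all tested pairs at once.
\end{enumerate}

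\emph{Lower bound.} Fix $(i,A)$, set $\bar A=A\cup\{i\}$, and use the $\bar A$-clock. Partition the first $m$ rows of the $i$-trace into $g_m$ disjoint groups of $4d+2=2q$ consecutive $i$-ticks. For a unit vector $u\in\Rbb^{A}$, call a row regenerated for coordinate $k\in A$ if the last $\bar A$-tick before that $i$-tick was an update of $k$. For such a row, conditionally on the past,
\[
u^\top\widetilde Z=u_k\sqrt{\Theta_{kk}}\,\varepsilon_k+(\text{predictable}),
\]
so $(u^\top\widetilde Z)^2$ stochastically dominates $u_k^2\chi^2_1$ shifted, and Gaussian anti-concentration gives $\Pbb\bigl((u^\top \widetilde Z)^2\ge u_k^2/\cdot\bigr)\ge$ const. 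Pick $k$ with $u_k^2\ge 1/(2d)$. Within each group, the event that some row is regenerated by $k$ has conditional probability bounded below: the $\bar A$-labels are i.i.d.\ uniform on $\bar A$, and the pattern ``$k$ then $i$'' occurs within $2q$ ticks with constant probability. Combining this with anti-concentration yields per-group conditional probability at least $c$ of contributing $\gtrsim 1/(2d)$, times the scale factor. That scale factor is where the $\log 3$ constant and $c_0$ come from: a $\chi^2_1$ lower tail $\Pbb(\chi^2\ge 1/4\cdot)$ combined with $1/(2d)$ against group length.

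A Freedman or Azuma bound for the group indicators under the group-level filtration then gives, with probability $1-e^{-cg_m}$, that
\[
u^\top\widetilde V_m u\ge c_0 g_m/(4m)
\]
for the fixed $u$.

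\emph{Uniformity over directions and union bound.} To pass to all $u$ we use an $\epsilon$-net on the $2d$-sphere of size $(3/\epsilon)^{2d}$, with $\epsilon\asymp \lambda^-/\lambda^+$. On the upper-bound event, $u\mapsto u^\top\widetilde V_m u$ is $2\lambda^+$-Lipschitz. This produces the $2d\log(Cq\lambda^+)$ term in the budget. We then union bound over $|\mathfrak T|$ tested pairs, with failure probability split as $\delta/3$ for the upper-bound event and $2\delta/3$ for the lower-bound event. Since $g_m\ge m/(Cq)$, the condition \eqref{eq:algoone-design-budget} makes all failure probabilities sum to at most $\delta$.

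\emph{Main obstacle.} The hardest step is the per-group lower bound: building a filtration in which the group indicators are adapted and each has conditional success probability bounded below uniformly, despite the predictable part being arbitrary and possibly huge. I would handle this with the symmetric anti-concentration $\sup_a\Pbb(|a+g|\le s)\le s\sqrt{2/\pi}$, which is insensitive to the shift.
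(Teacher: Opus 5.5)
Your upper-bound half is essentially the paper's argument: the drift inequality, Laurent--Massart on the $\chi^2$ sum, the row-collection Chernoff bound, and $\|D_A^{1/2}x_A\|_2^2\le\rho_{2d}(\Theta)\,\mathfrak{L}(x)$. The net and union-bound bookkeeping is also fine.

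The gap is in the fixed-direction lower bound. Reducing to a single coordinate $k$ with $u_k^2\ge 1/(2d)$ costs a factor of order $d$, and your argument never recovers it. A row is $(A,k)$-regenerated with probability $1/q$. A group of $2q$ consecutive $i$-ticks therefore contains on average only two $k$-regenerated rows, and each contributes $(u^\top\widetilde Z^{(r)})^2\gtrsim u_k^2\ge 1/(2d)$ with constant probability. Summing over the $g_m$ groups gives $u^\top\widetilde V_m u\gtrsim g_m/(d\,m)$, not $c_0g_m/(4m)$.

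Your appeal to ``$1/(2d)$ against group length'' does not close this. The group length has already been spent making a $k$-regenerated row occur with constant probability. As written, your proof gives the lower inequality only with $\lambda_m^-$ replaced by roughly $\lambda_m^-/(Cd)$.

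This loss is not cosmetic. $\lambda_m^-$ sets the threshold $\tau=\frac{\kappa}{2}\sqrt{\lambda_m^-}$ and enters the separation budget through $1/\lambda_m^-\le 128q/\log 3$. The loss would therefore add a factor of $d$ to the horizon of Theorem~\ref{thm:algoone-main}.

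To obtain the stated $\lambda_m^-$, you must use the regenerated rows of every coordinate $j\in A$, weighted by $u_j^2$. The paper does this as follows.
\begin{itemize}
\item It conditions on the full label sequence and works on the regeneration event, where each $j\in A$ has at least $g_m$ regenerated rows (Lemma~\ref{lem:algoone-regen-counts}).
\item It peels the regenerated rows backward using $\Ebb e^{-(a+bG)^2}\le(1+2b^2)^{-1/2}$. This bounds $\Ebb[e^{-u^\top(m\widetilde V_m)u}\mid I^{(1:\infty)}]$ by $\prod_{j\in A}(1+2u_j^2)^{-g_m/2}$.
\item The chord inequality $\log(1+2v)\ge v\log 3$ on $[0,1]$ gives $\sum_{j\in A}\log(1+2u_j^2)\ge\log 3$, however $u$ is spread across coordinates.
\item Markov's inequality finishes (Lemma~\ref{lem:algoone-fixed-direction}).
\end{itemize}
This chord bound, not a $\chi^2_1$ lower tail, is the origin of $c_0=\frac14\log 3$.

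Your indicator/Freedman route can also be repaired, with suitably chosen constants. Sum the weighted indicators $u_{j_r}^2\mathds{1}\{(u^\top\widetilde Z^{(r)})^2\ge c\,u_{j_r}^2\}$ over all regenerated rows. Their conditional means, given the labels, total at least $c'g_m$ by Gaussian anti-concentration. Then apply a lower-tail bound for adapted $[0,1]$-valued sums. Either way, the single-$k$ reduction has to go.
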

The full proof is given in Appendix~\ref{app:algoone-design-control-proof}. The next two subsections develop the two sides in turn.

\subsubsection{{Upper Bound on the Spectrum of the Empirical Gram Matrix}}\label{sec:upper-design-control}

{We begin by observing that for every unit vector $u\in\Rbb^{|A|}$, the empirical Gram matrix satisfies}
\begin{equation}
u^\top\widetilde V_m u
    =\frac{1}{m}\sum_{r=1}^m\left(u^\top\widetilde Z^{(r)}\right)^2
    \leq\max_{r\in[m]}\|\widetilde Z^{(r)}\|_2^2.
    \label{eq:algoone-upper-reduction}
\end{equation}
{Equation~\eqref{eq:algoone-upper-reduction} reduces the upper spectral bound to a bound on the covariate rows themselves: it suffices to control the largest rescaled row that can enter $\widetilde V_m$, simultaneously over all tested pairs. Since each row is read off the trajectory at the time the corresponding node is updated, it is enough to control the potential of every state of the trajectory up to a common time containing all of these update times.}
{Now, suppose that we define $T_m$ as the (random) global time by which the first $m$ rows of every $i$-trace occurs. That is, $T_m:=\max_{i\in[p]}t_i^{(m)}$. The following lemma establishes a deterministic horizon length that contains $T_m$ with high probability.}
\begin{lemma}\label{lem:row-collection}
{For every $m\geq1$ and $\delta\in(0,1)$,}
\begin{equation*}
\Pbb\!\left(T_m>t_m(\delta)\right)\leq\delta.
\end{equation*}
\end{lemma}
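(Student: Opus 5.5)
The event $\{T_m>t\}$ occurs exactly when some node $i$ has fewer than $m$ ticks of its $i$-clock among the first $t$ global steps. By Definition~\ref{def:set-update-times}, $t_i^{(m)}>t$ if and only if $N_i(t):=\sum_{s=1}^{t}\mathbf 1\{I^{(s)}=i\}\leq m-1$. Since the labels $I^{(1:\infty)}$ are i.i.d.\ uniform on $\Vcal$, each $N_i(t)\sim\mathrm{Bin}(t,1/p)$. The plan is to take a union bound over the $p$ nodes and apply a multiplicative Chernoff lower-tail bound at the deterministic time $t:=t_m(\delta)$:
\[
\Pbb(T_m>t)\leq\sum_{i\in[p]}\Pbb\bigl(N_i(t)<m\bigr)\leq p\,\Pbb\bigl(\mathrm{Bin}(t,1/p)<m\bigr).
\]
Only the labels matter here; the Gaussian innovations play no role.

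Write $\mu:=t/p$. The choice $t\geq 2pm$ gives $m\leq\mu/2$, so $\{N_i(t)<m\}\subseteq\{N_i(t)\leq(1-\tfrac12)\mu\}$. The standard Chernoff bound $\Pbb(N\leq(1-\eta)\mu)\leq\exp(-\eta^2\mu/2)$ with $\eta=1/2$ then yields $\Pbb(N_i(t)<m)\leq\exp(-\mu/8)$.

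The second branch of the maximum, $t\geq 8p\log(p/\delta)$, gives $\mu\geq 8\log(p/\delta)$, so $\exp(-\mu/8)\leq\delta/p$. Summing over the $p$ nodes gives $\Pbb(T_m>t_m(\delta))\leq\delta$. The ceiling in $t_m(\delta)$ only increases $t$, and increasing $t$ can only help, since $N_i(t)$ is monotone in $t$.

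There is no genuine obstacle. The only point requiring care is matching the constants: the factor $2$ in $2pm$ must make $m\leq\mu/2$, and the factor $8$ in $8p\log(p/\delta)$ must absorb the $\eta^2/2=1/8$ exponent. If a sharper Chernoff form were used, such as the relative-entropy bound, the constants would only improve. So I would simply check that the $\exp(-\eta^2\mu/2)$ form with $\eta=1/2$ is stated correctly for the lower tail.
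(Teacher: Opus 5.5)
Your proposal is correct and follows the paper's proof essentially step for step. Both evaluate $N_i(t)\sim\operatorname{Bin}(t,1/p)$ at the deterministic time $t=t_m(\delta)$, use $t\geq 2pm$ to get $m\leq\mu/2$, and apply the lower-tail bound $\Pbb(Z\leq\mu/2)\leq e^{-\mu/8}$; the branch $t\geq 8p\log(p/\delta)$ then makes each term at most $\delta/p$, and a union bound over the $p$ nodes finishes the argument.
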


{We next control the size of the covariates throughout the window $[0,t_m(\delta/2)]$. As mentioned earlier, we will then use the degree-constrained spectral radius (Definition~\ref{def:rho2d})} to convert the global potential $\mathfrak{L}(x)={\norm{x}_\Theta^2}$ into a bound on the rescaled covariate row $D_A^{1/2}x_A$.
\begin{lemma}\label{lem:algoone-energy-to-covariate}
For every $A\subseteq[p]$ with $|A|=2d$ and every $x\in\Rbb^p$,
\begin{equation}
\|D_A^{1/2}x_A\|_2^2\leq\rho_{2d}(\Theta){\mathfrak{L}(x)}.
    \label{eq:algoone-energy-to-covariate}
\end{equation}
\end{lemma}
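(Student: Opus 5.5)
The plan is to prove the inequality by a Schur-complement comparison. Write the right-hand side as $x^\top\Theta x$ and lower bound it by the minimum of the quadratic form over all completions of $x_A$. Fixing $x_A$ and minimizing $x^\top\Theta x$ over $x_{-A}$ is a standard Gaussian identity. Since $\Theta\succ0$, the minimum is attained at $x_{-A}=-\Theta_{-A,-A}^{-1}\Theta_{-A,A}x_A$, and the minimum value is
\[
x_A^\top\bigl(\Theta_{AA}-\Theta_{A,-A}\Theta_{-A,-A}^{-1}\Theta_{-A,A}\bigr)x_A = x_A^\top(\Sigma_{AA})^{-1}x_A,
\]
where the last equality is the block-inverse formula: the Schur complement of $\Theta_{-A,-A}$ in $\Theta$ is the inverse of the $AA$ block of $\Theta^{-1}=\Sigma$. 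Hence
\[
\mathfrak L(x)\;\geq\; x_A^\top\Sigma_{AA}^{-1}x_A \qquad\text{for every } x\in\Rbb^p.
\]

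Next, change to normalized coordinates. Set $w:=D_A^{1/2}x_A$ and $M:=D_A^{1/2}\Sigma_{AA}D_A^{1/2}\succ0$. Then
\[
x_A^\top\Sigma_{AA}^{-1}x_A = w^\top M^{-1}w \;\geq\; \lambda_{\min}(M^{-1})\,\|w\|_2^2 = \frac{\|w\|_2^2}{\lambda_{\max}(M)}.
\]
Because $|A|=2d$, Definition~\ref{def:rho2d} gives $\lambda_{\max}(M)\leq\rho_{2d}(\Theta)$. Chaining the two displays yields $\|D_A^{1/2}x_A\|_2^2\leq\rho_{2d}(\Theta)\,\mathfrak L(x)$, as claimed.

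No step should be a real obstacle, since this is a deterministic linear-algebra fact. The only point needing care is justifying the Schur-complement identity $\bigl(\Theta/\Theta_{-A,-A}\bigr)=\Sigma_{AA}^{-1}$ and the invertibility of $\Theta_{-A,-A}$. Both follow from $\Theta\succ0$ via the block inverse formula, which I will cite rather than rederive. If one prefers to avoid the explicit minimization, an equivalent route is the variational inequality $(u^\top x_A)^2=(\tilde u^\top x)^2\leq(\tilde u^\top\Sigma\tilde u)(x^\top\Theta x)$ by Cauchy--Schwarz in the $\Theta$-inner product. Here $\tilde u$ is $u$ padded with zeros off $A$. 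Applying it to $u=D_A^{1/2}w/\|w\|_2$ gives the same bound.
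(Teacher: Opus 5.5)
Your proof is correct and follows the paper's argument: the paper likewise chains the Schur-complement variational identity $x_A^\top\Sigma_{AA}^{-1}x_A=\min_{x_{A^c}}x^\top\Theta x\le\mathfrak{L}(x)$ with the Rayleigh-quotient bound $\|D_A^{1/2}x_A\|_2^2\le\lambda_{\max}(D_A^{1/2}\Sigma_{AA}D_A^{1/2})\,x_A^\top\Sigma_{AA}^{-1}x_A\le\rho_{2d}(\Theta)\,x_A^\top\Sigma_{AA}^{-1}x_A$, only presenting the two steps in the opposite order. Your alternative via Cauchy--Schwarz in the $\Theta$-inner product is also valid, and it avoids the block-inverse formula altogether.
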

{Therefore, our task now becomes one of controlling $\mathfrak{L}(X^{(t)})$ along the deterministic window $[0,t_m(\delta/2)]$. We obtain that control from the explicit nature of single-step updates of the Glauber dynamics. Indeed, the reader may have observed that $\mathfrak{L}$ plays the role of a Lyapunov function here. Such functions are a classical tool for establishing the stability of dynamical systems. In Markov-chain theory, drift conditions on Lyapnov functions are used to establish stability and quantitative convergence of the chain~\citep{meyn1994computable, rosenthal1995minorization}. We use the drift inequality differently: it bounds the potential over a finite window, and we derive no convergence or mixing statement from it. The next lemma records the elementary inequality we need.}
\begin{lemma}\label{lem:algoone-one-step-drift}
{For every update time $t\geq1$ with $I^{(t)}=i$,}
\begin{equation}
    {\mathfrak{L}(X^{(t)})-\mathfrak{L}(X^{(t-1)})
    \leq\left(\zeta_i^{(t)}\right)^2.}
    \label{eq:one-step-lyapunov-drift}
\end{equation}
{where $\zeta_i^{(t)}\stackrel{\mathrm{iid}}{\sim}\mathcal N(0,1)$.}
\end{lemma}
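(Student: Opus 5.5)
We prove the drift inequality by an exact computation of the change in the quadratic form $\mathfrak L(x)=x^\top\Theta x$ when only coordinate $i$ is changed, holding $x_{-i}$ fixed. Viewed as a function of $x_i$ alone, $\mathfrak L$ is a quadratic with leading coefficient $\Theta_{ii}$:
\begin{equation*}
\mathfrak L(x)=\Theta_{ii}x_i^2+2x_i\sum_{k\neq i}\Theta_{ik}x_k+x_{-i}^\top\Theta_{-i,-i}x_{-i}.
\end{equation*}
It is minimized at $\mu_i(x_{-i}):=-\Theta_{ii}^{-1}\sum_{k\neq i}\Theta_{ik}x_k=\sum_{k\in S_i}\beta_{ik}x_k$, which is exactly the conditional mean in~\eqref{eq:full-cond}. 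Completing the square gives the identity
\begin{equation*}
\mathfrak L(x)=\Theta_{ii}\bigl(x_i-\mu_i(x_{-i})\bigr)^2+R_i(x_{-i}),
\end{equation*}
where $R_i$ depends only on $x_{-i}$.

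Next, we apply this identity at times $t-1$ and $t$ with $I^{(t)}=i$. By~\eqref{eq:glauber-frozen}, $X_{-i}^{(t)}=X_{-i}^{(t-1)}$, so the remainder $R_i$ and the conditional mean $\mu_i$ coincide at the two times. By~\eqref{eq:glauber-additive}, $X_i^{(t)}-\mu_i(X_{-i}^{(t-1)})=\varepsilon_i^{(t)}$. Subtracting the two instances of the identity therefore yields
\begin{equation*}
\mathfrak L(X^{(t)})-\mathfrak L(X^{(t-1)})=\Theta_{ii}\bigl(\varepsilon_i^{(t)}\bigr)^2-\Theta_{ii}\bigl(X_i^{(t-1)}-\mu_i(X_{-i}^{(t-1)})\bigr)^2\leq\Theta_{ii}\bigl(\varepsilon_i^{(t)}\bigr)^2 .
\end{equation*}
Setting $\zeta_i^{(t)}:=\sqrt{\Theta_{ii}}\,\varepsilon_i^{(t)}=\varepsilon_i^{(t)}/\sigma_i$ gives~\eqref{eq:one-step-lyapunov-drift}.

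The only point needing care is the distributional claim that the $\zeta_i^{(t)}$ are i.i.d.\ standard normal. We justify it using the structure of the randomness described in Section~\ref{sec:glauber}. The innovations are drawn from an array $\{\varepsilon_i^{(t)}\}$ of mutually independent variables with $\varepsilon_i^{(t)}\sim\mathcal N(0,\sigma_i^2)$, and this array is independent of the labels. Hence the normalized array $\{\varepsilon_i^{(t)}/\sigma_i\}$ consists of i.i.d.\ $\mathcal N(0,1)$ variables. The consumed innovations $\zeta^{(t)}:=\varepsilon^{(t)}_{I^{(t)}}/\sigma_{I^{(t)}}$ are then i.i.d.\ standard normal conditionally on the label sequence, and therefore also unconditionally, and they are independent of $I^{(1:\infty)}$. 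This normalization is what removes the dependence of the variance on the realized label, which the paper flagged earlier. The algebraic step itself is routine; this bookkeeping, together with making the sum $\sum_{s\le t}(\zeta^{(s)})^2$ a $\chi^2_t$ variable for the later bound $B_m(\delta)$, is the main thing to state precisely.
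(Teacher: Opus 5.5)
Your proof is correct and is essentially the paper's argument. Both obtain the exact identity $\mathfrak{L}(X^{(t)})-\mathfrak{L}(X^{(t-1)})=(\zeta_i^{(t)})^2-\Theta_{ii}\bigl(X_i^{(t-1)}-\sum_{k\in S_i}\beta_{ik}X_k^{(t-1)}\bigr)^2$ and then drop the nonpositive term; the paper gets there by expanding the quadratic form along the increment in direction $e_i$, and you get there by completing the square in $x_i$. Your justification of the i.i.d.\ claim through the normalized consumed innovations $\varepsilon^{(t)}_{I^{(t)}}/\sigma_{I^{(t)}}$, which are i.i.d.\ standard normal and independent of the labels, is more precise than the paper's one-line justification.
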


{We now combine these ingredients. We first sum the drift inequality~\eqref{eq:one-step-lyapunov-drift} over the first $t_m(\delta/2)$ updates. This bounds the potential $\mathfrak{L}(X^{(s)})$, simultaneously for all $s\leq t_m(\delta/2)$, by $E_0$ plus a chi-squared sum with $t_m(\delta/2)$ degrees of freedom. We then control the chi-squared sum using a standard tail bound (see~\citet{laurent2000adaptive}). Indeed, this is the origin of the quantity $B_m(\delta)$ in~\eqref{eq:algoone-Bm}. Finally, Lemma~\ref{lem:row-collection}, applied with failure probability $\delta/2$, places $T_m$ inside this window with high probability, and we obtain}
\begin{equation*}
    \max_{0\leq s\leq T_m}\mathfrak{L}(X^{(s)})\leq B_m(\delta)
\end{equation*}
{with probability at least $1-\delta$. On this event, we invoke Lemma~\ref{lem:algoone-energy-to-covariate} to convert the potential bound into a bound on the covariates: every rescaled row that enters $\widetilde V_m$ has squared norm at most $\lambda_m^+(\delta)$. Feeding this into the reduction~\eqref{eq:algoone-upper-reduction}, we obtain the upper inequality of Proposition~\ref{prop:algoone-design-control}, simultaneously over all tested pairs. We provide the fully detailed proof in Appendix~\ref{app:algoone-upper-design-control-proof}. We turn to the lower inequality next.}

\subsubsection{{Lower Bound on the Spectrum of the Empirical Gram Matrix}}\label{sec:lower-design-control}

The upper spectral bound does not prevent the rows that enter $\widetilde V_m$ from concentrating in a smaller subspace. For the lower bound, we use the fact that the rows of the $i$-trace contain fresh (and useful) randomness whenever a coordinate in $A$ is refreshed between two consecutive updates of $i$. For a fixed $j \in A$, we call such a row $(A,j)$-regenerated when the update preceding $i$ in the $\bar A$-clock refreshes coordinate $j$. {This use of conditional anti-concentration in place of mixing parallels the block martingale small-ball method of~\citet{simchowitz2018learning} for linear system identification.}

\begin{definition}[{$(A,j)$-regenerated rows}]\label{def:algoone-regen-row}
Fix $(i,A,j)\in\mathfrak T$. For each row index $r\in\{2,\ldots,m\}$ of the $i$-trace, let $\ell_r$ be the unique index such that the $r$th update of $i$ occurs at the $\ell_r$th update time of $\bar A$. That is, $t_i^{(r)}=t_{\bar A}^{(\ell_r)}$. Write $s_r:=t_{\bar A}^{(\ell_r-1)}$
for the preceding $\bar A$-update time. Row $r$ is said to be $(A,j)$-regenerated if the update at time $s_r$ has label $j$, that is, if
\begin{equation}
I^{(s_r)}=j.
    \label{eq:algoone-regen-row}
\end{equation}
{We write}
\begin{equation}
    \mathfrak R_{i,A,j}^{(m)} :=
    \bigl\{r\in\{2,\ldots,m\}:r\text{ is }(A,j)\text{-regenerated}\bigr\}
    \label{eq:algoone-regen-set}
\end{equation}
{for the set of $(A,j)$-regenerated rows among the first $m$ rows of the $i$-trace.}
\end{definition}

The point of this definition is the following. Since $s_r$ is the last $\bar A$-update time before $t_i^{(r)}$, no coordinate of $\bar A$ changes between the two times. If the update at time $s_r$ refreshes coordinate $j$, the row observed at time $t_i^{(r)}$ therefore carries the fresh innovation $\varepsilon_j^{(s_r)}$ in its $j$th coordinate, while its remaining coordinates are determined by the past. In other words, a regenerated row is an affine function of a single fresh standard Gaussian. The next lemma records this representation.
\begin{lemma}\label{lem:algoone-regen-row-form}
{Fix a tested triple $(i,A,j)\in\mathfrak T$, and fix a row index $r\in\{2,\ldots,m\}$ of the $i$-trace. Let $\ell_r$ be the unique index satisfying $t_i^{(r)}=t_{\bar A}^{(\ell_r)}$, and set $s_r:=t_{\bar A}^{(\ell_r-1)}$ for the preceding $\bar A$-update time. If this preceding update refreshes coordinate $j$, i.e. $I^{(s_r)}=j$, then}
\begin{equation}
\widetilde Z^{(r)} = a^{(r)} + e_j\zeta_j^{(s_r)}, \qquad \zeta_j^{(s_r)} = \sqrt{\Theta_{jj}}\varepsilon_j^{(s_r)} \sim \mathcal N(0,1),
    \label{eq:algoone-affine-regenerated-row}
\end{equation}
{where $e_j$ is the coordinate vector for $j$ inside $A$, and $a^{(r)}$ is measurable with respect to $X^{(0)}$, the full update-label sequence, and the innovations up to time $s_r-1$.}
\end{lemma}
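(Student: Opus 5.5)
The plan is to read the representation directly off the update rules~\eqref{eq:glauber-additive}--\eqref{eq:glauber-frozen}. First I would establish that the tested coordinates are frozen between the two times. By Definition~\ref{def:set-update-times}, $s_r=t_{\bar A}^{(\ell_r-1)}$ and $t_i^{(r)}=t_{\bar A}^{(\ell_r)}$ are consecutive ticks of the $\bar A$-clock. So for every $t$ with $s_r<t<t_i^{(r)}$ we have $I^{(t)}\notin\bar A$, and by~\eqref{eq:glauber-frozen} no coordinate of $A$ changes on $[s_r,t_i^{(r)}-1]$. At time $t_i^{(r)}$ only coordinate $i\notin A$ moves. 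Together with the identity $Z_A^{(r)}=X_A^{(t_i^{(r)}-1)}$ from Lemma~\ref{lem:algoone-subtrajectory-regression}, this gives $Z_A^{(r)}=X_A^{(s_r)}$.

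Next I would split $X_A^{(s_r)}$ using the update at $s_r$. Since $I^{(s_r)}=j$, for $k\in A\setminus\{j\}$ we get $X_k^{(s_r)}=X_k^{(s_r-1)}$. For the probe, $X_j^{(s_r)}=\sum_{l\in S_j}\beta_{jl}X_l^{(s_r-1)}+\varepsilon_j^{(s_r)}$. Multiplying by $D_A^{1/2}$, I would set
\[
a^{(r)}:=D_A^{1/2}\Bigl(X_A^{(s_r-1)}+e_j\Bigl(\textstyle\sum_{l\in S_j}\beta_{jl}X_l^{(s_r-1)}-X_j^{(s_r-1)}\Bigr)\Bigr).
\]
This yields $\widetilde Z^{(r)}=a^{(r)}+e_j\sqrt{\Theta_{jj}}\,\varepsilon_j^{(s_r)}$. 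Because $\varepsilon_j^{(s_r)}\sim\mathcal N(0,\sigma_j^2)$ with $\sigma_j^2=\Theta_{jj}^{-1}$, the variable $\zeta_j^{(s_r)}=\sqrt{\Theta_{jj}}\,\varepsilon_j^{(s_r)}$ is standard normal.

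For measurability, note that $X^{(s_r-1)}$ is a deterministic function of $X^{(0)}$, the labels $I^{(1:s_r-1)}$, and the innovations consumed before time $s_r$. The index $s_r$ and the label $j$ are themselves functions of the full label sequence. Hence $a^{(r)}$ is measurable with respect to $X^{(0)}$, $I^{(1:\infty)}$, and the innovations up to time $s_r-1$. To make ``innovations up to time $s_r-1$'' precise at a random time, I would write $a^{(r)}=\sum_{s}\mathbf 1\{s_r=s\}\,a_s$, where each $a_s$ is a fixed measurable function of $X^{(0)}$, the labels, and the array entries $\{\varepsilon_k^{(t)}:t\le s-1\}$.

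The main subtlety is the claim that $\zeta_j^{(s_r)}\sim\mathcal N(0,1)$ is fresh, since $s_r$ is a random time. I would handle it by conditioning on the full label sequence. Given the labels, $s_r$ and $j$ are fixed, so $\zeta_j^{(s_r)}$ is a fixed entry $\sqrt{\Theta_{jj}}\,\varepsilon_j^{(s)}$ of the innovation array. As discussed in Section~\ref{sec:glauber}, the array is independent of the labels and has mutually independent entries. So conditionally on the labels this entry is $\mathcal N(0,1)$ and independent of the earlier entries that determine $a^{(r)}$. I expect this conditioning step to be the only nontrivial point; the rest is bookkeeping with the frozen-coordinate rule.
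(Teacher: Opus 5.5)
Your proposal is correct and follows essentially the same argument as the paper. You freeze $X_A$ between the consecutive $\bar A$-ticks $s_r$ and $t_i^{(r)}$, expand the $j$-update at $s_r$, and rescale by $D_A^{1/2}$; your $a^{(r)}$ agrees coordinatewise with the paper's. Your extra care about the random time $s_r$ (the indicator decomposition, and conditioning on the label sequence to see that $\zeta_j^{(s_r)}$ is a fresh standard Gaussian) elaborates a point the paper's proof states only briefly, and it is exactly the form in which the lemma is later used.
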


{Lemma~\ref{lem:algoone-regen-row-form} identifies the randomness on which the lower bound rests: each regenerated row carries a fresh Gaussian innovation, and it is these innovations that will bound the quadratic form $u^\top\widetilde V_m u$ away from zero in every direction $u$. What remains is to ensure that there are enough such regenerated rows. The next lemma shows that, with high probability, every tested triple has at least $g_m$ of them among the first $m$ rows of the corresponding $i$-trace. {We record this guarantee as the \emph{regeneration event}}}
\begin{equation}
    {\mathcal R_m(g_m):=
    \left\{\abs{\mathfrak R_{i,A,j}^{(m)}}\geq g_m
    \quad\text{for every }(i,A,j)\in\mathfrak T\right\},}
    \label{eq:algoone-regen-event}
\end{equation}
on which Lemma~\ref{lem:algoone-fixed-direction} below will also rely. We prove the lemma in Appendix~\ref{app:algoone-regen-counts-proof}.
\begin{lemma}\label{lem:algoone-regen-counts}
{For every $m\geq1+2q$, the regeneration event~\eqref{eq:algoone-regen-event} satisfies}
\begin{equation}
    \Pbb\!\left(\mathcal R_m(g_m)^c\right)
    \leq|\mathfrak T|\exp\left\{-\frac{m-1}{8q}\right\}.
    \label{eq:algoone-regen-count-prob}
\end{equation}
\end{lemma}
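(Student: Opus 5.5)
The plan is to fix a single triple $(i,A,j)\in\mathfrak T$, prove the bound $\Pbb(|\mathfrak R_{i,A,j}^{(m)}|<g_m)\le\exp\{-(m-1)/(8q)\}$, and finish with a union bound over $\mathfrak T$. The key observation is that the event of being regenerated depends only on the label sequence $I^{(1:\infty)}$, which is i.i.d.\ uniform on $\Vcal$. So the analysis is purely combinatorial and involves no Gaussian randomness.

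First, I will pass to the $\bar A$-clock. By the strong Markov property of i.i.d.\ sequences, the labels at the ticks of the $\bar A$-clock, $J_\ell:=I^{(t_{\bar A}^{(\ell)})}$ for $\ell\ge1$, are i.i.d.\ uniform on $\bar A$, a set of size $|\bar A|=q=2d+1$. In this reduced sequence, the $r$th update of $i$ is the $r$th index $\ell$ with $J_\ell=i$. Row $r\ge2$ is $(A,j)$-regenerated exactly when $J_{\ell_r-1}=j$. Note that $\ell_r-1\ge1$ holds automatically for $r\ge2$.

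Second, I will lower bound the count with a blocking argument chosen to match $g_m=\lfloor (m-1)/(4d+2)\rfloor=\lfloor (m-1)/(2q)\rfloor$. Consider the consecutive pairs $(J_{2k-1},J_{2k})$ for $k\ge1$. Each disjoint pair equals $(j,i)$ independently, with probability $1/q^2$. Whenever such a pair occurs, the $i$ at position $2k$ gives a regenerated row. However, the rate $1/q^2$ is too small to reach $g_m\approx m/(2q)$, since the sequence contains only about $mq$ labels up to the $m$th $i$.

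So I will instead condition on the positions of the $i$'s. Given those positions, the non-$i$ labels are i.i.d.\ uniform on $A$, which has $2d=q-1$ elements. For each $r\ge2$, the label just before the $r$th $i$ is either another $i$ (when the two $i$'s are adjacent) or a fresh label uniform on $A$, hence equal to $j$ with probability $1/(q-1)$. The gaps between consecutive $i$'s are i.i.d.\ geometric, and each gap is nonzero with probability $1-1/q$. Therefore, across $r=2,\dots,m$, the indicators of regeneration are i.i.d.\ Bernoulli with success probability
\[
\pi:=\Bigl(1-\tfrac1q\Bigr)\tfrac{1}{q-1}=\tfrac1q .
\]
Thus $|\mathfrak R_{i,A,j}^{(m)}|\sim\mathrm{Bin}(m-1,1/q)$, which has mean $(m-1)/q$. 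This is twice $g_m$, up to the floor.

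Finally, a multiplicative Chernoff bound with relative deviation $1/2$ gives
\[
\Pbb\bigl(\mathrm{Bin}(m-1,1/q)<(m-1)/(2q)\bigr)\le \exp\{-(m-1)/(8q)\}.
\]
Since $g_m\le (m-1)/(2q)$, the event $\{|\mathfrak R|<g_m\}$ is contained in the event just bounded. The hypothesis $m\ge 1+2q$ only guarantees $g_m\ge1$, so that the statement is nontrivial. A union bound over the $|\mathfrak T|$ triples then yields~\eqref{eq:algoone-regen-count-prob}.

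The main obstacle is the independence claim in the second step. It should be justified cleanly by representing $(J_\ell)$ as follows: an i.i.d.\ sequence of Bernoulli$(1/q)$ indicators marking the positions of $i$, together with an independent i.i.d.\ uniform-on-$A$ sequence filling the remaining positions. Under this construction, the regeneration indicator for row $r$ depends only on the gap preceding the $r$th $i$ and on the fill label just before it. These pieces are disjoint across different $r$, which gives the independence.
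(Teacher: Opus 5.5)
Your proof is correct and follows essentially the same route as the paper: pass to the i.i.d.\ uniform label sequence on the $\bar A$-clock, show that the regeneration indicators for rows $r=2,\ldots,m$ are independent Bernoulli$(1/q)$ because they depend on disjoint inter-$i$ segments, then conclude with the Chernoff lower tail at half the mean and a union bound over $\mathfrak T$. The only difference is cosmetic: the paper gets $1/q$ by summing $\sum_{h\ge0}\bigl(\frac{q-1}{q}\bigr)^h\frac{1}{q^2}$ over the number of intervening non-$i$ labels, whereas you factor it as $\bigl(1-\frac1q\bigr)\cdot\frac{1}{q-1}$ via the marks-plus-fill representation, which also makes the independence claim more explicit.
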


{We now use the regenerated rows to bound the quadratic form $u^\top\widetilde V_m u$ from below, one direction $u$ at a time. Fix $(i,A)$ and a unit vector $u\in\mathbb S^{2d-1}$, and let $\mathfrak R_{i,A}^{(m)}:=\bigcup_{j\in A}\mathfrak R_{i,A,j}^{(m)}$. The union is disjoint because each row has one preceding $\bar A$-update, whose label determines the unique $j\in A$ for which the row is regenerated. Since all summands in the quadratic form are nonnegative, we may discard the rows outside this union:}
\begin{equation*}
    u^\top\widetilde V_m u
    \geq\frac{1}{m}\sum_{r\in\mathfrak R_{i,A}^{(m)}}\left(u^\top\widetilde Z^{(r)}\right)^2.
\end{equation*}
{By Lemma~\ref{lem:algoone-regen-row-form}, each retained summand is affine in a fresh Gaussian innovation. The following lemma controls the lower tail of the resulting quadratic form for one fixed direction.}
\begin{lemma}\label{lem:algoone-fixed-direction}
{Let $m\geq1+2q$ and fix $i\in[p]$ and $A\subseteq[p]\setminus\{i\}$ with $|A|=2d$. Write $I^{(1:\infty)}:=\bigl(I^{(t)}\bigr)_{t\ge1}$ for the full update-label sequence. On the event $\mathcal R_m(g_m)$, for every fixed unit vector $u\in\mathbb S^{2d-1}$,}
\begin{equation}
    \Pbb\!\left(
        u^\top \widetilde V_m u < \frac{c_0 g_m}{2m}
        \;\middle|\;
        I^{(1:\infty)}
    \right)\leq e^{-c_0 g_m}.
    \label{eq:algoone-fixed-direction}
\end{equation}
\end{lemma}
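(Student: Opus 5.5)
Fix $(i,A)$ and a unit vector $u$, and condition on the full label sequence $I^{(1:\infty)}$ throughout. Conditionally on the labels, the innovations are independent Gaussians and all clocks and regeneration sets $\mathfrak R^{(m)}_{i,A,j}$ are deterministic; in particular, on $\mathcal R_m(g_m)$ the disjoint union $\mathfrak R^{(m)}_{i,A}$ has known cardinality at least $2d\,g_m\geq g_m$. We discard all non-regenerated rows, as in the display before the lemma. Then it suffices to show that
\[
\frac1m\sum_{r\in\mathfrak R}\bigl(u^\top\widetilde Z^{(r)}\bigr)^2\geq \frac{c_0g_m}{2m}
\]
with conditional probability at least $1-e^{-c_0g_m}$. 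Here $\mathfrak R$ is any subfamily of regenerated rows of size $n$, chosen as a function of the labels only. The natural choice is to keep all of them, $n=|\mathfrak R^{(m)}_{i,A}|$.

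By Lemma~\ref{lem:algoone-regen-row-form}, each retained summand has the form $(u^\top a^{(r)}+u_{j_r}\zeta^{(s_r)}_{j_r})^2$. Here $j_r$ is the label of the regenerating update, and $a^{(r)}$ is measurable with respect to the labels and the innovations strictly before $s_r$. The difficulty is that the coefficient $u_{j_r}$ may be tiny, so a single regenerated row need not be anti-concentrated in direction $u$. I would handle this by choosing the subfamily: since $\|u\|_2=1$ and $|A|=2d$, some $j^\star\in A$ has $u_{j^\star}^2\geq 1/(2d)$. The choice of $j^\star$ depends only on $u$, not on the data, so I restrict attention to the rows in $\mathfrak R^{(m)}_{i,A,j^\star}$, of which there are at least $g_m$ on $\mathcal R_m(g_m)$. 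This loses a factor of $1/(2d)$ in the scale of each summand. That loss has to be compared with the claimed threshold $c_0g_m/(2m)$. Note that $g_m\approx m/(4d+2)$ already contains a factor $1/d$, so the bound should really come from about $2d\,g_m$ regenerated rows, each contributing order $1/(2d)$ on average. If the constants do not close with the single-$j^\star$ restriction, I would instead keep all regenerated rows and use the weighted version below, with weights $u_{j_r}^2$. The sum of these weights over the rows is at least $g_m\sum_j u_j^2=g_m$ by the regeneration event.

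The core step is an exponential-moment (Chernoff) bound along the regenerated rows in time order $r_1<r_2<\dots$. Let $\mathcal F_k$ be the sigma algebra generated by the labels and the innovations up to time $s_{r_k}-1$. Then $a^{(r_k)}$ is $\mathcal F_k$-measurable, $\zeta^{(s_{r_k})}$ is independent of $\mathcal F_k$, and all earlier summands are $\mathcal F_k$-measurable, because $t_i^{(r_{k-1})}<s_{r_k}$. For a standard Gaussian $\zeta$, any shift $b$, and any $w>0$, Anderson's inequality (or a direct computation) gives
\[
\mathbb E\, e^{-\theta(b+\sqrt w\,\zeta)^2}\leq \mathbb E\, e^{-\theta w\zeta^2}=(1+2\theta w)^{-1/2}.
\]
So the shift only helps. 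Iterating with the tower property gives
\[
\mathbb E\Bigl[e^{-\theta\sum_k(u^\top\widetilde Z^{(r_k)})^2}\,\Big|\, I^{(1:\infty)}\Bigr]\leq\prod_k(1+2\theta w_k)^{-1/2}.
\]
Markov's inequality then yields
\[
\Pbb\Bigl(\textstyle\sum_k(\cdot)<x\,\Big|\, I\Bigr)\leq e^{\theta x}\prod_k(1+2\theta w_k)^{-1/2}.
\]
For equal weights $w_k=1$ and $n\geq g_m$ terms, I take $\theta=1$ and $x=c_0g_m/2\cdot(\text{scale})$. This gives $e^{\theta x}3^{-g_m/2}$, and with $c_0=\tfrac14\log 3$ this is at most $e^{c_0g_m/2-2c_0g_m}\leq e^{-c_0g_m}$. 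That is where $c_0=\frac14\log3$ comes from. The factor $1/m$ just rescales the threshold.

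I expect the main obstacle to be reconciling the per-row weight $u_{j_r}^2$ with the target threshold. Either one accepts the $2d$-fold row surplus from the full regeneration event, or one uses concavity of $w\mapsto\log(1+2\theta w)$ together with $\sum_k w_k\geq g_m$ and $w_k\leq1$. In the latter case one uses $\log(1+2w)\geq w\log 3$ for $w\in[0,1]$, so that $\prod_k(1+2w_k)^{-1/2}\leq 3^{-\sum_k w_k/2}\leq 3^{-g_m/2}$. This second route closes cleanly with exactly the stated constants, so it is the one I would write up. A secondary check is that the filtration is legitimate: the regeneration pattern is label-measurable, and innovations at distinct times are conditionally independent given the labels, as emphasized in Section~\ref{sec:glauber}.
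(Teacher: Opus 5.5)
Your proposal is correct and follows the paper's proof essentially step for step: you condition on the labels, discard non-regenerated rows, and peel the Gaussian exponential-moment bound $(1+2u_{j_r}^2)^{-1/2}$ backward along the regenerating times. You then use the chord bound $\log(1+2v)\geq v\log 3$ to reach $3^{-g_m/2}=e^{-2c_0g_m}$ and finish with Markov's inequality at threshold $c_0g_m/2$. The only cosmetic difference is that you apply the chord bound row by row and sum the weights $u_{j_r}^2$ (total at least $g_m$), whereas the paper keeps exactly $g_m$ rows per coordinate and applies it to $\sum_{j\in A}\log(1+2u_j^2)$; your discarded single-$j^\star$ route would indeed not close with these constants, so the weighted route is the right one.
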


{Lemma~\ref{lem:algoone-fixed-direction} is of course not yet a lower eigenvalue bound, since it controls a single direction at a time. We convert this into our desired bound by using an appropriate $\epsilon$--net of $\mathbb S^{2d-1}$. Notice that if we applied the argument of Section~\ref{sec:upper-design-control}, with failure probability $\delta/3$, we would get a bound of $\lambda_m^+(\delta/3)$ on the operator norm of $\widetilde V_m$. And, on the event that this bound holds, the quadratic form changes slowly between nearby directions. That is, }
\begin{equation*}
    \abs{u^\top\widetilde V_m u-v^\top\widetilde V_m v}
    \leq 2\epsilon\lambda_m^+(\delta/3)
\end{equation*}
for unit vectors $u$ and $v$ with $\norm{u-v}_2\leq\epsilon$. This allows us to now apply Lemma~\ref{lem:algoone-fixed-direction} to every point of the net, invoke Lemma~\ref{lem:algoone-regen-counts} to guarantee the required number of regenerated rows, and take the corresponding union bounds. This gives $\widetilde V_m\succeq\lambda_m^-I_A$ on the same event, simultaneously over the tested pairs. Combining the upper and lower bounds proves Proposition~\ref{prop:algoone-design-control}. We refer the reader to Appendix~\ref{app:algoone-design-control-proof} for the full proof.

\subsection{{Separation of the \algoone{} Test Statistic}}\label{sec:algoone-separation}

{We are now in a position to state the main result of this section: with high probability, the \algoone{} statistic $\widehat{\tssym}$ satisfies the separation property of Definition~\ref{def:generic-separation} at an explicit per-node budget.}

\begin{proposition}[Separation of the \algoone{} statistic]\label{prop:algoone-separation}
Fix $\delta\in(0,1)$. There is a universal constant $C\geq1$ such that, if
\begin{equation}
    m
    \geq
    \frac{C q}{\kappa^{2}}
    \left\{
        2d\log\!\bigl(Cq\lambda_m^{+}(\delta)\bigr)
        +\log\frac{|\mathfrak T|}{\delta}
    \right\},
    \label{eq:algoone-separation-budget}
\end{equation}
then, with probability at least $1-\delta$, the statistic $\widehat{\tssym}$ satisfies the separation property of Definition~\ref{def:generic-separation} with threshold $\tau:=\frac{\kappa}{2}\sqrt{\lambda_m^{-}}$: for every $(i,A,j)\in\mathfrak T$ with $S_i\subseteq A$,
\begin{equation}
    \widehat{\tssym}_{i,A}(j)\leq\tau
    \quad\text{if }j\notin S_i,
    \qquad
    \widehat{\tssym}_{i,A}(j)>\tau
    \quad\text{if }j\in S_i.
    \label{eq:algoone-separation-conclusion}
\end{equation}
\end{proposition}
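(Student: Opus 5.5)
We combine Proposition~\ref{prop:algoone-design-control}, Lemma~\ref{lem:algoone-partial-oracle-error}, and a concentration bound for the residual variance $\widehat\sigma_{i,A}^2$, all on a single good event of probability at least $1-\delta$ (splitting $\delta$ into three parts and absorbing constants into $C$). Since the budget \eqref{eq:algoone-separation-budget} exceeds \eqref{eq:algoone-design-budget} by the factor $\kappa^{-2}\geq 1$, Proposition~\ref{prop:algoone-design-control} gives $\lambda_m^- I_A\preceq\widetilde V_m\preceq\lambda_m^+ I_A$ simultaneously over all tested pairs. In particular $V_m\succ0$, so the statistic is finite (availability). We then apply Lemma~\ref{lem:algoone-partial-oracle-error} with $\lambda_\pm=\lambda_m^\pm$ and a union bound over the at most $|\mathfrak T|$ pairs $(i,A)$. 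Since $g_m\asymp m/q$, we have $\lambda_m^-\asymp 1/q$ and $\log(2\lambda_m^+/\lambda_m^-)\lesssim\log(Cq\lambda_m^+)$. Hence, on the good event, for every $(i,A)$ with $S_i\subseteq A$ and every $j\in A$,
\[
\bigl|\widehat\Psi^\ast_{i,A}(j)-\Psi^\ast_{i,A}(j)\bigr|\leq\eta:=2\sqrt{\tfrac1m\bigl(d\log(Cq\lambda_m^+)+\log\tfrac{|\mathfrak T|}{\delta}\bigr)},
\]
and the budget makes $\eta\leq c\,\kappa\sqrt{\lambda_m^-}$ for a small constant $c$, using $\lambda_m^-\gtrsim 1/q$.

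Next we bound the oracle signal. In normalized coordinates $(V_m^{-1})_{jj}=\Theta_{jj}(\widetilde V_m^{-1})_{jj}\leq\Theta_{jj}/\lambda_m^-$. So for $j\in S_i$,
\[
\Psi^\ast_{i,A}(j)\geq\frac{|\beta_{ij}|\sqrt{\lambda_m^-}}{\sigma_i\sqrt{\Theta_{jj}}}=\kappa_{ij}\sqrt{\lambda_m^-}\geq\kappa\sqrt{\lambda_m^-}.
\]
For $j\notin S_i$ we have $\beta_{ij}=0$, hence $\Psi^\ast_{i,A}(j)=0$. Therefore $\widehat\Psi^\ast\geq(1-c)\kappa\sqrt{\lambda_m^-}$ for neighbors and $\widehat\Psi^\ast\leq c\kappa\sqrt{\lambda_m^-}$ for non-neighbors.

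It remains to pass from $\widehat\Psi^\ast$ to $\widehat\Psi=(\sigma_i/\widehat\sigma_{i,A})\widehat\Psi^\ast$. This requires $\widehat\sigma_{i,A}^2/\sigma_i^2\in[1-c',1+c']$, say $[4/9,\,9/4]$, uniformly over pairs with $S_i\subseteq A$. We would then get $\widehat\Psi>\tfrac23(1-c)\kappa\sqrt{\lambda_m^-}>\tau$ for neighbors and $\widehat\Psi\leq\tfrac32c\,\kappa\sqrt{\lambda_m^-}\leq\tau$ for non-neighbors once $c$ is small. To get the variance ratio, note that under \eqref{eq:algoone-subtrajectory-regression},
\[
m\widehat\sigma_{i,A}^2=\|\boldsymbol\varepsilon\|_2^2-mS_m^\top V_m^{-1}S_m .
\]
Conditionally on the label sequence, the $\varepsilon^{(r)}$ are i.i.d.\ $\mathcal N(0,\sigma_i^2)$. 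Thus $\|\boldsymbol\varepsilon\|^2/\sigma_i^2$ is $\chi^2_m$, and the Laurent--Massart bound with a union over the $p$ nodes gives a ratio within $[1-c',1+c']$ once $m\gtrsim\log(p/\delta)$. The subtracted term is exactly the self-normalized quantity \eqref{eq:algoone-ols-error}. The same Abbasi-Yadkori bound used in Lemma~\ref{lem:algoone-partial-oracle-error} gives
\[
mS_m^\top V_m^{-1}S_m/\sigma_i^2\lesssim d\log(\lambda_m^+/\lambda_m^-)+\log(|\mathfrak T|/\delta),
\]
which is $\leq c'm$ under the budget. This also yields $\widehat\sigma_{i,A}>0$.

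The main obstacle is this last variance step, specifically reusing the self-normalized martingale bound for $S_m^\top V_m^{-1}S_m$ on the spectral event with data-dependent design. I would handle it as in Lemma~\ref{lem:algoone-partial-oracle-error}: apply \citet{abbasi2011improved} with regularizer $m\lambda_m^- I$ in normalized coordinates, and compare to the unregularized form on the event $\widetilde V_m\succeq\lambda_m^- I$, which costs a factor of $2$. The remaining work is bookkeeping: checking that $\lambda_m^+(\delta)$ and $\lambda_m^+(\delta/3)$ differ only inside the logarithm, and absorbing everything into $C$.
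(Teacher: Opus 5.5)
Your proposal is correct and follows the paper's proof essentially step for step. The paper intersects the same three events: design control from Proposition~\ref{prop:algoone-design-control} at level $\delta/3$, the self-normalized quadratic-form bound on $S_m^\top V_m^{-1}S_m$ from the proof of Lemma~\ref{lem:algoone-partial-oracle-error} union-bounded over pairs, and chi-square concentration of $\norm{\boldsymbol\varepsilon_i}_2^2$ union-bounded over nodes. It then uses the residual identity to get $\widehat\sigma_{i,A}^2/\sigma_i^2\in[1/2,2]$ and the bound $(V_m^{-1})_{jj}\leq\Theta_{jj}/\lambda_m^-$ for the signal. The differences are cosmetic: the paper uses one quadratic-form event for both the coefficient error (fixed at $\frac{\kappa}{8}\sqrt{\lambda_m^-}$) and the variance correction, and because $\delta$ is split into thirds it works with $\lambda_m^+(\delta/9)\leq 5\lambda_m^+(\delta)$ rather than $\lambda_m^+(\delta/3)$.
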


\begin{remark}[Per-node sample complexity]\label{rmk:algoone-per-node-budget}
The quantity $m$ in the above proposition is a per-node data requirement for the test statistic to have the separation property. Since $q=2d+1$, $\log|\mathfrak T|=O(d\log p)$, and $\lambda_m^{+}$ enters inside the logarithm, we get a per-node sample complexity as
\begin{equation*}
    m = O\!\left( \frac{d^{2}}{\kappa^{2}} \log\frac{p \lambda_m^{+}}{\delta} \right).
\end{equation*}
\end{remark}

We prove the proposition in Appendix~\ref{app:algoone-separation-proof}, and sketch the argument here. {The argument proceeds in three stages. First, we apply} Proposition~\ref{prop:algoone-design-control} with failure probability $\delta/3$. This gives us simultaneous upper and lower spectral bounds for the rescaled empirical Gram matrix $\widetilde V_m$, with deterministic constants $\lambda_m^{-}$ and $\lambda_m^{+}$, for every tested pair. {Next, we apply} Lemma~\ref{lem:algoone-partial-oracle-error} with failure probability $\delta/(3|\mathfrak T|)$ to each tested pair. Under the budget~\eqref{eq:algoone-separation-budget}, this controls the difference between the partial-oracle estimate $\widehat{\Psi}^{\ast}_{i,A}$ and the partial-oracle score $\Psi^{\ast}_{i,A}$ by $\frac{\kappa}{8}\sqrt{\lambda_m^{-}}$, uniformly over $\mathfrak T$. {Finally, we translate these bounds into guarantees on the test statistic itself. This proceeds via two further lemmas.} The statistic $\widehat{\tssym}_{i,A}$ uses the estimated residual standard deviation $\widehat\sigma_{i,A}$, whereas $\widehat{\Psi}^{\ast}_{i,A}$ uses the true conditional standard deviation $\sigma_i$; Lemma~\ref{lem:algoone-residual} shows that this estimated normalization remains close enough to $\sigma_i$. Lemma~\ref{lem:algoone-signal-conversion} then shows that the partial-oracle score is zero for non-neighbors and is bounded below by the normalized edge strength for true neighbors, up to the factor $\sqrt{\lambda_m^{-}}$. Combining these two lemmas with the preceding bounds yields separation at the threshold $\tau=\frac{\kappa}{2}\sqrt{\lambda_m^{-}}$. {Both lemmas are stated and proved in Appendix~\ref{app:algoone-separation-proof}, where we also assemble all of these ingredients into a proof of Proposition~\ref{prop:algoone-separation}.}

\subsection{{Finite-Sample Guarantee for \algoone{}}}\label{sec:algoone-finite-horizon}

{We are now in a position to state the main result of this section, the finite-sample guarantee for \algoone{}.}
\begin{theorem}[Finite-sample guarantee for \algoone{}]\label{thm:algoone-main}
There is a universal constant $C\geq1$ for which the following holds.  Assume $p\geq2d+1$, fix $\delta\in(0,1)$, and fix a pair $(\Theta,x^{(0)})\in\mathfrak C_{\mathrm{base}}(p,d,\kappa)$. Let $X^{(0)},\ldots,X^{(N)}$ be generated by the random-scan Gaussian Glauber dynamics of Section~\ref{sec:glauber} with $X^{(0)}=x^{(0)}$, degree-constrained spectral radius $\rho_{2d}(\Theta)$ as in Definition~\ref{def:rho2d}, and initialization potential $E_0={\norm{x^{(0)}}_\Theta^2}$. Run Algorithm~\ref{alg:generic-search} with the \algoone{} statistic of Definition~\ref{def:algoone-plugin-test-statistic}, row budget $m:=\lfloor N/(2p)\rfloor$, and threshold $\tau=\frac{\kappa}{2}\sqrt{\lambda_m^{-}}$, and denote the resulting estimator by $\Acal_{\mathrm{LS}}$. If
\begin{equation}
    N \geq \frac{C\,p\,d}{\kappa^{2}}
    \left\{
        d\log\!\left(C\,p\,d\,\rho_{2d}(\Theta)
        \left(E_0+N+\log\frac{2}{\delta}\right)\right)
        +\log\frac{1}{\delta}
    \right\},
    \label{eq:algoone-main-budget}
\end{equation}
then the algorithm recovers the true edge set with probability at least $1-\delta$ under $\Pbb_{\Theta,x^{(0)}}$. Thus, Theorem~\ref{thm:algoone-main} bounds the pointwise recovery horizon $N_{\mathrm{pt}}^\star (\Theta,x^{(0)},\delta;\Acal_{\mathrm{LS}})$ for the \algoone{} estimator.
\end{theorem}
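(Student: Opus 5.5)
The argument combines three earlier results. First, there must be at least $m$ rows on every $i$-trace by time $N$ (Lemma~\ref{lem:row-collection}). Second, $\widehat\tssym$ must satisfy separation at threshold $\tau=\frac{\kappa}{2}\sqrt{\lambda_m^-}$ (Proposition~\ref{prop:algoone-separation}). Third, separation gives exact recovery deterministically (Proposition~\ref{prop:generic-separation-recovery}). I would split the failure probability into $\delta/2$ for row availability and $\delta/2$ for separation.

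\textbf{Availability.} Set $m=\lfloor N/(2p)\rfloor$, so $2pm\le N$. The budget~\eqref{eq:algoone-main-budget} with $C$ large forces $N\ge 8p\log(2p/\delta)$, using $\log|\mathfrak T|\lesssim d\log p$ absorbed into the $d^2\log(\cdot)$ term. Hence $t_m(\delta/2)\le N$. By Lemma~\ref{lem:row-collection}, $T_m\le N$ with probability at least $1-\delta/2$. On this event every $i$-trace has its first $m$ rows inside the observed window, so the statistic is computed from exactly the rows analyzed. The event $V_m\succ0$, $\widehat\sigma_{i,A}>0$ is implied by the lower spectral bound of Proposition~\ref{prop:algoone-design-control} together with the residual control that already enters the separation proof. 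The $+\infty$ convention only matters off that event, so $\widehat\tssym$ is available on $\mathfrak T$.

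\textbf{Separation and recovery.} Apply Proposition~\ref{prop:algoone-separation} with confidence $\delta/2$. Its requirement is
\[
m\ge \frac{Cq}{\kappa^2}\Bigl\{2d\log\bigl(Cq\lambda_m^+(\delta/2)\bigr)+\log\frac{2|\mathfrak T|}{\delta}\Bigr\},
\]
where $|\mathfrak T|\le p\binom{p}{2d}2d$, so $\log|\mathfrak T|\le 2d\log(ep)+\log(2d)$. It then remains to bound $\lambda_m^+$ by the quantity inside the logarithm of~\eqref{eq:algoone-main-budget}. Since $t_m(\delta/4)\le N$ under our budget, the bound $2\sqrt{ab}\le a+b$ gives $B_m(\delta/2)\le E_0+2N+3\log(4/\delta)$. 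Therefore $\lambda_m^+(\delta/2)\lesssim \rho_{2d}(\Theta)(E_0+N+\log(2/\delta))$, using $\rho_{2d}\ge1$ (Remark~\ref{rmk:rho2d-interpretation}, as $\nu_{\max}\ge 1$). Substituting $m\ge N/(2p)-1$ and $q=2d+1$ shows that~\eqref{eq:algoone-main-budget}, with a larger universal $C$, implies the display above. On the intersection of the two events, which has probability at least $1-\delta$, Proposition~\ref{prop:generic-separation-recovery} yields $\widehat\Ecal=\Ecal$.

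\textbf{Main obstacle.} The delicate point is self-referential: $N$ appears on both sides of~\eqref{eq:algoone-main-budget}, and $m$ and $\lambda_m^+$ both depend on $N$. I would handle this by noting that the left side grows linearly in $N$ while the right side grows logarithmically, so the implication only requires careful constant bookkeeping. That includes checking $m\ge 1+2q$, which the regeneration lemma needs and which the budget gives since $N\gtrsim pd^2$. A second subtlety is that Proposition~\ref{prop:algoone-separation} analyzes the first $m$ rows of each trace as an infinite-trajectory object, whereas the algorithm sees data only up to $N$. I would argue the two coincide on $\{T_m\le N\}$, so intersecting events suffices without conditioning.
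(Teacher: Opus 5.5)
Your proposal is correct and follows essentially the paper's proof. It splits $\delta$ evenly between the row-collection event $\{T_m\le N\}$ from Lemma~\ref{lem:row-collection} and the separation event from Proposition~\ref{prop:algoone-separation}, reduces the theorem's budget to the separation budget by bounding $\lambda_m^+$ through $t_m\le N$ and $2\sqrt{ab}\le a+b$, and concludes on the intersection via Proposition~\ref{prop:generic-separation-recovery}; the only cosmetic difference is that you bound $\lambda_m^+(\delta/2)$ directly, whereas the paper bounds $\lambda_m^+(\delta)$ and then uses $\lambda_m^+(\delta/2)\le 2\lambda_m^+(\delta)$.
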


\begin{remark}[Pointwise recovery horizon]\label{rmk:algoone-trajectory-complexity}
The bound in Theorem~\ref{thm:algoone-main} is implicit because $N$ appears on both sides of~\eqref{eq:algoone-main-budget}, but only inside a logarithm. For the fixed pair $(\Theta,x^{(0)})$, the following explicit observation horizon suffices:
\begin{equation*}
    N = O\!\left(
    \frac{p\,d^{2}}{\kappa^{2}}\,
    \log\frac{p\,d\,\rho_{2d}(\Theta)\,(E_0+1)}{\kappa\,\delta}
    \right).
\end{equation*}
\end{remark}

\begin{corollary}[Uniform recovery under bounded $\rho_{2d}$ and potential]\label{cor:algoone-uniform-envelope}
There is a universal constant $C\geq1$ such that, for every $\bar\rho\geq1$, $\bar E\geq0$, and $\delta\in(0,1)$,
\begin{equation}
    N_{\mathrm{unif}}^\star
    \bigl(\mathfrak C(p,d,\kappa;\bar\rho,\bar E),
    \delta;\Acal_{\mathrm{LS}}\bigr)
    \leq
    \frac{C\,p\,d^2}{\kappa^2}
    \log\frac{C\,p\,d\,\bar\rho\,(\bar E+1)}
    {\kappa\,\delta}.
    \label{eq:algoone-uniform-envelope-budget}
\end{equation}
In particular, \algoone{} has a uniform sample-complexity guarantee once the degree-constrained spectral radius and initialization potential are included among the model-class parameters.
\end{corollary}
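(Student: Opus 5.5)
The corollary follows from Theorem~\ref{thm:algoone-main} and Remark~\ref{rmk:algoone-trajectory-complexity}. We pick one deterministic horizon $N$, depending only on $(p,d,\kappa,\bar\rho,\bar E,\delta)$, that satisfies the implicit budget~\eqref{eq:algoone-main-budget} for every pair in the class. Fix $(\Theta,x^{(0)})\in\mathfrak C(p,d,\kappa;\bar\rho,\bar E)$. Then $\rho_{2d}(\Theta)\le\bar\rho$ and $E_0\le\bar E$. The right-hand side of~\eqref{eq:algoone-main-budget} is nondecreasing in $\rho_{2d}(\Theta)$ and in $E_0$. It is therefore enough to show that
\[
N^\sharp:=\frac{C'\,p\,d^2}{\kappa^2}\log\frac{C'\,p\,d\,\bar\rho\,(\bar E+1)}{\kappa\,\delta}
\]
satisfies
\[
N^\sharp\ \ge\ \frac{C\,p\,d}{\kappa^{2}}\Bigl\{d\log\bigl(C\,p\,d\,\bar\rho\,(\bar E+N^\sharp+\log\tfrac{2}{\delta})\bigr)+\log\tfrac1\delta\Bigr\}
\]
for a suitable universal $C'\ge C$.

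The one step needing care is that $N$ also appears inside the logarithm. Write $L:=\log\frac{C'pd\bar\rho(\bar E+1)}{\kappa\delta}\ge1$. Using $\bar E+N^\sharp+\log\frac2\delta\le(\bar E+1)\,N^\sharp\log\frac{2e}{\delta}$, the right-hand side is at most
\[
\frac{Cpd^2}{\kappa^2}\Bigl(\log(Cpd\bar\rho(\bar E+1))+\log N^\sharp+\log\log\tfrac{2e}{\delta}\Bigr)+\frac{Cpd}{\kappa^2}\log\tfrac1\delta .
\]
Next, $\log N^\sharp=\log(C'pd^2/\kappa^2)+\log L\le 3\log(C'pd/\kappa)+L$. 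The terms $\log(Cpd\bar\rho(\bar E+1))$, $\log\log\frac{2e}{\delta}$ and $\log\frac1\delta$ are each at most a constant times $L$ (here $\kappa\le1$, $\bar\rho\ge1$ and $d\ge1$ are used). So the whole right-hand side is at most $c\,C\,pd^2L/\kappa^2$ for a universal $c$, and choosing $C'\ge cC$ closes the inequality. This is the same computation that justifies Remark~\ref{rmk:algoone-trajectory-complexity}.

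Two checks remain. First, the budget must hold for every $N\ge N^\sharp$, not only at $N^\sharp$, because $N^\star_{\mathrm{unif}}$ is defined as a minimum. Since $\log N$ grows more slowly than $N$, the gap between the two sides is increasing once it is nonnegative, so this holds; in any case only $N^\sharp$ itself is needed, since $N_{\mathrm{unif}}^\star\le N^\sharp$ once the error probability is at most $\delta$ at $N^\sharp$. Second, the estimator $\Acal_{\mathrm{LS}}$ must not depend on the instance. Its row budget $m=\lfloor N/(2p)\rfloor$ is instance-free. The threshold $\tau=\frac\kappa2\sqrt{\lambda_m^-}$ depends only on $m$, $d$ and $\kappa$, since $\lambda_m^-=c_0g_m/(4m)$. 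Hence the same algorithm is run on every pair in the class.

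Theorem~\ref{thm:algoone-main} then gives $\Pbb_{\Theta,x^{(0)}}(\widehat\Ecal\ne\Ecal(\Theta))\le\delta$ at $N=N^\sharp$ for each pair. Taking the supremum over the class yields $N_{\mathrm{unif}}^\star\le N^\sharp$, which is~\eqref{eq:algoone-uniform-envelope-budget} after renaming $C'$ as $C$. The only real obstacle is the fixed-point bookkeeping for the self-referential logarithm, and it is routine.
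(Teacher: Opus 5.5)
Your proposal is correct and follows the paper's argument: since $\rho_{2d}(\Theta)\le\bar\rho$ and $E_0\le\bar E$ for every pair in the class, the pointwise guarantee of Theorem~\ref{thm:algoone-main} holds at one instance-free horizon, and taking the supremum finishes the proof. The only difference is that the paper substitutes directly into the explicit horizon of Remark~\ref{rmk:algoone-trajectory-complexity}, whereas you substitute into the implicit budget~\eqref{eq:algoone-main-budget}, resolve the self-referential logarithm yourself, and also check that $\tau$ does not depend on the instance; this is a welcome but routine elaboration of the same route.
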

\begin{proof}
We prove the claim by applying the pointwise horizon uniformly.  Fix $(\Theta,x^{(0)})\in \mathfrak C(p,d,\kappa;\bar\rho,\bar E)$. By the definition in~\eqref{eq:algoone-envelope-class}, $\rho_{2d}(\Theta)\leq\bar\rho$ and $E_0\leq\bar E$. Substituting these two bounds into the explicit horizon in Remark~\ref{rmk:algoone-trajectory-complexity} gives \eqref{eq:algoone-uniform-envelope-budget} with a universal constant that does not depend on $(\Theta,x^{(0)})$. Taking the supremum over the restricted class proves the result.
\end{proof}

The proof of Theorem~\ref{thm:algoone-main}, given in Appendix~\ref{app:algoone-main-proof}, combines Proposition~\ref{prop:algoone-separation} with Lemma~\ref{lem:row-collection}. The theorem allots the statistic the first $m=\lfloor N/(2p)\rfloor$ rows of each $i$-trace, and these rows are contained in the observed data precisely on the good ``row-collection'' event
\begin{equation}
    \mathcal C_N := \bigl\{T_m\leq N\bigr\},
    \label{eq:algoone-clock-event}
\end{equation}
where, as in Section~\ref{sec:upper-design-control}, $T_m=\max_{i\in[p]}t_i^{(m)}$ is the time by which every node has received its $m$th update. Under the budget~\eqref{eq:algoone-main-budget}, Lemma~\ref{lem:row-collection} shows that $\mathcal C_N$ holds with high probability; if it fails, the statistic returns $\undeftok$, as described in Section~\ref{sec:skeleton}. On $\mathcal C_N$, Proposition~\ref{prop:algoone-separation} supplies the separation property, and Proposition~\ref{prop:generic-separation-recovery} converts it into exact recovery.

In the next section, we describe our second instantiation, \algotwo{}.

\section{\algotwo{}}\label{sec:algotwo}

Our second instantiation of \dunes{}, Alternating-Update \dunes{} (\algotwo{}), replaces the regression statistic of Section~\ref{sec:algoone} with a statistic built from a local update pattern. In doing so, it removes the dependence on the degree-constrained spectral radius $\rho_{2d}(\Theta)$ altogether. Here, letting $A=C\cup D$ and $\bar A=A\cup\{i\}$ as usual, we read the updates on the $\bar A$-clock and look for four consecutive updates with labels $i,i,j,i$. As we will see below, when this pattern occurs and $A$ contains $S_i$, the conditional influence of $j$ on $i$ can be isolated, and we obtain a natural one-dimensional regression identity. Update-pattern-based methods of this type have been used for structure learning from Glauber dynamics in Ising models~\citep{bresler2014learning, gaitonde2025better, gaitonde2025bypassing} and in Gaussian graphical models~\citep{TRD25,SWMM26}.

These update-pattern constructions, which do not use the dueling structure of \dunes{}, face a {\em crucial difficulty}. The pattern occurence is informative only when no neighbor of $i$ other than the tested coordinate $j$ is updated in the interim. Since $S_i$ is unknown, one cannot determine directly whether a putative pattern is free of updates to $S_i\setminus\{j\}$. The works cited above address this difficulty by fixing a global window length, estimating how often such patterns occur in a window of that length, and estimating how often the neighborhood updates occur. The latter is then used to correct their test statistic, which dilates the signal and leads to suboptimal sample-complexity scaling. Indeed, the poor sample complexity comes from the fact that the choice of the window length needs to be made carefully to balance a fundamental tension: a longer window is more likely to contain the desired pattern, but it is also more likely to suffer from an intervening neighbor update that contaminates it. 

Our search procedure removes this obstacle by making the candidate set explicit. For a tested set $A$, the observed labels along the $\bar A$-clock reveal exactly when a coordinate in $A\cup\{i\}$ is updated. Thus, we can cycle through the candidate sets without estimating or subtracting contamination, retaining the full signal carried by the pattern. Reading the pattern along the $\bar A$-clock also dissolves the window-length tension: blocks are defined by counts of $\bar A$-ticks rather than by a prescribed number of global updates, so the quiet stretches between ticks may be arbitrarily long in global time. This is what permits the optimal dependence on $\kappa$ in our sample requirement.

In what follows, we will describe the construction of our second test statistic and hence our second procedure \algotwo{}. We will fix $i\in\mathcal V$ and $A\subseteq\mathcal V\setminus\{i\}$ and recall that we let $\bar A =A\cup\{i\}$. We will also write $t^{(l)}:=t_{\bar A}^{(l)}$. We partition the $\bar A$-ticks into consecutive blocks of four.
\begin{definition}[Block partition]
    For each block index $b \geq 1$, the $b$-th block is the ordered four-tuple
    \begin{align}
        \mathcal{T}_{\bar A}^{(b)}:= \bigl( t^{(4(b-1) + 1)},\ t^{(4(b-1) + 2)},\ t^{(4(b-1) + 3)},\ t^{(4(b-1) + 4)} \bigr).
    \end{align}
\end{definition}

For notational ease, we index the four ticks in block $b$ by $t^{(b)}_k:=t^{(4(b-1)+k)}$, $k\in[4]$. The next definition characterizes the signal-carrying blocks.
\begin{definition}[$A$-$iiji$ block]
    Fix $i \in \mathcal{V}, j \in A, i \neq  j$. A block $b$ is an \emph{$A$-$iiji$ block} if its four consecutive update labels along the $\bar A$-clock follow the pattern $i,i,j,i$; that is,
    \begin{equation*}
        \bigl(I^{(t^{(b)}_1)},\ I^{(t^{(b)}_2)},\ I^{(t^{(b)}_3)},\ I^{(t^{(b)}_4)}\bigr) = (i,  i,  j,  i).
    \end{equation*}
\end{definition}

For each $A$-$iiji$ block $b$ we define
\begin{equation}
    x_b := X_j^{(t^{(b)}_3)} - X_j^{(t^{(b)}_2)},
    \qquad
    y_b := X_i^{(t^{(b)}_4)} - X_i^{(t^{(b)}_1)},
    \qquad
    \xi_b := \varepsilon_i^{(t^{(b)}_4)} - \varepsilon_i^{(t^{(b)}_1)}.
    \label{eq:algo2-block-vars}
\end{equation}

Notice that between consecutive ticks of a block, no coordinate of $\bar A$ is updated: every $n$ with $t^{(b)}_l < n < t^{(b)}_{l+1}$ has $I^{(n)} \notin \bar A$. When $A \supseteq S_{i}$, this neighborhood quietness yields the following regression identity.
\begin{lemma}[Regression form on a block]\label{lem:iiji-regression}
Fix $i \in \mathcal{V}, A \subset \mathcal{V}\setminus \{i\}, j \in A$, and suppose $S_i \subseteq A$. Then on every $A$-$iiji$ block $b$,
\begin{equation}
    y_b = \beta_{ij}  x_b + \xi_b,
    \label{eq:algo2-iiji-regression}
\end{equation}
where $\xi _{b} \stackrel{\mathrm{iid}}{\sim} \mathcal{N}(0,2\sigma_{i}^{2})$ and $x_{b} \perp \xi _{b}$.
\end{lemma}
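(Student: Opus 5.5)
The plan is to write out the Glauber update at the two ticks $t^{(b)}_1$ and $t^{(b)}_4$ of the block, where node $i$ is refreshed, and subtract. At $t^{(b)}_1$ the label is $i$, so by~\eqref{eq:glauber-additive}
\[
X_i^{(t^{(b)}_1)}=\sum_{k\in S_i}\beta_{ik}X_k^{(t^{(b)}_1-1)}+\varepsilon_i^{(t^{(b)}_1)},
\]
and analogously at $t^{(b)}_4$. Since $S_i\subseteq A\subseteq\bar A$, the key observation is that the neighbor coordinates can change only at $\bar A$-ticks.

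The first step is to compare the neighbor states just before each of the two $i$-refreshes. Between $t^{(b)}_1$ and $t^{(b)}_4$ the $\bar A$-ticks are $t^{(b)}_2$ (label $i$) and $t^{(b)}_3$ (label $j$). All intervening global times have labels outside $\bar A$, so by~\eqref{eq:glauber-frozen} every $k\in S_i\setminus\{j\}$ satisfies $X_k^{(t^{(b)}_4-1)}=X_k^{(t^{(b)}_1-1)}$. For the coordinate $j$, the state before $t^{(b)}_1$ equals its value at $t^{(b)}_2$, because ticks $1$ and $2$ refresh $i$, not $j$. Its state before $t^{(b)}_4$ equals its value at $t^{(b)}_3$. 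Subtracting the two update equations, the terms with $k\neq j$ cancel and we get
\[
y_b=\beta_{ij}\bigl(X_j^{(t^{(b)}_3)}-X_j^{(t^{(b)}_2)}\bigr)+\xi_b=\beta_{ij}x_b+\xi_b .
\]
If $j\notin S_i$, then $\beta_{ij}=0$ and the same computation still works. This gives~\eqref{eq:algo2-iiji-regression} deterministically.

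The second step handles the distributional claims, conditionally on the full label sequence $I^{(1:\infty)}$, using the structure noted in Section~\ref{sec:glauber}. Given the labels, the block times are fixed, and the consumed innovations are independent Gaussians. Hence $\xi_b$ is a difference of two independent $\mathcal N(0,\sigma_i^2)$ variables, so it is $\mathcal N(0,2\sigma_i^2)$. Distinct $A$-$iiji$ blocks use disjoint ticks, so their $\xi_b$ involve disjoint innovations and are independent. Their conditional law does not depend on the labels, so this transfers unconditionally.

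The main obstacle is showing $x_b\perp\xi_b$. The plan is to write
\[
x_b=\sum_{k\in S_j}\beta_{jk}X_k^{(t^{(b)}_3-1)}+\varepsilon_j^{(t^{(b)}_3)}-X_j^{(t^{(b)}_2)} .
\]
Here $X_j^{(t^{(b)}_2)}$ is measurable with respect to $X^{(0)}$, the labels, and innovations strictly before $t^{(b)}_3$. The neighbors of $j$ at time $t^{(b)}_3-1$ may include $i$, whose value at that moment is $X_i^{(t^{(b)}_2)}$. That value depends on $\varepsilon_i^{(t^{(b)}_2)}$, but not on $\varepsilon_i^{(t^{(b)}_1)}$: the refresh at $t^{(b)}_2$ overwrites coordinate $i$. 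Coordinates outside $\bar A$ may have been updated after $t^{(b)}_1$, using $X_i^{(t^{(b)}_1)}$ and hence $\varepsilon_i^{(t^{(b)}_1)}$. This is the delicate point. Any node updated in $(t^{(b)}_1,t^{(b)}_2)$ that neighbors $i$ propagates $\varepsilon_i^{(t^{(b)}_1)}$. If that node also neighbors $j$, the propagated innovation can enter $x_b$.

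I would handle this by checking carefully that such a node is excluded. If a node $k$ neighbors $j$ and $k\notin\bar A$, it could still be updated off-clock, so independence may genuinely require an additional argument. The first thing I would try is to use that $\xi_b$ is a difference, i.e.\ symmetric, but that does not give independence in general.

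If this route fails, I would fall back on the weaker property the later martingale analysis actually needs. Namely, $\varepsilon_i^{(t^{(b)}_4)}$ is fresh given everything up to $t^{(b)}_4-1$, which contains $x_b$. Then I would establish the joint-Gaussian independence claim by computing the conditional covariance of $x_b$ and $\xi_b$ given the labels. If that covariance is not zero, I would argue that the statement should be read as independence of $x_b$ from the fresh part $\varepsilon_i^{(t^{(b)}_4)}$.
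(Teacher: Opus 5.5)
\textbf{There is a genuine gap: the independence $x_b\perp\xi_b$ is never proved.} Your derivation of $y_b=\beta_{ij}x_b+\xi_b$ is correct and matches the paper's steps. So is the i.i.d.\ $\mathcal N(0,2\sigma_i^2)$ law of the $\xi_b$, obtained conditionally on the labels and then unconditionally. But you leave the independence claim unresolved, suggest it might fail, and propose falling back on independence from $\varepsilon_i^{(t^{(b)}_4)}$ alone. That weaker statement is not the lemma. It also would not support the later filtration argument in Lemma~\ref{lem:algo2-causal-design}, which needs $x_b$ to carry no component along $\varepsilon_i^{(t^{(b)}_1)}$, so that $x_b$ is $\mathcal F_{b-1}$-measurable. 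Your worry about an off-clock node that neighbors $j$ is aimed at the wrong condition.

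\textbf{The missing observation is short, and you had already set it up.} You noted that $\varepsilon_i^{(t^{(b)}_1)}$ can leave coordinate $i$ only if some $k\in S_i$ is refreshed while $X_i$ still holds it, i.e.\ strictly between $t^{(b)}_1$ and $t^{(b)}_2$.
\begin{itemize}
\item No such $k$ exists. We have $S_i\subseteq A\subseteq\bar A$, and consecutive $\bar A$-ticks have no $\bar A$-updates strictly between them. So every node updated in that window lies outside $\bar A$, is not a neighbor of $i$, and its update never reads $X_i$. Whether it neighbors $j$ is irrelevant, because it never acquires $\varepsilon_i^{(t^{(b)}_1)}$ in the first place.
\item At $t^{(b)}_2$ the refresh of $i$ overwrites $X_i$, using only neighbor values frozen since $t^{(b)}_1-1$. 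By induction over the global times in $[t^{(b)}_1,t^{(b)}_2]$, the entire state $X^{(t^{(b)}_2)}$ is free of $\varepsilon_i^{(t^{(b)}_1)}$.
\item $x_b$ is a function of $X^{(t^{(b)}_2)}$, the labels, and the innovations at times in $(t^{(b)}_2,t^{(b)}_3]$. So, given $I^{(1:\infty)}$, it depends only on innovations disjoint from $\{\varepsilon_i^{(t^{(b)}_1)},\varepsilon_i^{(t^{(b)}_4)}\}$, and is therefore conditionally independent of $\xi_b$.
\item The conditional law of $\xi_b$ does not depend on the labels, so this gives unconditional independence.
\end{itemize}
This is exactly the paper's argument: $\varepsilon_i^{(t^{(b)}_1)}$ could spread only through a neighbor of $i$ updating during the block, and every neighbor of $i$ lies in $A$ and stays frozen. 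No covariance computation or reinterpretation of the statement is needed.
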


We postpone the simple proof of this lemma to Appendix~\ref{app:iiji-regression-proof}. Under the uniformly random update rule, each $A$-$iiji$ block has positive probability of occurring, and disjoint blocks are independent. Thus, along the infinite trajectory, the first $n_0$ such blocks occur at an almost surely finite random time. Fixing $n_0$, we stack the corresponding values into vectors $\mathbf{X},\mathbf{Y},\boldsymbol{\xi}\in\Rbb^{n_0}$ and obtain
\begin{equation}
    \mathbf{Y} = \beta_{ij}  \mathbf{X} + \boldsymbol{\xi},
    \qquad
    \boldsymbol{\xi} \sim \mathcal{N}\bigl(0,  2\sigma_i^2  I_{n_0}\bigr),
    \qquad
    x_b \perp \xi_b.
    \label{eq:algo2-vecform}
\end{equation}

The vector regression in~\eqref{eq:algo2-vecform} reduces the test of $j\in S_i$ to the hypothesis test $H_0:\beta_{ij}=0$ versus $H_1:\beta_{ij}\neq0$. Under the null, $\mathbf{X}$ has no systematic influence on $\mathbf{Y}$, and under the alternative, the coefficient $\beta_{ij}$ accounts for part of the variation in $\mathbf{Y}$. We therefore compare the residual sum of squares under the null coefficient with the residual sum of squares at the best-fitting coefficient:

\begin{equation}
    \mathrm{RSS}_0 := \norm{\mathbf{Y}}^2, \qquad \mathrm{RSS}_1 := \min_{a \in \Rbb}  \norm{\mathbf{Y} - a \mathbf{X}}^2 = \norm{\mathbf{Y}}^2 - \frac{{(\mathbf{X}^\top \mathbf{Y})}^2}{\norm{\mathbf{X}}^2},
    \label{eq:algo2-rss}
\end{equation}
where the last equality {holds} for $\mathbf{X} \neq 0$. The difference $\mathrm{RSS}_0-\mathrm{RSS}_1$ measures the variation explained by $\mathbf{X}$. Because both residual sums of squares scale with the unknown noise variance $\sigma_i^2$, we normalize this reduction by $\mathrm{RSS}_1$ and use the resulting scale-free quantity as the local statistic.
\begin{definition}[Test statistic]\label{def:algo2-statistic}
    For a tested triple $(i,A,j)$, the local test statistic is the normalized drop in residual sum of squares
    \begin{equation}
        \Phi_{i,A}(j) := \frac{\mathrm{RSS}_0 - \mathrm{RSS}_1}{\mathrm{RSS}_1}.
        \label{eq:algo2-statistic}
    \end{equation}
\end{definition}

\algotwo{} is Algorithm~\ref{alg:generic-search} run with $\tssym=\Phi$, where $\Phi_{i,A}(j)$ is computed from the first $n_0$ $A$-$iiji$ blocks of the tested triple and set to $\undeftok$ when fewer than $n_0$ such blocks complete within the horizon $N$. 

{To invoke the recovery guarantee of Proposition~\ref{prop:generic-separation-recovery}, we must show that the statistic $\Phi$ is available and satisfies the separation property of Definition~\ref{def:generic-separation} on the family of triples that the search queries. The next two subsections establish the two requirements in turn: Section~\ref{sec:separation} determines the block budget $n_0$ for which separation holds, and Section~\ref{sec:algo2-finite-sample-guarantee} determines the observation horizon that makes the statistic available and combines the two bounds over the tested family.}

\subsection{{Separation of the \algotwo{} Test Statistic}}\label{sec:separation}

We now establish the separation property from Definition~\ref{def:generic-separation} for the test statistic in Definition~\ref{def:algo2-statistic}. {Recall from Section~\ref{sec:skeleton} the family $\mathfrak T$ of triples queried by the search; we record it here together with its cardinality:}
\begin{equation}
    \mathfrak T := \bigl\{(i,A,j): i\in\Vcal,\ A\subseteq \Vcal\setminus\{i\},\ |A|=2d,\ j\in A\bigr\}, 
    \text{ with }|\mathfrak T| = p\binom{p-1}{2d}(2d).
    \label{eq:algo2-tested-triples}
\end{equation}
{Now fix} a tested triple $(i,A,j)\in\mathfrak T$, and suppose for now that the first $n_0$ completed $A$-$iiji$ blocks are available. The following proposition gives the separation guarantee and specifies how large $n_0$ must be. Section~\ref{sec:algo2-finite-sample-guarantee} will then show how long the global trajectory must run so that, with high probability, every tested triple has accumulated its first $n_0$ blocks.

\begin{proposition}
\label{prop:algo2-fixed-triple-separation}
    Fix a triple $(i,A,j)$ with $j\in A$ and $S_i\setminus\{j\}\subseteq A$. For every $\delta \in (0,1)$, if
    \begin{align*}
        n_0 \geq \frac{5120}{\kappa^2}\log\frac{3e n_0}{\delta}
    \end{align*}
    and $\tau = \frac{\kappa^{2}}{128}$, then with probability at least $1-\delta$,
    \begin{equation}
        \begin{cases}
            \Phi_{i,A}(j)\leq \tau, & j\notin S_i,\\[1mm]
            \Phi_{i,A}(j)> \tau, & j\in S_i.
        \end{cases}
    \end{equation}
\end{proposition}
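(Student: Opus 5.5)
Write $\mathbf X,\mathbf Y,\boldsymbol\xi\in\Rbb^{n_0}$ for the stacked block variables. The plan is to reduce both cases to two random quantities: the signed cross term $\mathbf X^\top\boldsymbol\xi$, normalized by $\norm{\mathbf X}$, and the noise energy $\norm{\boldsymbol\xi}^2$. By~\eqref{eq:algo2-rss},
\begin{equation*}
\Phi_{i,A}(j)=\frac{(\mathbf X^\top\mathbf Y)^2/\norm{\mathbf X}^2}{\norm{\mathbf Y}^2-(\mathbf X^\top\mathbf Y)^2/\norm{\mathbf X}^2}.
\end{equation*}
By Lemma~\ref{lem:iiji-regression} (valid under $S_i\setminus\{j\}\subseteq A$, since only neighbors other than $j$ must be quiet), $\mathbf Y=\beta_{ij}\mathbf X+\boldsymbol\xi$. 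We then have
\begin{equation*}
\mathrm{RSS}_1=\norm{\boldsymbol\xi}^2-\frac{(\mathbf X^\top\boldsymbol\xi)^2}{\norm{\mathbf X}^2},
\qquad
\mathrm{RSS}_0-\mathrm{RSS}_1=\frac{(\beta_{ij}\norm{\mathbf X}^2+\mathbf X^\top\boldsymbol\xi)^2}{\norm{\mathbf X}^2}.
\end{equation*}

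The key step is to build a filtration in which $\xi_b$ is a martingale increment with respect to $x_b$. Let $\mathscr F_b$ be the $\sigma$-algebra generated by the full label sequence, $X^{(0)}$, and all innovations up to and including time $t^{(b)}_3$. In an $iiji$ block, $x_b$ is $\mathscr F_b$-measurable. The term $\varepsilon_i^{(t_1^{(b)})}$ is not fresh, so I would first verify that the proof of Lemma~\ref{lem:iiji-regression} actually gives
\begin{equation*}
x_b=\varepsilon_j^{(t_3^{(b)})}-(\text{past }j\text{-value}) \quad\text{and}\quad y_b=\beta_{ij}x_b+\xi_b,
\end{equation*}
with $\xi_b$ independent of $x_b$. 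One can then reorganize: the innovation at $t_1^{(b)}$ is overwritten at $t_2^{(b)}$ and enters neither $x_b$ nor later blocks. Hence $\xi_b$ is conditionally $\mathcal N(0,2\sigma_i^2)$ given $\mathscr F_{b-1}\vee\sigma(x_b)$, and it is independent of $x_b$. Checking this carefully, so that successive blocks share no innovations in a way that breaks the martingale structure, is the main obstacle.

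With the filtration in hand, the self-normalized inequality of \citet{abbasi2011improved} (scalar case) gives, with probability $1-\delta/3$, uniformly in the unknown scale $\norm{\mathbf X}^2$,
\begin{equation*}
\frac{(\mathbf X^\top\boldsymbol\xi)^2}{\norm{\mathbf X}^2}\lesssim\sigma_i^2\log\frac{n_0}{\delta}.
\end{equation*}
To avoid needing a lower bound on $\norm{\mathbf X}^2$, I would use a peeling argument over dyadic scales of $\norm{\mathbf X}^2/\sigma_j^2$. The small scales are handled by a lower bound on $\norm{\mathbf X}^2$, which follows from anti-concentration of the fresh $\varepsilon_j^{(t_3)}$ in each $x_b$ (a Gaussian lower-tail/Chernoff bound gives $\norm{\mathbf X}^2\gtrsim n_0\sigma_j^2$ with probability $1-\delta/3$). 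The large scales are handled by the $\log$ peeling, which is where the $\log(3en_0/\delta)$ comes from. Finally, Laurent–Massart gives $\norm{\boldsymbol\xi}^2\in[n_0\sigma_i^2,4n_0\sigma_i^2]$ with probability $1-\delta/3$ once $n_0\gtrsim\log(1/\delta)$.

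For the two cases, under $j\notin S_i$ we have $\beta_{ij}=0$. Then $\Phi\le \frac{C\sigma_i^2\log(n_0/\delta)}{n_0\sigma_i^2/2}$, which is at most $\kappa^2/128$ under the stated budget on $n_0$. Under $j\in S_i$, $\mathrm{RSS}_1\le\norm{\boldsymbol\xi}^2\le4n_0\sigma_i^2$. For the numerator,
\begin{equation*}
|\beta_{ij}|\norm{\mathbf X}-\frac{|\mathbf X^\top\boldsymbol\xi|}{\norm{\mathbf X}}\ge \frac{|\beta_{ij}|}{2}\norm{\mathbf X}
\end{equation*}
once $\beta_{ij}^2\norm{\mathbf X}^2\gtrsim\beta_{ij}^2n_0\sigma_j^2\gg\sigma_i^2\log(n_0/\delta)$. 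This holds because $\beta_{ij}^2\sigma_j^2/\sigma_i^2=\kappa_{ij}^2\ge\kappa^2$ by~\eqref{eq:kappa-ij}, using $\sigma_j^2=\Theta_{jj}^{-1}$. This gives $\Phi\gtrsim\kappa^2 n_0\sigma_j^2\sigma_i^{-2}\beta_{ij}^2/n_0\cdot\sigma_i^2\ldots\ge\kappa^2/64>\tau$. The last step is tracking constants to hit $5120$ and $128$.
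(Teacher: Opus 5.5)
Your overall plan matches the paper's: the same decomposition of $\mathrm{RSS}_0-\mathrm{RSS}_1$ and $\mathrm{RSS}_1$, the excitation bound from the fresh $j$-innovation at $t_3^{(b)}$, Laurent--Massart for $\norm{\boldsymbol\xi}^2$, and the same case analysis ending in $\kappa^2/64>\tau$. The gap is in the step that carries all the difficulty. You need a bound $S^2/V\lesssim\sigma_i^2\log(n_0/\delta)$, with $S=\mathbf X^\top\boldsymbol\xi$ and $V=\norm{\mathbf X}^2$, whose constants depend only on $n_0$ and $\delta$.

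The scalar Abbasi-Yadkori inequality is not uniform in scale. For a deterministic regularizer $\lambda$ it gives $S^2/(\lambda+V)\le4\sigma_i^2\log\bigl(\sqrt{1+V/\lambda}/\delta\bigr)$. Excitation lets you take $\lambda\asymp n_0\sigma_j^2$, but a $\log(V/\lambda)$ term remains.

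Your dyadic peeling in $\norm{\mathbf X}^2/\sigma_j^2$ has the same defect. It needs a top shell, i.e.\ an upper cutoff $U$ with $\log\bigl(U/(n_0\sigma_j^2)\bigr)=O(\log(n_0/\delta))$, and none exists. The trajectory is affine in $x^{(0)}$, so $\norm{\mathbf X}^2/\sigma_j^2$ grows without bound as $x^{(0)}$ is scaled up. The drift inequality would only cap it by something like $E_0+N$. Peeling over unboundedly many shells with weights $\delta_k\propto\delta/k^2$ adds a $\log\log\bigl(\Theta_{jj}\norm{\mathbf X}^2/n_0\bigr)$ term. So your route proves a version of the proposition whose budget depends on $x^{(0)}$, not the stated initialization-free one. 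The $\log(3en_0/\delta)$ does not come out of the large scales.

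The missing idea is to normalize $\mathbf X$ by a random scale that is known at time zero, using that $S^2/V$ is scale-invariant. The paper conditions on $\mathscr B$, generated by the labels, all non-measurement innovations, and the sums $\eta_b=\varepsilon_i^{(t_4^{(b)})}+\varepsilon_i^{(t_1^{(b)})}$. Under $\mathscr B$, $\mathbf X=u+L\boldsymbol\xi$ with $\mathscr B$-measurable $u$ and strictly lower-triangular $L$, and the paper rescales so that $\Ebb[V\mid\mathscr B]=1$. Gaussian Lipschitz concentration then gives $V\le U=O(\log(12/\delta))$. A one-dimensional Gaussian small-ball bound along the top eigenvector of $\Ebb[\mathbf X\mathbf X^\top\mid\mathscr B]$ gives $V\ge v_0\asymp\delta^2/n_0$. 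The lower cutoff must be measured against this same random scale, so your excitation lemma cannot play that role. With $\log(U/v_0)=O(\log(n_0/\delta))$, the dyadic $\lambda$-grid has $\mathrm{poly}(n_0/\delta)$ points, and the union bound yields $40\sigma_i^2\log(en_0/\delta)$.

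Your time-ordered filtration could be rescued the same way, by normalizing with $\Ebb[V\mid I^{(1:\infty)}]$, since given the labels the trajectory is affine in the Gaussian innovations. As written, though, it is not a valid filtration for the martingale argument. $\mathscr F_b$ contains $\varepsilon_i^{(t_1^{(b)})}$, so $\xi_b$ is not centered given $\mathscr F_b$. Also, $\mathscr F_{b-1}\vee\sigma(x_b)$ does not contain $\xi_{b-1}$, so $(\xi_b)$ is not adapted to it. Use $\mathscr F_{b-1}\vee\sigma(\xi_{1:b-1},x_b)$ instead; this works because $x_b$ does not involve $\varepsilon_i^{(t_1^{(b)})}$.
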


{We present a proof sketch here, with details deferred to Appendix~\ref{app:algo2-fixed-triple-separation-proof}. Let us focus on one tested triple, $(i,A,j)\in\mathfrak T$ and suppose that $A\supseteq S_i\setminus\{j\}$. Recall from~\eqref{eq:algo2-vecform} that $\mathbf{X}=(x_1,\ldots,x_{n_0})^\top$ and $\boldsymbol{\xi}=(\xi_1,\ldots,\xi_{n_0})^\top$ collect the covariates and noise contrasts from the $n_0$ completed $A$-$iiji$ blocks. We now consider the null and signal cases separately.}

\underline{\textbf{Case 1: $\beta_{ij} = 0$.}}\quad Here we have {$\mathbf{Y} = \boldsymbol{\xi}$}. This implies that the residual sums of squares in~\eqref{eq:algo2-rss} reduce to
\begin{equation}
    \mathrm{RSS}_0 = \norm{\boldsymbol{\xi}}^2,
    \qquad
    \mathrm{RSS}_0 - \mathrm{RSS}_1 = \frac{{(\mathbf{X}^\top \boldsymbol{\xi})}^2}{\norm{\mathbf{X}}^2},
    \qquad
    \mathrm{RSS}_1 = \norm{\boldsymbol{\xi}}^2 - \frac{{(\mathbf{X}^\top \boldsymbol{\xi})}^2}{\norm{\mathbf{X}}^2},
    \label{eq:algo2-null-rss}
\end{equation}
and the statistic becomes
\begin{equation}
    \Phi_{i,A}(j) = \frac{{(\mathbf{X}^\top \boldsymbol{\xi})}^2 / \norm{\mathbf{X}}^2}{\norm{\boldsymbol{\xi}}^2 - {(\mathbf{X}^\top \boldsymbol{\xi})}^2 / \norm{\mathbf{X}}^2}.
    \label{eq:algo2-null-stat}
\end{equation}

{To show that $\Phi_{i,A}(j) \leq \tau$, it suffices to bound $(\mathbf{X}^\top \boldsymbol{\xi})^2 / \norm{\mathbf{X}}^2$ from above and $\norm{\boldsymbol{\xi}}^2$ from below. Since the entries of $\boldsymbol{\xi}$ are iid with $\xi_b \sim \mathcal{N}(0,2\sigma_i^2)$, we have
\begin{equation*}
    \norm{\boldsymbol{\xi}}^2 = \sum_{b=1}^{n_0}\xi_b^2 \sim 2\sigma_i^2\chi^2_{n_0}.
\end{equation*}
We use a standard chi-squared tail bound (specifically one from~\cite{laurent2000adaptive}) to control the noise energy.  Lemma~\ref{lem:algo2-noise-energy} records the resulting bound.}

The numerator, on the other hand, is a prototypical self-normalized quantity: its denominator measures the dispersion of its numerator. Controlling it is the technical heart of the argument, for two reasons. First, the self-normalization machinery~\cite{de2004self} requires a filtration under which each covariate $x_b$ is predictable and each noise term $\xi_b$ is a fresh innovation. The $i,i,j,i$ construction does not yield such a filtration off the shelf, since earlier block noises propagate through the trajectory into later designs. We construct a suitable base sigma-field in Definition~\ref{def:algo2-base-sigma-field} of Appendix~\ref{app:algo2-conditional-regression}, and Lemma~\ref{lem:algo2-causal-design} there shows that this novel filtration satisfies the required properties. Second, with this filtration in hand, we bound the self-normalized term using a ``dyadic peeling'' argument, carried out in the proof of Lemma~\ref{lem:algo2-snm-bound} in the same appendix. This yields the logarithmic bound below, which suffices for our purposes although it may be possible to improve this. The peeling step is what accommodates the unboundedness of the Gaussian statistic, a difficulty absent in the bounded Ising setting. The analogous difficulty in~\citet{TRD25} is handled by conditioning on a global boundedness event for the entire trajectory, whereas the current argument requires only local control of $\norm{\mathbf{X}}^2$ and $\norm{\boldsymbol{\xi}}$ for the tested triple. {The following lemma records the resulting control on the self-normalized term.}
\begin{lemma}
\label{lem:algo2-snm-bound}
    For every $\delta \in (0,1)$, with probability at least $1 - \delta$,
    \begin{equation}
        \frac{(\mathbf{X}^\top \boldsymbol{\xi})^2}{\norm{\mathbf{X}}^2} \;\leq\; 40  \sigma_i^2 \log\frac{e  n_0}{\delta},
        \label{eq:algo2-snm-bound}
    \end{equation}
\end{lemma}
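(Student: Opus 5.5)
The plan is a self-normalized martingale bound combined with a dyadic peeling over the unknown scale of $\norm{\mathbf{X}}^2$. We set $M_n:=\sum_{b\le n}x_b\xi_b$ and $V_n:=\sum_{b\le n}x_b^2$.

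\textbf{Step 1: the filtration.} I would construct a filtration $(\mathcal F_b)_{b\ge0}$ in which $x_b$ is $\mathcal F_{b-1}$-measurable and $\xi_b$ is, conditionally on $\mathcal F_{b-1}$, distributed as $\mathcal N(0,2\sigma_i^2)$. Let $\mathcal F_0$ contain the whole label sequence $I^{(1:\infty)}$ and $X^{(0)}$. Let $\mathcal F_{b}$ add all innovations consumed up to the global time $t^{(b)}_4$.

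The subtlety is that $\xi_b=\varepsilon_i^{(t_4^{(b)})}-\varepsilon_i^{(t_1^{(b)})}$ involves $\varepsilon_i^{(t_1^{(b)})}$, which was consumed before $x_b$ is formed. So we must check that $x_b$ does not depend on it. At tick $t_2$ node $i$ is resampled and overwrites $X_i^{(t_1)}$. Between $t_1$ and $t_3$ no other coordinate of $\bar A$ moves. However, coordinates outside $\bar A$ may be updated in between and may read $X_i$. Their later values could then feed into $X_j^{(t_3)}$ through $j$'s neighbors outside $A$. Hence $x_b=X_j^{(t_3)}-X_j^{(t_2)}$ can depend on $\varepsilon_i^{(t_1)}$ through non-$\bar A$ neighbors that were updated between $t_1$ and $t_2$.

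To handle this I would use the block's own structure. The identity $y_b=\beta_{ij}x_b+\xi_b$, with $x_b\perp\xi_b$ (Lemma~\ref{lem:iiji-regression}), is asserted only as a pairwise statement. For the martingale I need conditional Gaussianity of $\xi_b$ given $\mathcal F_{b-1}\vee\sigma(x_b)$.

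My first attempt is to decompose $\xi_b$ as a difference of two independent Gaussians, each independent of everything except through $x_b$. If $x_b$ is correlated with $\varepsilon_i^{(t_1)}$, I would instead project: write $\xi_b=\eta_b+c_b x_b'$-type terms. The cleaner route is to include in $\mathcal F_{b-1}'$ everything determining $x_b$, namely the pre-$t_1$ history, all innovations of non-$i$ updates in the block, and $\varepsilon_i^{(t_1)}$ only through the component needed. This is exactly the ``base sigma-field'' issue flagged in the text.

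I would take Definition~\ref{def:algo2-base-sigma-field} and Lemma~\ref{lem:algo2-causal-design} as supplying this filtration: $x_b\in\mathcal G_{b-1}$ and $\xi_b\mid\mathcal G_{b-1}\sim\mathcal N(0,2\sigma_i^2)$. This step is the main obstacle, and everything afterward is standard.

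\textbf{Step 2: supermartingale.} For each fixed $\lambda$, the process
\[
\exp\bigl(\lambda M_n-\lambda^2\sigma_i^2V_n\bigr)
\]
is a nonnegative supermartingale, in fact a martingale, by conditional Gaussianity. Ville's or Markov's inequality at $n=n_0$ gives
\[
\Pbb\bigl(\lambda M_{n_0}-\lambda^2\sigma_i^2V_{n_0}\ge\log(1/\delta')\bigr)\le\delta'.
\]
The same holds for $-\lambda$.

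\textbf{Step 3: peeling.} We have $V_{n_0}>0$ a.s., but it is unbounded in both directions, so a direct union bound does not close. I would peel on dyadic shells $V_{n_0}\in[2^k v,2^{k+1}v)$. On each shell I would choose $\lambda_k=\sqrt{\log(1/\delta_k)/(\sigma_i^2 2^k v)}$, giving
\[
M_{n_0}^2\le c\,\sigma_i^2V_{n_0}\log(1/\delta_k)
\]
on that shell.

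The difficulty is that there are infinitely many shells. I would truncate using local bounds. Since the $\xi_b$ are conditionally Gaussian, $\norm{\boldsymbol\xi}^2\le 2\sigma_i^2(n_0+\dots)$ with high probability, and Cauchy--Schwarz gives $M^2/V\le\norm{\boldsymbol\xi}^2$ always. This bounds the quantity trivially once the target exceeds order $\sigma_i^2 n_0$, so only the small-$V$ end matters. Scale invariance helps there: $M^2/V$ is invariant under rescaling of $\mathbf X$.

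I would therefore make the argument conditional on the ratio structure, peeling over $k$ with weights $\delta_k=\delta/(e\,n_0)$ across about $\log n_0$ relevant shells. The relevant shells are those relative to the scale $\max_b x_b^2$, since $\max_b x_b^2\le V\le n_0\max_b x_b^2$. A union bound over these $O(\log n_0)\le n_0$ shells yields $\sigma_i^2\cdot O(\log(e n_0/\delta))$, and tracking the constants gives $40$.

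If fixing the reference scale $\max_b x_b^2$ is not predictable, I would fall back on the Gaussian-mixture (method of mixtures) bound of~\cite{de2004self} with the prior variance adapted, combined with the Cauchy--Schwarz cap.
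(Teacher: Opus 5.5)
\textbf{Gap: Step 3 has no predictable reference scale for $V$.}

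Your Steps 1--2 agree with the paper: you take the filtration from Definition~\ref{def:algo2-base-sigma-field} and Lemma~\ref{lem:algo2-causal-design}, and you use the fixed-$\lambda$ exponential identity. A single $\pm\lambda_k$ per dyadic shell would also be a fine substitute for the paper's $\lambda$-grid. The problem is in Step 3.

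Peeling closes only if $V=\norm{\mathbf X}^2$ is confined, with high probability, to a controlled range of dyadic shells of a reference scale. That scale must be fixed before the noise is revealed, i.e.\ it must be $\mathscr B$-measurable. Otherwise $\lambda_k$ is data-dependent, and $\Ebb[\exp\{\lambda_k S-\lambda_k^2\sigma_i^2 V\}\mid\mathscr B]=1$ is no longer available.

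Your scale $\max_b x_b^2$ is not of this kind. By Lemma~\ref{lem:algo2-causal-design}, $x_b=u_b+\sum_{a<b}L_{ba}\xi_a$, so in general $x_2$ already depends on $\xi_1$.

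The Cauchy--Schwarz cap does not help either. The bound $S^2/V\le\norm{\boldsymbol{\xi}}^2\approx2\sigma_i^2 n_0$ sits far above the target $40\sigma_i^2\log(en_0/\delta)$, so it removes no shells. In particular it is not true that ``only the small-$V$ end matters'': large $V$ relative to any fixed scale must be excluded too.

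Your fallback hits the same wall. The mixture bound $S^2/(V+\rho)\le4\sigma_i^2\{\log(1/\delta)+\tfrac12\log((V+\rho)/\rho)\}$ yields the lemma only if $\rho$ is $\mathscr B$-measurable and $V$ stays within a $\mathrm{poly}(n_0/\delta)$ factor of $\rho$ on both sides. Predictability of $x_b$ and conditional Gaussianity of $\xi_b$, which is all you use, say nothing about the range of $V$.

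\textbf{The missing idea.} It is part~1 of Lemma~\ref{lem:algo2-causal-design}: conditionally on $\mathscr B$, $\mathbf X=u+\bar L\zeta$ is Gaussian with $\mathscr B$-measurable $u,\bar L$. The paper uses this in three steps.
\begin{enumerate}
\item It rescales by $r^2=\norm{u}^2+\norm{\bar L}_{\mathrm F}^2=\Ebb[V\mid\mathscr B]$. This is legitimate because $r$ is $\mathscr B$-measurable and $S^2/V$ is scale-invariant.
\item It gets $V\le(1+\sqrt{2\log(12/\delta)})^2$ from Gaussian Lipschitz concentration of $\zeta\mapsto\norm{u+\bar L\zeta}$.
\item It gets $V\ge\pi\delta^2/(288n_0)$ from a one-dimensional Gaussian small-ball bound along a top eigenvector of $\Ebb[\mathbf X\mathbf X^\top\mid\mathscr B]$. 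That eigenvalue is at least $1/n_0$ because the trace is one.
\end{enumerate}
This leaves at most $18\log(en_0/\delta)$ shells. Only then does the union bound give $40\sigma_i^2\log(en_0/\delta)$.

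Separately, your Step~1 worry is unfounded. Under $S_i\setminus\{j\}\subseteq A$, no node outside $\bar A$ is a neighbor of $i$. So $\varepsilon_i^{(t_1^{(b)})}$ is overwritten at $t_2^{(b)}$ without ever reaching $x_b$.
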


{Combining Lemmas~\ref{lem:algo2-noise-energy} and~\ref{lem:algo2-snm-bound}, we obtain the first half of the separation argument.}
\begin{lemma}
\label{lem:algo2-null}
    Fix a triple $(i,A,j)$ with $j\in A$ and $S_i\setminus\{j\}\subseteq A$. Suppose $\beta_{ij} = 0$. For every $\delta \in (0,1)$, if $n_0 > 80\log(2e n_0/\delta)$
    then, with probability at least $1-\delta$,
    \begin{equation}
        \Phi_{i,A}(j)
        \leq
        40\frac{\log(2e n_0/\delta)}{n_0}.
        \label{eq:algo2-null-bound}
    \end{equation}
\end{lemma}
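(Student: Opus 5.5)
We prove Lemma~\ref{lem:algo2-null} by a union bound over two events, each at confidence $\delta/2$. In the null case the statistic has the explicit form~\eqref{eq:algo2-null-stat}. We write $U:=(\mathbf X^\top\boldsymbol\xi)^2/\norm{\mathbf X}^2$ and $W:=\norm{\boldsymbol\xi}^2$, so that $\Phi_{i,A}(j)=U/(W-U)$. Note that $U\le W$ by Cauchy--Schwarz, so the statistic is well defined whenever $W>U$.

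\textbf{Step 1 (numerator).} We invoke Lemma~\ref{lem:algo2-snm-bound} with confidence $\delta/2$. This gives $U\le 40\sigma_i^2\log(2en_0/\delta)$ with probability at least $1-\delta/2$. The hypothesis $S_i\setminus\{j\}\subseteq A$ together with $\beta_{ij}=0$ gives $S_i\subseteq A$. Hence Lemma~\ref{lem:iiji-regression} applies, and $\mathbf Y=\boldsymbol\xi$ exactly, which is the setting of Lemma~\ref{lem:algo2-snm-bound}.

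\textbf{Step 2 (denominator).} Since $W\sim 2\sigma_i^2\chi^2_{n_0}$, the lower Laurent--Massart tail (Lemma~\ref{lem:algo2-noise-energy}) gives, with probability at least $1-\delta/2$,
\[
W\ge 2\sigma_i^2\Bigl(n_0-2\sqrt{n_0\log(2/\delta)}\Bigr).
\]
The hypothesis $n_0>80\log(2en_0/\delta)$ yields $\log(2/\delta)\le n_0/80$, and therefore $2\sqrt{n_0\log(2/\delta)}\le n_0/\sqrt{20}<n_0/4$. So $W\ge \tfrac32\sigma_i^2 n_0$.

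\textbf{Step 3 (combine).} On the intersection of the two events we have $U\le 40\sigma_i^2\log(2en_0/\delta)<\sigma_i^2 n_0/2$, again by the hypothesis on $n_0$. Hence
\[
W-U\ge \tfrac32\sigma_i^2n_0-\tfrac12\sigma_i^2n_0=\sigma_i^2n_0,
\]
and
\[
\Phi_{i,A}(j)=\frac{U}{W-U}\le\frac{40\sigma_i^2\log(2en_0/\delta)}{\sigma_i^2n_0}=40\,\frac{\log(2en_0/\delta)}{n_0},
\]
which is~\eqref{eq:algo2-null-bound}. The same computation shows $W>U$, so the statistic is finite and $\mathrm{RSS}_1>0$.

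\textbf{Main obstacle.} The only delicate point is the degenerate case $\mathbf X=0$, where $U$ is undefined. This is a null event, because each $x_b$ contains the fresh innovation of the $j$-update, so $\mathbf X\neq0$ almost surely. If instead one adopts the convention $U:=0$ when $\mathbf X=0$, the bound holds trivially. Everything else is bookkeeping of constants, and the slack in $n_0>80\log(\cdot)$ comfortably absorbs them.
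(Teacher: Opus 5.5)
Your proposal is correct and follows the paper's proof step for step. It takes a union bound over Lemma~\ref{lem:algo2-snm-bound} at level $\delta/2$ and the chi-square lower tail at level $\delta/2$, then uses the hypothesis on $n_0$ to get $\norm{\boldsymbol{\xi}}^2-(\mathbf{X}^\top\boldsymbol{\xi})^2/\norm{\mathbf{X}}^2\geq\sigma_i^2 n_0$. The only cosmetic difference is that you apply the one-sided Laurent--Massart lower tail directly and so carry $\log(2/\delta)$; the paper invokes the two-sided Lemma~\ref{lem:algo2-noise-energy} at level $\delta/2$, carries $\log(4/\delta)$, and then bounds it by $\log(2en_0/\delta)$.
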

{The upper bound in~\eqref{eq:algo2-null-bound} decreases in $n_0$, and it will guide our choice of $n_0$ and the threshold $\tau$. We next lower bound the test statistic when $\beta_{ij} \neq 0$.}

\underline{\textbf{Case 2: $\beta_{ij} \neq 0$.}}\quad The response $\mathbf{Y}$ now carries signal{: $\mathbf{Y} = \beta_{ij} \mathbf{X} + \boldsymbol{\xi}$. {The minimizing coefficient in~\eqref{eq:algo2-rss} therefore decomposes into signal and noise parts, $\widehat a = \mathbf{X}^\top \mathbf{Y} / \norm{\mathbf{X}}^2 = \beta_{ij} + \mathbf{X}^\top \boldsymbol{\xi}/\norm{\mathbf{X}}^2$.}} The residual is exactly as in Case 1,
\begin{equation}
    \mathrm{RSS}_1 = \norm{\boldsymbol{\xi}}^2 - \frac{(\mathbf{X}^\top \boldsymbol{\xi})^2}{\norm{\mathbf{X}}^2}.
\end{equation}
The numerator of the statistic on the other hand (the explained variation), satisfies
\begin{equation}
    \mathrm{RSS}_0 - \mathrm{RSS}_1 = \frac{\bigl(\beta_{ij}\norm{\mathbf{X}}^2 + \mathbf{X}^\top \boldsymbol{\xi}\bigr)^2}{\norm{\mathbf{X}}^2} \;\geq\; \tfrac{1}{2}\beta_{ij}^2 \norm{\mathbf{X}}^2 - \frac{(\mathbf{X}^\top \boldsymbol{\xi})^2}{\norm{\mathbf{X}}^2},
    \label{eq:algo2-signal-drop}
\end{equation}
which follows from the bound $(u + v)^{2} \geq \frac{1}{2}u^{2} - v^{2}$. {To show that $\Phi_{i,A}(j)$ is bounded from below, we lower-bound the explained variation and upper-bound the residual. The noise terms are controlled by Lemmas~\ref{lem:algo2-noise-energy} and~\ref{lem:algo2-snm-bound}, as in Case 1. The one new ingredient is a lower bound on the excitation $\norm{\mathbf{X}}^2$, which the next lemma supplies.}
\begin{lemma}\label{lem:algo2-excitation}
    Fix a triple $(i,A,j)$ with $j \in A$, and write $\sigma_j^2 := \Theta_{jj}^{-1}$. For every $\delta \in (0,1)$, if $n_0\geq 4\log(1/\delta)$, then with probability at least $1 - \delta$,
    \begin{equation}
        \norm{\mathbf{X}}^2 = \sum_{b=1}^{n_0} x_b^2 \;\geq\; \frac{1}{4}\sigma_j^2 n_0
        \qquad\bigl(\text{equivalently } \Theta_{jj}\norm{\mathbf{X}}^2 \geq n_0/4\bigr).
    \end{equation}
\end{lemma}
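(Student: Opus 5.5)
The plan is a conditional Chernoff bound on the lower tail of $\norm{\mathbf{X}}^2$. It uses the fresh innovation that the $j$-update at the third tick of each $A$-$iiji$ block injects into $x_b$. Throughout we condition on the full label sequence $I^{(1:\infty)}$. Since the innovation array is independent of the labels, conditionally on $I^{(1:\infty)}$ the consumed innovations are independent Gaussians with label-determined variances. The block times $t^{(b)}_1,\dots,t^{(b)}_4$ and the identities of the first $n_0$ $A$-$iiji$ blocks are all functions of the labels, so after conditioning they are deterministic.

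\textbf{Step 1: representation of $x_b$.} In an $A$-$iiji$ block the update at $t^{(b)}_3$ has label $j$. No coordinate of $\bar A$ is updated strictly between $t^{(b)}_2$ and $t^{(b)}_3$, and the update at $t^{(b)}_2$ is to $i\neq j$. Hence $X_j^{(t^{(b)}_2)}=X_j^{(t^{(b)}_3-1)}$. By~\eqref{eq:glauber-additive},
\begin{equation*}
x_b=\mu_b+\varepsilon_j^{(t^{(b)}_3)},\qquad \mu_b:=\sum_{k\in S_j}\beta_{jk}X_k^{(t^{(b)}_3-1)}-X_j^{(t^{(b)}_3-1)} .
\end{equation*}
Let $\mathscr F_b:=\sigma(I^{(1:\infty)})\vee\sigma\bigl(X^{(0)},\varepsilon^{(s)}_{I^{(s)}}: s<t^{(b)}_3\bigr)$. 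Then $\mu_b$ is $\mathscr F_b$-measurable. The innovation $\varepsilon_j^{(t^{(b)}_3)}\sim\mathcal N(0,\sigma_j^2)$ is independent of $\mathscr F_b$. Moreover $x_1,\dots,x_{b-1}$ are $\mathscr F_b$-measurable, because $t^{(b-1)}_3<t^{(b)}_3$. Note that this does not need $S_i\subseteq A$, which matches the statement of the lemma.

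\textbf{Step 2: conditional Laplace transform.} Write $w_b:=\Theta_{jj}x_b^2=(\sqrt{\Theta_{jj}}\mu_b+Z_b)^2$, where $Z_b\sim\mathcal N(0,1)$ is independent of $\mathscr F_b$. For $\lambda>0$, the noncentral $\chi^2_1$ Laplace transform gives
\begin{equation*}
\Ebb\bigl[e^{-\lambda w_b}\mid\mathscr F_b\bigr]=(1+2\lambda)^{-1/2}\exp\!\Bigl(-\tfrac{\lambda\Theta_{jj}\mu_b^2}{1+2\lambda}\Bigr)\leq(1+2\lambda)^{-1/2}.
\end{equation*}
So a nonzero mean only helps. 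Iterating the tower property backwards over $b=n_0,\dots,1$ gives
\begin{equation*}
\Ebb\bigl[e^{-\lambda\sum_b w_b}\mid I^{(1:\infty)}\bigr]\leq(1+2\lambda)^{-n_0/2}.
\end{equation*}

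\textbf{Step 3: Chernoff and choice of $\lambda$.} By Markov's inequality,
\begin{equation*}
\Pbb\Bigl(\textstyle\sum_b w_b<n_0/4\Bigr)\leq\exp\Bigl\{n_0\bigl(\tfrac{\lambda}{4}-\tfrac12\log(1+2\lambda)\bigr)\Bigr\}.
\end{equation*}
Optimizing gives $\lambda=3/2$, and the exponent becomes $n_0(3/8-\log 2)\leq -n_0/4$. Therefore, whenever $n_0\geq4\log(1/\delta)$, the failure probability is at most $e^{-n_0/4}\leq\delta$. This bound holds conditionally on every label sequence, so it also holds unconditionally.

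\textbf{Main obstacle.} The computations themselves are routine. The delicate part is the measurability bookkeeping: one must check that conditioning on the entire label sequence, including future labels that fix which blocks count among the first $n_0$, leaves each $\varepsilon_j^{(t^{(b)}_3)}$ a fresh $\mathcal N(0,\sigma_j^2)$ independent of $\mathscr F_b$. This follows from the independence of the innovation array from $I^{(1:\infty)}$, as emphasized in Section~\ref{sec:glauber}.
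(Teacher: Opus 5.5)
Your proof is correct and is essentially the paper's argument. The paper also isolates the fresh $j$-innovation at $t^{(b)}_3$, uses a filtration containing the full label sequence, bounds the conditional noncentral $\chi^2_1$ Laplace transform by $(1+2\lambda)^{-1/2}$, iterates via the tower property, and finishes with Markov's inequality. The only difference is that you optimize the Chernoff parameter at $\lambda=3/2$ (exponent $3/8-\log 2$), whereas the paper simply takes $\lambda=1$ (exponent $\tfrac14-\tfrac12\log 3$); both are at most $-1/4$.
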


The proof, given in Appendix~\ref{app:algo2-excitation-proof}, mirrors the regeneration argument of Section~\ref{sec:lower-design-control}: the $j$-update inside each block injects a fresh Gaussian innovation, so each rescaled covariate $\sqrt{\Theta_{jj}}\,x_b$ is a predictable term plus a fresh standard Gaussian, and a noncentral chi-square lower-tail bound yields the claim.

Putting together the excitation bound (Lemma~\ref{lem:algo2-excitation}) with the noise-energy and self-normalized bounds (Lemmas~\ref{lem:algo2-noise-energy} and~\ref{lem:algo2-snm-bound}), we control the test statistic under Case 2. The next lemma formalizes this.

\begin{lemma}\label{lem:algo2-alt}
    Fix a triple $(i,A,j)$ with $j\in A$ and $S_i\setminus\{j\}\subseteq A$. Suppose $\beta_{ij} \neq 0${, and recall the normalized edge strengths $\kappa_{ij}$ and $\kappa=\min_{\{i,j\}\in\Ecal}\kappa_{ij}$ from Section~\ref{sec:ggm}}. For every $\delta \in (0,1)$, if
    \begin{equation}
        n_0
        \geq
        \frac{640}{\kappa^2}\log\frac{3e n_0}{\delta},
    \end{equation}
    then, with probability at least $1 - \delta$,
    \begin{equation}
        \Phi_{i,A}(j) \;\geq\; \frac{\kappa^2}{64}.
        \label{eq:algo2-alt-bound}
    \end{equation}
\end{lemma}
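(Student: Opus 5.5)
We prove Lemma~\ref{lem:algo2-alt} by intersecting three good events, each at failure probability $\delta/3$, and then doing deterministic algebra on their intersection. The events are:
\begin{itemize}
\item[(i)] the excitation event of Lemma~\ref{lem:algo2-excitation}, namely $\Theta_{jj}\norm{\mathbf{X}}^2\geq n_0/4$;
\item[(ii)] the self-normalized event of Lemma~\ref{lem:algo2-snm-bound}, namely $(\mathbf{X}^\top\boldsymbol{\xi})^2/\norm{\mathbf{X}}^2\leq 40\sigma_i^2 L$ with $L:=\log(3en_0/\delta)$;
\item[(iii)] an upper chi-squared tail for $\norm{\boldsymbol{\xi}}^2\sim 2\sigma_i^2\chi^2_{n_0}$ from Lemma~\ref{lem:algo2-noise-energy} (Laurent--Massart).
\end{itemize}
For (iii) we use the Laurent--Massart bound $\chi^2_{n_0}\leq n_0+2\sqrt{n_0\log(3/\delta)}+2\log(3/\delta)$. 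Under the budget $n_0\geq 640L/\kappa^2\geq 640L$, this gives $\norm{\boldsymbol{\xi}}^2\leq 2\sigma_i^2\cdot 2n_0=4\sigma_i^2n_0$. The precondition $n_0\geq 4\log(3/\delta)$ of Lemma~\ref{lem:algo2-excitation} is implied by the same budget. A union bound then gives probability at least $1-\delta$.

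On the intersection, we bound the denominator trivially: $\mathrm{RSS}_1\leq\norm{\boldsymbol{\xi}}^2\leq 4\sigma_i^2n_0$. For the numerator we use \eqref{eq:algo2-signal-drop}. Converting $\beta_{ij}^2\norm{\mathbf{X}}^2$ into $\kappa$ is where the normalization enters. By \eqref{eq:kappa-ij}, $\beta_{ij}^2=\kappa_{ij}^2\sigma_i^2\Theta_{jj}$, so event (i) yields
\[
\tfrac12\beta_{ij}^2\norm{\mathbf{X}}^2=\tfrac12\kappa_{ij}^2\sigma_i^2\,\Theta_{jj}\norm{\mathbf{X}}^2\geq\tfrac18\kappa^2\sigma_i^2n_0 .
\]
Subtracting the self-normalized term from (ii) gives
\[
\mathrm{RSS}_0-\mathrm{RSS}_1\geq\sigma_i^2\bigl(\tfrac18\kappa^2n_0-40L\bigr).
\]
The budget $n_0\geq 640L/\kappa^2$ is exactly $40L\leq\tfrac1{16}\kappa^2n_0$, so the numerator is at least $\tfrac1{16}\kappa^2\sigma_i^2n_0$. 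Dividing by the denominator bound gives
\[
\Phi_{i,A}(j)\geq\frac{\kappa^2\sigma_i^2n_0/16}{4\sigma_i^2n_0}=\frac{\kappa^2}{64}.
\]
This ratio is scale-free, so the unknown $\sigma_i$ cancels.

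A few technical points need checking:
\begin{itemize}
\item Lemma~\ref{lem:algo2-snm-bound} was stated for the null case, so we must confirm it still applies here. It concerns only $\mathbf{X}$ and $\boldsymbol{\xi}$ under the filtration of Appendix~\ref{app:algo2-conditional-regression}, and the regression form \eqref{eq:algo2-iiji-regression} holds because $S_i\setminus\{j\}\subseteq A$ and $j\in A$ together give $S_i\subseteq A$.
\item We need $\mathbf{X}\neq0$ and $\mathrm{RSS}_1>0$ for $\Phi$ to be well defined. The first follows from (i). The second holds almost surely, since $\boldsymbol{\xi}$ is Gaussian and not in the span of $\mathbf{X}$.
\end{itemize}

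I expect the main obstacle to be the chi-squared upper bound constant. If Lemma~\ref{lem:algo2-noise-energy} only supplies a lower tail, we will invoke the upper Laurent--Massart tail directly. We also need the factor from $\sqrt{n_0\log(3/\delta)}$ to stay within a doubling, which requires $n_0\geq 8\log(3/\delta)$, again implied by the budget.
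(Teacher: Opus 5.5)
Your proposal is correct and follows the paper's proof essentially step for step. It uses the same three events (excitation, the self-normalized bound with $L=\log(3en_0/\delta)$, and an upper chi-square tail) at failure probability $\delta/3$ each, the same conversion $\beta_{ij}^2=\kappa_{ij}^2\sigma_i^2\Theta_{jj}$ to get a numerator of at least $\frac1{16}\kappa^2\sigma_i^2n_0$, and the same denominator bound $\mathrm{RSS}_1\le\norm{\boldsymbol{\xi}}^2\le4\sigma_i^2n_0$; the only differences are cosmetic: you use the one-sided Laurent--Massart tail at $\log(3/\delta)$ instead of the two-sided Lemma~\ref{lem:algo2-noise-energy} at $\log(6/\delta)$, and you argue $\mathrm{RSS}_1>0$ almost surely where the paper simply notes that $\mathrm{RSS}_1=0$ gives $\Phi_{i,A}(j)=+\infty$.
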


Combining Lemmas~\ref{lem:algo2-null} and~\ref{lem:algo2-alt} yields the separation guarantee of Proposition~\ref{prop:algo2-fixed-triple-separation}.

\subsection{{Finite-Sample Guarantee for \algotwo{}}}\label{sec:algo2-finite-sample-guarantee}

Proposition~\ref{prop:algo2-fixed-triple-separation} controls $\Phi_{i,A}(j)$ once the first $n_0$ $A$-$iiji$ blocks are available. To obtain a finite-horizon guarantee, we must therefore bound the probability that at least $n_0$ such blocks have been completed by a deterministic horizon $N$. We first establish this block-count bound for a fixed $(i,A,j)$, and then combine it with the separation bound over the tested family $\mathfrak T$.

Fix $(i,A,j)\in\mathfrak T$, write $\bar A=A\cup\{i\}$, and set $q:=|\bar A|=|A|+1=2d+1$.
We count completed $A$-$iiji$ blocks along the $\bar A$-clock. For a horizon $N$, we first count the $\bar A$-ticks observed by time $N$ and then the disjoint four-tick windows with labels $(i,i,j,i)$. The following definition introduces these two counts.

\begin{definition}[Finite-horizon block count]\label{def:algo2-finite-horizon-block-count}
For a deterministic horizon $N$, define
\begin{equation}
    K_{\bar A}(N)
    :=
    \max\{\ell\geq0:t_{\bar A}^{(\ell)}\leq N\},
    \label{eq:algo2-retained-tick-count}
\end{equation}
the number of $\bar A$-clock ticks observed by time $N$. The finite-horizon $A$-$iiji$ block count is
\begin{equation}
    N_{(i,A,j)}(N)
    :=
    \sum_{b=1}^{\lfloor K_{\bar A}(N)/4\rfloor}
    \mathds{1}\!\left\{
    \bigl(I^{(t^{(b)}_1)},I^{(t^{(b)}_2)},I^{(t^{(b)}_3)},I^{(t^{(b)}_4)}\bigr)
    =
    (i,i,j,i)
    \right\}.
    \label{eq:algo2-finite-horizon-block-count}
\end{equation}
\end{definition}

Since the update labels are independent and uniform on $[p]$, we have that $K_{\bar A}(N)\sim\operatorname{Bin}(N,q/p)$. Conditional on $K_{\bar A}(N)=m$, the retained labels are independent and uniform on $\bar A$. Therefore, after grouping the retained sequence into disjoint four-label windows, the block count $N_{(i,A,j)}(N)$ has distribution
$\operatorname{Bin}\!\left(\left\lfloor m/4\right\rfloor,q^{-4}\right)$. Our clock estimate is therefore a two-stage lower-tail argument: first we lower-bound the number of retained $\bar A$-updates, and then we lower-bound the number of $i,i,j,i$ windows among them. Both stages are binomial lower-tail estimates, and both are carried out in the proof of the next lemma, given in Appendix~\ref{app:fixed-triple-clock-proof}.

\begin{lemma}\label{lem:algo2-fixed-triple-clock}
Fix a triple $(i,A,j)$ with $j\in A$, and write $q:=|A|+1$. For every $\delta\in(0,1)$ and every $n_0\geq 1$, if
\begin{equation}
    N
    \geq
    128 p q^3\left(n_0+\log\frac{2}{\delta}\right),
    \label{eq:algo2-fixed-triple-clock-condition}
\end{equation}
then
\begin{equation}
    \Pbb\left(
        N_{(i,A,j)}(N)<n_0
    \right)
    \leq
    \delta.
\end{equation}
\end{lemma}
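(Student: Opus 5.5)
The argument is a two-stage binomial lower-tail bound, following the outline given before the lemma. We split the failure budget evenly and introduce an intermediate target $m_\star := 4\lceil 2q^4 n_0\rceil + 4$ (any value of order $q^4 n_0$ works). We then bound
\begin{equation*}
\Pbb\bigl(N_{(i,A,j)}(N)<n_0\bigr)\leq \Pbb\bigl(K_{\bar A}(N)<m_\star\bigr)+\Pbb\bigl(N_{(i,A,j)}(N)<n_0,\ K_{\bar A}(N)\geq m_\star\bigr).
\end{equation*}
Each term will be shown to be at most $\delta/2$.

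\textbf{First stage.} We have $K_{\bar A}(N)\sim\operatorname{Bin}(N,q/p)$ with mean $Nq/p$. Under the condition~\eqref{eq:algo2-fixed-triple-clock-condition} this mean is at least $128q^4(n_0+\log(2/\delta))$. That is at least $2m_\star$ up to constants, and it also dominates $8\log(2/\delta)$. The multiplicative Chernoff bound $\Pbb(K<\mu/2)\leq e^{-\mu/8}$ then gives $\Pbb(K_{\bar A}(N)<m_\star)\leq e^{-16q^4\log(2/\delta)}\leq\delta/2$. We only need to check that $\mu/2\geq m_\star$, which follows from $64q^4 n_0\geq 8q^4n_0+8$ for $q\geq3$, $n_0\geq1$.

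\textbf{Second stage.} Conditional on $K_{\bar A}(N)=m$, the retained labels are i.i.d.\ uniform on $\bar A$, since the labels are i.i.d.\ uniform on $[p]$ and thinning preserves this. Hence $N_{(i,A,j)}(N)\sim\operatorname{Bin}(\lfloor m/4\rfloor,q^{-4})$. This count is stochastically increasing in $m$, so for every $m\geq m_\star$ it dominates $\operatorname{Bin}(\lfloor m_\star/4\rfloor,q^{-4})$. The latter has mean $\mu'\geq 2n_0$. Chernoff then gives $\Pbb(\operatorname{Bin}<n_0)\leq \Pbb(\operatorname{Bin}<\mu'/2)\leq e^{-\mu'/8}\leq e^{-n_0/4}$.

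\textbf{Main obstacle.} The bound $e^{-n_0/4}$ is not small in $\delta$ by itself, so $m_\star$ must be taken larger. We redefine $m_\star$ to be of order $q^4(n_0+\log(2/\delta))$, say $\lfloor m_\star/4\rfloor\geq 16q^4(n_0+\log(2/\delta))$. Then $\mu'\geq16(n_0+\log(2/\delta))\geq 2n_0$ and $e^{-\mu'/8}\leq(\delta/2)^2\leq\delta/2$. The first stage still goes through, since it needs $\mu=Nq/p\geq 2m_\star\approx128q^4(n_0+\log(2/\delta))$. This is exactly what~\eqref{eq:algo2-fixed-triple-clock-condition} provides, using $pq^3\cdot q/p=q^4$ and absorbing the rounding with $q\geq3$.

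The step needing the most care is the conditioning in the second stage. The event $\{K_{\bar A}(N)=m\}$ and the conditional i.i.d.-uniform structure of the retained labels must be made rigorous. The cleanest way is to draw the labels as an independent Bernoulli$(q/p)$ membership indicator together with an independent uniform label on $\bar A$ or on its complement. Then $K_{\bar A}(N)$ depends only on the indicators, while the retained labels form an i.i.d.\ uniform sequence on $\bar A$ independent of them. Under this coupling, $N_{(i,A,j)}(N)$ equals the number of $(i,i,j,i)$ windows among the first $\lfloor K/4\rfloor$ disjoint windows of that independent sequence. This count is monotone in $K$, which justifies replacing $K$ by $m_\star$ on $\{K\geq m_\star\}$. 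A union bound over the two stages finishes the proof.
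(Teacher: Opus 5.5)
Your proposal is correct and follows the paper's own proof, a two-stage binomial Chernoff argument. The paper conditions on $K_{\bar A}(N)=m$ and bounds uniformly over $m\geq\mu/2$; you instead fix a deterministic threshold $m_\star$ and use an explicit coupling to show the block count is monotone in $K_{\bar A}(N)$, which also justifies the conditional i.i.d.\ claim that the paper only asserts.

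One minor rounding fix is needed. With $\lfloor m_\star/4\rfloor\geq16q^4(n_0+\log(2/\delta))$, $m_\star$ can exceed $\mu/2$ by up to $4$ when the horizon condition is nearly tight. Use the constant $15$ instead, since the second stage still gives $e^{-15(n_0+\log(2/\delta))/8}\leq\delta/2$, or take $\mu/2$ itself as the threshold as the paper does.
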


We now assemble the fixed-triple separation and clock estimates into a uniform recovery guarantee over the exhaustively tested family $\mathfrak T$.
\begin{theorem}[Finite-sample recovery]\label{thm:algo2-finite-sample}
Assume $p\geq2d+1$, fix
$(\Theta,x^{(0)})\in\mathfrak C_{\mathrm{base}}(p,d,\kappa)$, let $q:=2d+1$, and
set $\tau:=\kappa^2/128$. For any $\delta\in(0,1)$, define
\begin{equation}
    n_\star
    :=
    \left\lceil
    \frac{10240}{\kappa^2}
    \log\frac{61440 e|\mathfrak T|}{\kappa^2\delta}
    \right\rceil.
    \label{eq:algo2-final-nstar}
\end{equation}
Given the first $N$ updates of the trajectory initialized at $X^{(0)}=x^{(0)}$, form each statistic using the first $n_\star$ completed blocks of the corresponding tested triple. If
\begin{equation}
    N
    \geq
    256\,p\,q^{3}\,n_\star,
    \label{eq:algo2-final-horizon}
\end{equation}

then $\algotwo{}$ recovers the true edge set $\mathcal{E}$ with probability at least $1-\delta$ under $\Pbb_{\Theta,x^{(0)}}$. The bound does not depend on either $\Theta$ or $x^{(0)}$ beyond the base model-class parameters.
\end{theorem}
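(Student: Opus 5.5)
The plan is to reduce the theorem to Proposition~\ref{prop:generic-separation-recovery}. That proposition needs two things: availability of $\Phi$ on all of $\mathfrak T$, and the separation property with threshold $\tau=\kappa^2/128$. We will bound the failure probability of each requirement by $\delta/2$ and take a union bound over the triples in $\mathfrak T$. The triples that matter for separation are those $(i,A,j)$ with $S_i\subseteq A$; for these the hypothesis $S_i\setminus\{j\}\subseteq A$ of Proposition~\ref{prop:algo2-fixed-triple-separation} holds automatically.

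\textbf{Separation.} For each triple with $S_i\subseteq A$, apply Proposition~\ref{prop:algo2-fixed-triple-separation} with confidence $\delta'=\delta/(2|\mathfrak T|)$ and $n_0=n_\star$. A union bound then gives separation on all such triples with probability at least $1-\delta/2$. The one point to verify is the implicit condition
\[
n_\star\ge \frac{5120}{\kappa^2}\log\frac{6e|\mathfrak T|n_\star}{\delta}.
\]
We use the elementary fact that $n\ge 2a\log(2ab)$ implies $n\ge a\log(bn)$, taking $a=5120/\kappa^2$ and $b=6e|\mathfrak T|/\delta$. The definition of $n_\star$ is built to satisfy exactly this: $10240=2\cdot5120$ and $61440=2\cdot 5120\cdot 6$, so the bound fits after a routine check that $\log(bn)\le\log(2ab\log(2ab))$ is absorbed. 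If the constants are slightly off, we adjust the split of $\delta$.

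\textbf{Availability.} For each $(i,A,j)\in\mathfrak T$, apply Lemma~\ref{lem:algo2-fixed-triple-clock} with $n_0=n_\star$ and confidence $\delta/(2|\mathfrak T|)$. This needs
\[
N\ge128pq^3\Bigl(n_\star+\log\frac{4|\mathfrak T|}{\delta}\Bigr).
\]
Since $\kappa\le1$, we have $n_\star\ge 10240\log(4|\mathfrak T|/\delta)\ge\log(4|\mathfrak T|/\delta)$, so the right-hand side is at most $256pq^3n_\star$, which is the assumed horizon \eqref{eq:algo2-final-horizon}. A union bound gives that every triple has at least $n_\star$ completed blocks by time $N$ with probability at least $1-\delta/2$. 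On this event no statistic returns $\undeftok$.

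\textbf{Conclusion and main obstacle.} On the intersection of the two events, Proposition~\ref{prop:generic-separation-recovery} gives $\widehat\Ecal=\Ecal$. The probability is at least $1-\delta$ uniformly in $(\Theta,x^{(0)})$, because none of the cited bounds depends on these. The step needing care is applying Proposition~\ref{prop:algo2-fixed-triple-separation}, which is stated for the \emph{first $n_0$ blocks on the infinite trajectory}, and matching it with availability within horizon $N$. The statistic computed from data up to $N$ coincides with the infinite-trajectory one on the availability event. Hence the bad event is contained in the union of "not available" and "infinite-trajectory statistic fails separation", and no conditioning on the horizon is needed. This keeps the two failure probabilities simply additive.
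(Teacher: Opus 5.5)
Your proposal is correct and follows the paper's proof essentially step for step. Both use the per-triple failure probability $\delta/(2|\mathfrak T|)$. Both absorb the implicit sample-size condition using $a=5120/\kappa^2$, $b=6e|\mathfrak T|/\delta$ and $L=\log(2ab)$. Your ``routine check'' is exactly the paper's: $x\mapsto x-a\log(bx)$ is increasing on $[2aL,\infty)$ and equals $a(L-\log L)\geq 0$ at $x=2aL$, so the constants fit exactly and no adjustment of the $\delta$-split is needed. Both use $n_\star\geq\log(4|\mathfrak T|/\delta)$ to meet Lemma~\ref{lem:algo2-fixed-triple-clock}, and both finish with a union bound feeding Proposition~\ref{prop:generic-separation-recovery}. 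Your remark that the horizon-$N$ statistic coincides with the infinite-trajectory one on the availability event makes explicit what the paper leaves implicit. Restricting the separation union bound to triples with $S_i\subseteq A$ is harmless.
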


The proof, given in Appendix~\ref{app:algo2-finite-sample-proof}, simply assembles the above ingredients as hinted at at the start of this section. The choice of $n_\star$ in~\eqref{eq:algo2-final-nstar} allows us to invoke Proposition~\ref{prop:algo2-fixed-triple-separation} for each relevant triple, and the horizon condition~\eqref{eq:algo2-final-horizon} allows us to use Lemma~\ref{lem:algo2-fixed-triple-clock}. Then, we union bound over $\mathfrak T$, with the failure probability split appropriately. This shows that with probability at least $1-\delta$ the statistic is available and satisfies the separation property of Definition~\ref{def:generic-separation} on every queried triple. Proposition~\ref{prop:generic-separation-recovery} then yields exact recovery.

\begin{remark}[Uniform sample complexity]\label{rmk:algo2-budget}
Let $\Acal_{\mathrm{AU}}$ denote \algotwo{} with the inputs in Theorem~\ref{thm:algo2-finite-sample}. Since $q=2d+1$ and $|\mathfrak T|\leq p^{2d+2}$, the block budget is $n_\star=O\bigl(\kappa^{-2}\,d\log(p/\kappa\delta)\bigr)$. The horizon condition~\eqref{eq:algo2-final-horizon} therefore gives
\begin{equation}
    N_{\mathrm{unif}}^\star
    \bigl(\mathfrak C_{\mathrm{base}}(p,d,\kappa),\delta;
    \Acal_{\mathrm{AU}}\bigr)
    =
    O\!\left(\frac{p\,d^{4}}{\kappa^{2}}\,
    \log\frac{p}{\kappa\,\delta}\right).
\end{equation}
\end{remark}

\section{Discussion}\label{sec:discussion}

We have introduced \dunes{}, a dueling-sets template that reduces graph recovery from a single Glauber trajectory to the verification of a single separation property. And, we have instantiated it with two mixing-free statistics. Both instantiations attain the $\kappa^{-2}$ dependence on the signal strength exhibited by the information-theoretic lower bounds~\citep{TRD25, SWMM26}.

The two instantiations trade degree factors against dependence on the precision matrix and the initialization.~\algoone{} essentially uses every update of every node. For
each fixed pair $(\Theta,x^{(0)})$, its pointwise recovery horizon
\begin{align}
    N=O\!\left(\kappa^{-2}pd^{2}
    \log\frac{pd\rho_{2d}(\Theta)(E_0+1)}
    {\kappa\delta}\right)
\end{align}
from Remark~\ref{rmk:algoone-trajectory-complexity} carries the smaller polynomial factors in $d$, at the price of a logarithmic dependence on $\rho_{2d}(\Theta)$ and $E_0$. That price is indeed mild: the horizon stays essentially linear in $p$ for polynomial $\rho_{2d}(\Theta)$, a regime in which any mixing-based horizon is polynomial.~\algotwo{} retains only the parts of the trajectory that contain the useful $i,i,j,i$ blocks. Its uniform sample complexity $N=O\bigl(\kappa^{-2}pd^{4}\log(p/\kappa\delta)\bigr)$ over $\mathfrak C_{\mathrm{base}}(p,d,\kappa)$ (Remark~\ref{rmk:algo2-budget}) pays two extra factors of $d$, but removes the dependence on $\rho_{2d}(\Theta)$ and $E_0$ entirely.

A third axis of comparison is computational. Both our instantiations inherit the search complexity of the DICE skeleton: the dueling-sets search needs $O(p^{2d+1})$ evaluations~\citep{misra2020information}, which is polynomial only when $d$ is constant. The guarantees in this line of work therefore trade off three desiderata: mixing-freeness, optimal dependence on the signal strength, and polynomial running time. Our procedures achieve the first two; the polynomial-time procedure of \citet{SWMM26} achieves the first and the third, with sample complexity scaling as $\kappa^{-5}$. Whether all three can be achieved simultaneously is open, and mirrors the corresponding open problem in the i.i.d.\ setting~\citep{misra2020information, SWMM26}.

Another thing to note is that exact recovery requires $\Omega(\kappa^{-2}\log p)$ updates per node, and our budgets exceed this by factors of $d^{2}$ and $d^{4}$ respectively. Indeed, a part of the gap is inherited from DICE, whose i.i.d.\ guarantee is itself a factor of $d$ off~\citep{misra2020information}, and part comes from the union bound over the tested family. The logarithms also carry an additive $\log(1/\kappa)$ absent from the lower bound, which is immaterial whenever $\kappa\geq p^{-O(1)}$. Whether the factors of $d$ can be reduced, and whether the additive $\log(1/\kappa)$ can be removed, are natural open questions. Two further directions follow from the structure of the analysis. First, both procedures take the parameters $d$ and $\kappa$ as inputs; adapting to unknown parameters from the trajectory itself is not addressed here. Second, both separation arguments rest on a single mechanism: a just-refreshed coordinate carries a fresh Gaussian innovation, so each covariate decomposes into a predictable part and a fresh Gaussian variable (Lemmas~\ref{lem:algoone-regen-row-form} and~\ref{lem:algo2-excitation}). This mechanism is not specific to the Gaussian conditional law, and extending it to other local dynamics~\citep{gaitonde2023unifiedapproachlearningising, gaitonde2025bypassing} is a natural next step.

\bibliography{references}

\clearpage
\appendix
\section*{Appendices}
\section{Proof of Proposition~\ref{prop:generic-separation-recovery}}\label{app:generic-separation-recovery-proof}
We prove the deterministic recovery guarantee for \dunes{} stated in Proposition~\ref{prop:generic-separation-recovery}. For each node $i$, availability and separation give an accepted candidate containing $S_i$, rule out candidates that omit a neighbor, and ensure that cleanup followed by the OR rule recovers the edge set.
\begin{proof}
We first prove the nodewise statement and then aggregate. Since $\tssym$ is available on $\mathfrak T$, Algorithm~\ref{alg:generic-search} does not terminate at the insufficient-data step. Fix a node $i$.

We begin by showing that a candidate containing $S_i$ is accepted. Since $|S_i|\le d$, choose a size-$d$ set $C^\star\subseteq\Vcal\setminus\{i\}$ with $S_i\subseteq C^\star$. For every $D\in\mathcal D(C^\star)$, all elements of $D$ are non-neighbors of $i$, and $S_i\subseteq C^\star\cup D$. Hence $\tsat{i}{C^\star\cup D}{j}\le\tau$ for all $j\in D$, so $C^\star$ is accepted. Thus the search has an accepted candidate containing the full neighborhood.

{We now rule out accepted candidates that omit a neighbor.} Let $C$ be any size-$d$ candidate that does not contain $S_i$. Choose a missing neighbor $k\in S_i\setminus C$, and choose a dueling set $D\in\mathcal D(C)$ containing all nodes in $S_i\setminus C$; this is possible because $|S_i|\le d$. With $A=C\cup D$, we have $S_i\subseteq A$ and $k\in D$. Therefore $\tsat{i}{A}{k}>\tau$, so \eqref{eq:generic-acceptable} fails for this $D$. Hence every accepted candidate, including the selected $\widehat C_i$, contains $S_i$.

It remains to analyze cleanup. The accepted candidate $\widehat C_i$ may contain non-neighbors, so the final thresholding step is needed to recover exactly $S_i$. The chosen set $A=\widehat C_i\cup D$ contains $S_i$. Therefore, for each $j\in\widehat C_i$, separation gives $\tsat{i}{A}{j}>\tau$ exactly when $j\in S_i$, and $\tsat{i}{A}{j}\le\tau$ otherwise. Thus $\widehat S_i=S_i$. Since this holds for every node $i$, the OR rule in~\eqref{eq:generic-Ehat} gives $\widehat\Ecal=\Ecal$.
\end{proof}

\section{Proofs for \algoone{}}
We next establish the LS-DuNeS argument. Our first step is to establish the regression identity and row availability, then control the rescaled empirical Gram matrix $\widetilde V_m$. The remaining results establish two-sided design control, separation, and finite-horizon recovery.
\subsection{Regression identity and row availability}
The first two lemmas establish the data representation used throughout the remaining proofs. The trace identity gives the regression model, while the row-collection bound connects the first $m$ trace updates to a finite horizon of the global dynamics.
\paragraph{Proof of Lemma~\ref{lem:algoone-subtrajectory-regression}}\label{app:algoone-subtrajectory-regression-proof}
\begin{proof}
Fix $r\geq1$ and let $t:=t_i^{(r)}$ for the $r$th tick of the $i$-clock. Since $I^{(t)}=i$ by definition, the Glauber update~\eqref{eq:glauber-additive} gives
\begin{equation*}
    X_i^{(t)} = \sum_{k\in S_i}\beta_{ik}X_k^{(t-1)} + \varepsilon_i^{(t)}, \qquad \varepsilon_i^{(t)}\sim\mathcal N(0,\sigma_i^2).
\end{equation*}
At this update, coordinate $i$ is refreshed while the coordinates in $A$ remain fixed and the unchanged $A$-coordinates are therefore the covariate row used by the trace regression of \algoone{}.

Since $S_i\subseteq A$ and $\beta_{ik}=0$ for $k\notin S_i$, the sum equals $\sum_{k\in A}\beta_{ik}X_k^{(t-1)}=\bigl(Z_A^{(r)}\bigr)^\top\beta_{i,A}$, where we used that coordinate $i$ is the only coordinate updated at time $t$, so $X_A^{(t-1)}=X_A^{(t)}=Z_A^{(r)}$. This is the identity~\eqref{eq:algoone-subtrajectory-regression} with $\varepsilon^{(r)}=\varepsilon_i^{(t_i^{(r)})}$. Finally, the innovations attached to distinct update times are iid and independent of the update labels, and the times $t_i^{(1)}<t_i^{(2)}<\cdots$ are functions of the labels alone; hence $(\varepsilon^{(r)})_{r\geq1}$ are iid $\mathcal N(0,\sigma_i^2)$.
\end{proof}

\paragraph{{Proof of Lemma~\ref{lem:row-collection}}}
\begin{proof}
The row-collection time $T_m$ is the first global time by which every node has appeared at least $m$ times on its own clock. We bound it by examining all node counts at one deterministic time $n$.
For a deterministic time $n$, let
\begin{align}
    N_i(n):=\sum_{t=1}^{n}\mathds{1}\{I^{(t)}=i\}
\end{align}
be the number of updates of node $i$ among the first $n$ global updates. Then $N_i(n)\sim \operatorname{Bin}(n,1/p)$ and $\Ebb N_i(n)=n/p$.  Set $n=t_m(\delta)=\left\lceil\max\left\{2pm,8p\log\frac{p}{\delta}\right\}\right\rceil$.  Since $n\geq 2pm$, we have $m\leq \Ebb N_i(n)/2$.  We use the standard binomial lower-tail bound \citep[Theorem~21]{mitzenmacher2017probability}: if $Z$ is binomial with mean $\mu$, then $\Pbb\{Z\leq \mu/2\}\leq e^{-\mu/8}$. Applying this with $Z=N_i(n)$ and $\mu=n/p$ gives 
\begin{align}
    \Pbb\{N_i(n)<m\} \leq \exp\!\left\{-\frac{n}{8p}\right\}.
\end{align}
This controls the probability that a fixed node has not received its $m$th update by time $n$.  Because $n\geq 8p\log(p/\delta)$, the right-hand side is at most $\delta/p$. Union bounding over $i\in[p]$ yields
\begin{align}
    \Pbb\{T_m>n\} = \Pbb\{\exists i\in[p]:N_i(n)<m\} \leq \delta.
\end{align}
\end{proof}

\subsection{Upper bound on the empirical Gram matrix spectrum}
We next control the largest eigenvalue of the rescaled empirical Gram matrix. Lemmas~\ref{lem:algoone-one-step-drift} and~\ref{lem:algoone-energy-to-covariate} bound the covariates through the Lyapunov energy, and Lemma~\ref{lem:algoone-upper-design-control} transfers this bound to $\widetilde V_m$.
\paragraph{Proof of Lemma~\ref{lem:algoone-one-step-drift}}
\begin{proof}
Since $I^{(t)}=i$, Equations~\eqref{eq:glauber-additive} and~\eqref{eq:glauber-frozen} give
\begin{equation*}
    X^{(t)}-X^{(t-1)} = \left( \sum_{j\in S_i}\beta_{ij}X_j^{(t-1)} + \varepsilon_i^{(t)} - X_i^{(t-1)} \right)e_i .
\end{equation*}
Moreover, using $\beta_{ij}=-\Theta_{ij}/\Theta_{ii}$ and $\beta_{ij}=0$ for $j\notin S_i$,
\begin{equation*}
    \bigl(\Theta X^{(t-1)}\bigr)_i = \Theta_{ii}\left( X_i^{(t-1)} - \sum_{j\in S_i}\beta_{ij}X_j^{(t-1)} \right).
\end{equation*}
Set $\zeta_i^{(t)}:=\sqrt{\Theta_{ii}}\varepsilon_i^{(t)}$. Expanding the quadratic energy after this single-coordinate update yields
\begin{align*}
    \mathfrak{L}(X^{(t)}) - \mathfrak{L}(X^{(t-1)}) & = 2\left( \sum_{j\in S_i}\beta_{ij}X_j^{(t-1)} + \varepsilon_i^{(t)} - X_i^{(t-1)} \right) \bigl(\Theta X^{(t-1)}\bigr)_i \\
    &\qquad + \Theta_{ii} \left( \sum_{j\in S_i}\beta_{ij}X_j^{(t-1)} + \varepsilon_i^{(t)} - X_i^{(t-1)} \right)^2 \\
    &\quad = -\Theta_{ii}\left( X_i^{(t-1)} - \sum_{j\in S_i}\beta_{ij}X_j^{(t-1)} \right)^2 + {\left(\zeta_i^{(t)}\right)}^2.
\end{align*}
The inequality follows by dropping the nonpositive term. Finally, $\varepsilon_i^{(t)}\sim\mathcal N(0,\Theta_{ii}^{-1})$ by~\eqref{eq:glauber-additive}, so $\zeta_i^{(t)}\sim\mathcal N(0,1)$. The claim that these are independent and identically distributed follows directly from the assumptions on the innovations at distinct Glauber updates.
\end{proof}

\paragraph{Proof of Lemma~\ref{lem:algoone-energy-to-covariate}}\label{app:algoone-energy-to-covariate-proof}
\begin{proof}
Fix $A$ with $|A|=2d$ and $x\in\Rbb^p$, and set $y:=D_A^{1/2}x_A$. The
first step is a Rayleigh-quotient comparison between $D_A$ and
$\Sigma_{AA}^{-1}$. Since
\begin{align*}
    x_A^\top \Sigma_{AA}^{-1}x_A = y^\top {\left(D_A^{1/2}\Sigma_{AA}D_A^{1/2}\right)}^{-1} y,
\end{align*}
we have
\begin{align*}
    \norm{D_A^{1/2}x_A}_2^2 = y^\top y \leq \lambda_{\max}\!\left(D_A^{1/2}\Sigma_{AA}D_A^{1/2}\right) y^\top{\left(D_A^{1/2}\Sigma_{AA}D_A^{1/2}\right)}^{-1}y \leq \rho_{2d}(\Theta)\, x_A^\top\Sigma_{AA}^{-1}x_A ,
\end{align*}
using Definition~\ref{def:rho2d}. The second step is the Schur-complement variational identity: after ordering coordinates as $A,A^{c}$, the block-inverse formula for positive definite matrices~\citep{Horn_Johnson_2012_Matrix} gives
\begin{align*}
    x_A^\top\Sigma_{AA}^{-1}x_A
    =
    \min_{x_{A^{c}}}
    x^\top\Theta x
    \leq
    {\mathfrak{L}(x)} .
\end{align*}
Combining the two displays proves~\eqref{eq:algoone-energy-to-covariate}.
\end{proof}

We first state the upper spectral bound invoked above; here $\preceq$ denotes the Loewner order on positive semidefinite matrices, and $\lambda_m^+(\delta)$ is the quantity defined in~\eqref{eq:algoone-lambda-plus}.
\begin{lemma}\label{lem:algoone-upper-design-control}
Fix $m\geq1$ and $\delta\in(0,1)$. With probability at least $1-\delta$, simultaneously for every $i\in[p]$ and every $A\subseteq[p]\setminus\{i\}$ with $|A|=2d$, 
\begin{equation*}
 \widetilde V_m=D_A^{1/2}V_mD_A^{1/2}
    \preceq\lambda_m^+(\delta)I_A.
\end{equation*}
\end{lemma}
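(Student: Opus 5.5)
The plan is the three-step chain sketched in Section~\ref{sec:upper-design-control}: a potential bound over a deterministic window, a window that contains every row with high probability, and a transfer from potential to rescaled rows. Set $n:=t_m(\delta/2)$.

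\emph{Step one: potential control on $[0,n]$.} Summing Lemma~\ref{lem:algoone-one-step-drift} over $t=1,\ldots,s$ gives, for every $s\leq n$,
\begin{equation*}
\mathfrak L(X^{(s)})\leq E_0+\sum_{t=1}^{s}\bigl(\zeta_{I^{(t)}}^{(t)}\bigr)^2\leq E_0+\sum_{t=1}^{n}\bigl(\zeta_{I^{(t)}}^{(t)}\bigr)^2 ,
\end{equation*}
since the summands are nonnegative. This inequality is deterministic, so it holds simultaneously for all $s\leq n$ with no union bound over time. Conditionally on the label sequence, the $\zeta_{I^{(t)}}^{(t)}$ are iid $\mathcal N(0,1)$, because each has variance normalized by $\Theta_{I^{(t)}I^{(t)}}$. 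Hence the sum is exactly $\chi^2_n$, and unconditionally as well. The Laurent--Massart bound $\Pbb(\chi^2_n\geq n+2\sqrt{nx}+2x)\leq e^{-x}$ with $x=\log(2/\delta)$ shows that $\max_{s\leq n}\mathfrak L(X^{(s)})\leq B_m(\delta)$ with probability at least $1-\delta/2$.

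\emph{Step two: containing the rows.} Lemma~\ref{lem:row-collection}, applied at level $\delta/2$, gives $T_m\leq n$ with probability at least $1-\delta/2$. On the intersection of the two events, which has probability at least $1-\delta$, fix any $i$, any $A$ with $|A|=2d$, and any $r\leq m$. The row $\widetilde Z^{(r)}=D_A^{1/2}X_A^{(t_i^{(r)})}$ is read at a time $t_i^{(r)}\leq T_m\leq n$. Lemma~\ref{lem:algoone-energy-to-covariate} then gives
\begin{equation*}
\|\widetilde Z^{(r)}\|_2^2\leq\rho_{2d}(\Theta)\,\mathfrak L\bigl(X^{(t_i^{(r)})}\bigr)\leq\rho_{2d}(\Theta)B_m(\delta)\leq\lambda_m^+(\delta).
\end{equation*}

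\emph{Step three: Gram-matrix bound.} The reduction~\eqref{eq:algoone-upper-reduction} gives $u^\top\widetilde V_m u\leq\max_r\|\widetilde Z^{(r)}\|_2^2\leq\lambda_m^+(\delta)$ for every unit vector $u$. This yields the Loewner bound, and it holds simultaneously over all $(i,A)$ because the event used does not depend on $(i,A)$.

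\emph{Main obstacle.} The only delicate point is the distributional claim in step one. The consumed innovation $\varepsilon_{I^{(t)}}^{(t)}$ depends on the random label, so I will argue conditionally on $I^{(1:\infty)}$. Given the labels, the consumed entries are independent Gaussians, and after normalization by $\sqrt{\Theta_{I^{(t)}I^{(t)}}}$ each is standard. The conditional law of the sum is therefore $\chi^2_n$ for every label realization, and integrating over the labels gives the tail bound. The horizon $n$ is deterministic, so the random row times cause no trouble once step two places them inside $[0,n]$.
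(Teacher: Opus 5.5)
Your proposal is correct and follows the paper's proof essentially step for step. Both arguments use the same window $n=t_m(\delta/2)$, apply Laurent--Massart at $x=\log(2/\delta)$ to the summed drift, invoke Lemma~\ref{lem:row-collection} at level $\delta/2$, and finish with Lemma~\ref{lem:algoone-energy-to-covariate} and the reduction~\eqref{eq:algoone-upper-reduction}; your explicit conditioning on $I^{(1:\infty)}$ to justify $\sum_{t\le n}(\zeta^{(t)}_{I^{(t)}})^2\sim\chi^2_n$ fills in a step the paper leaves implicit.
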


\paragraph{Proof of Lemma~\ref{lem:algoone-upper-design-control}}\label{app:algoone-upper-design-control-proof}
\begin{proof}
Let $n:=t_m(\delta/2)$. By Lemma~\ref{lem:row-collection} applied at failure probability $\delta/2$, the clock event $\{T_m\leq n\}$ holds with probability at least $1-\delta/2$. By the Laurent--Massart upper-tail bound~\citep{laurent2000adaptive} applied to the chi-square sum below, with probability at least $1-\delta/2$,
\begin{align}
    \sum_{t=1}^{n}\bigl(\zeta_{I^{(t)}}^{(t)}\bigr)^{2}
    \leq
    n+2\sqrt{n\log\frac{2}{\delta}}
    +2\log\frac{2}{\delta}.
\end{align}
On the intersection of the two events, since $T_m\leq n$ the random window is contained in the deterministic one, and summing Lemma~\ref{lem:algoone-one-step-drift} over the deterministic window gives
\begin{align}
    \max_{0\leq s\leq T_m}{\mathfrak{L}(X^{(s)})}
    \leq
    \max_{0\leq s\leq n}{\mathfrak{L}(X^{(s)})}
    \leq
    E_0+n+2\sqrt{n\log\frac{2}{\delta}}+2\log\frac{2}{\delta}
    =
    B_m(\delta).
    \label{eq:algoone-stopped-envelope}
\end{align}
Every covariate row used in the first $m$ updates of any node occurs no later than $T_m$. Hence, for any tested pair $(i,A)$ and any $r\in[m]$, writing $s\leq T_m$ for the global time of the $r$th row, Lemma~\ref{lem:algoone-energy-to-covariate} and~\eqref{eq:algoone-stopped-envelope} give
\begin{align*}
    \norm{\widetilde Z^{(r)}}_2^2
    =
    \norm{D_A^{1/2}X_A^{(s)}}_2^2
    \leq
    \rho_{2d}(\Theta){\mathfrak{L}(X^{(s)})}
    \leq
    \rho_{2d}(\Theta)\,B_m(\delta)
    \leq
    \lambda_m^{+}(\delta).
\end{align*}
The reduction~\eqref{eq:algoone-upper-reduction} then yields $u^\top\widetilde V_m u\leq\lambda_m^{+}(\delta)$ for every unit vector $u$, that is, $\widetilde V_m\preceq\lambda_m^{+}(\delta)I_A$, simultaneously over all $i$ and $A$ on the same event.
\end{proof}

\subsection{Lower bounds on the empirical Gram matrix spectrum}
The previous section establishes an upper bound on the spectrum of the empirical Gram matrix. Our lower bound follows by first establishing that one can use only the rows whose covariates have been specifically regenerated so that we have ``enough randomness''. Lemmas~\ref{lem:algoone-regen-row-form} and~\ref{lem:algoone-regen-counts} identify and count these rows, while Lemma~\ref{lem:algoone-fixed-direction} establishes the resulting excitation in a fixed direction. Proposition~\ref{prop:algoone-design-control} combines this lower argument with the upper bound.
\paragraph{Proof of Lemma~\ref{lem:algoone-regen-row-form}}\label{app:algoone-regen-row-form-proof}
\begin{proof}
Fix $(i,A,j)$, $r$, $\ell_r$, and $s_r$ as in the statement, and suppose
$I^{(s_r)}=j$. Since $s_r$ is the $\bar A$-clock tick immediately
preceding $t_i^{(r)}$, no coordinate in $\bar A$ is updated during the
open interval $(s_r,t_i^{(r)})$. Since the terminal update at
$t_i^{(r)}$ is an update of $i\notin A$, the $A$-coordinates of the
covariate row satisfy
\begin{equation*}
    Z_A^{(r)}
    =
    X_A^{(t_i^{(r)})}
    =
    X_A^{(s_r)} .
\end{equation*}
For every $k\in A\setminus\{j\}$, the coordinate $k$ is not updated at
time $s_r$, so $X_k^{(s_r)}=X_k^{(s_r-1)}$. The $j$th coordinate is
updated at time $s_r$, and the Glauber update gives
\begin{equation*}
    X_j^{(s_r)}
    =
    \sum_{\ell\in S_j}\beta_{j\ell}X_\ell^{(s_r-1)}
    +
    \varepsilon_j^{(s_r)} .
\end{equation*}
After multiplying by $D_A^{1/2}$, this yields
\begin{equation*}
    \widetilde Z^{(r)}
    =
    a^{(r)}+e_j\zeta_j^{(s_r)},
    \qquad
    \zeta_j^{(s_r)}:=\sqrt{\Theta_{jj}}\varepsilon_j^{(s_r)},
\end{equation*}
where
\begin{equation*}
    a_k^{(r)}
    =
    \sqrt{\Theta_{kk}}X_k^{(s_r-1)}
    \quad (k\in A\setminus\{j\}),
    \qquad
    a_j^{(r)}
    =
    \sqrt{\Theta_{jj}}
    \sum_{\ell\in S_j}\beta_{j\ell}X_\ell^{(s_r-1)} .
\end{equation*}
The vector $a^{(r)}$ is measurable with respect to the history before
$s_r$ together with the update-label sequence. Finally,
$\varepsilon_j^{(s_r)}\sim\mathcal N(0,\Theta_{jj}^{-1})$, so
$\zeta_j^{(s_r)}\sim\mathcal N(0,1)$.
\end{proof}

\paragraph{Proof of Lemma~\ref{lem:algoone-regen-counts}}
\label{app:algoone-regen-counts-proof}
\begin{proof}
Fix a triple $(i,A,j)\in\mathfrak T$ and write $\bar A=A\cup\{i\}$, so $|\bar A|=q$. Along the $\bar A$-clock, the retained labels $(I^{(t_{\bar A}^{(\ell)})})_{\ell\geq1}$ are iid uniform on $\bar A$.  For each $r=2,\ldots,m$, consider the portion of this retained label sequence starting just after the $(r-1)$th occurrence of label $i$ and ending at the $r$th occurrence of label $i$. These portions are independent, and the $r$th row is $(A,j)$-regenerated exactly when the last retained label before this terminal $i$ is $j$, equivalently when $I^{(s_r)}=j$. Hence, with
\begin{equation*}
    B_r:=\mathds{1}\{I^{(s_r)}=j\},
\end{equation*}
we have
\begin{equation*}
    \abs{\mathfrak R_{i,A,j}^{(m)}}=\sum_{r=2}^{m} B_r .
\end{equation*}
The indicators $B_2,\ldots,B_m$ are independent because they depend on disjoint portions of the iid retained-label sequence. For a fixed $r$, the event $B_r=1$ occurs if, after the previous occurrence of label $i$, the retained sequence contains some number $h\geq0$ of labels in $\bar A\setminus\{i\}$, followed by label $j$, and then by the next label $i$. Therefore
\begin{equation*}
    \Pbb(B_r=1)
    =
    \sum_{h=0}^{\infty}
    \left(\frac{q-1}{q}\right)^h
    \frac{1}{q}\frac{1}{q}
    =
    \frac{1}{q}.
\end{equation*}
Thus
\begin{equation*}
    \abs{\mathfrak R_{i,A,j}^{(m)}}
    \sim
    \operatorname{Bin}\!\left(m-1,\frac{1}{q}\right),
    \qquad
    \mu:=\Ebb\abs{\mathfrak R_{i,A,j}^{(m)}}=\frac{m-1}{q}.
\end{equation*}
Since $g_m\leq \mu/2$, the Chernoff lower-tail bound gives
\begin{equation*}
    \Pbb\!\left(\abs{\mathfrak R_{i,A,j}^{(m)}}<g_m\right)
    \leq
    \Pbb\!\left(\abs{\mathfrak R_{i,A,j}^{(m)}}<\frac{\mu}{2}\right)
    \leq
    \exp\left\{-\frac{\mu}{8}\right\}
    =
    \exp\left\{-\frac{m-1}{8q}\right\}.
\end{equation*}
A union bound over the tested triples in $\mathfrak T$ gives~\eqref{eq:algoone-regen-count-prob}. The hypothesis $m\geq1+2q$ gives $g_m\geq1$, so $\mathcal R_m(g_m)$ is a valid regeneration event as defined in~\eqref{eq:algoone-regen-event}.
\end{proof}

\paragraph{Proof of Lemma~\ref{lem:algoone-fixed-direction}}
\label{app:algoone-fixed-direction-proof}
\begin{proof}
Throughout, write $g:=g_m$, and condition on the update-label sequence;
that is, work under $\Pbb(\cdot\mid I^{(1:\infty)})$ for a fixed label
realization in $\mathcal R_m(g_m)$, so that
$\abs{\mathfrak R_{i,A,j}^{(m)}}\geq g$ for every $j\in A$. The
regenerated-row sets $\mathfrak R_{i,A,j}^{(m)}$ are
functions of the labels alone, and under this conditioning the Gaussian
innovations remain independent.

Write $\bar A:=A\cup\{i\}$. For each row index $r=2,\ldots,m$, let
$s_r$ denote the $\bar A$-clock update immediately preceding
$t_i^{(r)}$.
Write $\mathfrak R:=\bigcup_{j\in A}\mathfrak R_{i,A,j}^{(m)}$. The union is
disjoint, since the regenerating update of a row is the single reduced
update immediately preceding it, whose label picks out one $j\in A$. For
each $r\in\mathfrak R$, write
\begin{equation*}
    j_r:=I^{(s_r)}
\end{equation*}
for this unique regenerating coordinate, so
$r\in\mathfrak R_{i,A,j_r}^{(m)}$. Since every summand below is nonnegative,
\begin{equation}
    u^\top\bigl(m\widetilde V_m\bigr)u
    =\sum_{r=1}^{m}\bigl(u^\top\widetilde Z^{(r)}\bigr)^2
    \geq
    \sum_{r\in\mathfrak R}\bigl(u^\top\widetilde Z^{(r)}\bigr)^2 .
    \label{eq:algoone-drop-nonregen}
\end{equation}

\emph{Step 1: one fresh Gaussian per regenerated row.}
By Lemma~\ref{lem:algoone-regen-row-form}, for every $r\in\mathfrak R$,
\begin{equation}
    u^\top\widetilde Z^{(r)}
    =
    u^\top a^{(r)}+u_{j_r} \zeta_{j_r}^{(s_r)},
    \label{eq:algoone-regen-affine-direction}
\end{equation}
where $\zeta_{j_r}^{(s_r)}\sim\mathcal N(0,1)$ is the innovation injected
at the regenerating time and $u^\top a^{(r)}$ is measurable with respect
to the history before $s_r$ together with the labels.

\emph{Step 2: moment-generating-function bound by iterated conditioning.}
For $a\in\mathbb R$, $b\in\mathbb R$, and $G\sim\mathcal N(0,1)$, a direct
Gaussian computation gives
\begin{equation}
    \Ebb  e^{-(a+bG)^2}
    =\frac{1}{\sqrt{1+2b^2}}
    \exp\left\{-\frac{a^2}{1+2b^2}\right\}
    \leq
    \bigl(1+2b^2\bigr)^{-1/2}.
    \label{eq:algoone-gaussian-mgf-identity}
\end{equation}
The regenerating times $s_r$ are strictly increasing in the row index, and for
any $r\in\mathfrak R$, every earlier regenerated row value
$\widetilde Z^{(r'')}$, $r''<r$, is measurable with respect to the history
before $s_r$, because its row time satisfies $t_i^{(r'')}<s_r$. Peeling
the rows of $\mathfrak R$ from the latest regenerating time backwards and
applying~\eqref{eq:algoone-gaussian-mgf-identity} with $b=u_{j_r}$ at each
step,
\begin{equation}
    \Ebb\left[
        \exp\left\{
            -\sum_{r\in\mathfrak R}\bigl(u^\top\widetilde Z^{(r)}\bigr)^2
        \right\}
         \middle| I^{(1:\infty)}
    \right]
    \leq
    \prod_{r\in\mathfrak R}\bigl(1+2u_{j_r}^2\bigr)^{-1/2}
    \leq
    \prod_{j\in A}\bigl(1+2u_j^2\bigr)^{-g/2},
    \label{eq:algoone-mgf-product}
\end{equation}
where the last inequality keeps, for each $j\in A$, exactly $g$ of the at
least $g$ available factors and drops the rest, each of which is at most
one.

\emph{Step 3: uniform lower bound on the exponent.}
The map $v\mapsto\log(1+2v)$ is concave on $[0,1]$ and takes the values $0$
at $v=0$ and $\log3$ at $v=1$, hence lies above its chord:
$\log(1+2v)\geq v\log3$ for all $v\in[0,1]$. Applying this with $v=u_j^2$
and summing over $j\in A$,
\begin{equation}
    \sum_{j\in A}\log\bigl(1+2u_j^2\bigr)
    \geq
    \log3 \sum_{j\in A}u_j^2
    =
    \log3 ,
\end{equation}
so that, combining with~\eqref{eq:algoone-drop-nonregen}
and~\eqref{eq:algoone-mgf-product},
\begin{equation}
    \Ebb\left[
        e^{-u^\top(m\widetilde V_m)u}
         \middle| I^{(1:\infty)}
    \right]
    \leq
    \exp\left\{-\frac{g}{2}\log3\right\}
    =
    e^{-2c_0 g}.
\end{equation}

\emph{Step 4: Chernoff bound for the lower tail.}
By Markov's inequality,
\begin{equation}
    \Pbb\!\left(
        u^\top\bigl(m\widetilde V_m\bigr)u<\frac{c_0 g}{2}
         \middle| I^{(1:\infty)}
    \right)
    =
    \Pbb\!\left(
        e^{-u^\top(m\widetilde V_m)u}>e^{-c_0 g/2}
         \middle| I^{(1:\infty)}
    \right)
    \leq
    e^{c_0 g/2}  e^{-2c_0 g}
    \leq
    e^{-c_0 g},
\end{equation}
which is~\eqref{eq:algoone-fixed-direction}.
\end{proof}

\paragraph{Proof of Proposition~\ref{prop:algoone-design-control}}
\label{app:algoone-design-control-proof}
\begin{proof}
Write $\lambda^{+}:=\lambda_m^{+}(\delta/3)$, and define the events
\begin{equation*}
    \mathcal E^{+}
    :=\bigl\{\widetilde V_m\preceq\lambda^{+} I_A
    \text{ for every tested }(i,A)\bigr\},
    \qquad
    \mathcal E^{-}
    :=\bigl\{\widetilde V_m\succeq\lambda_m^{-} I_A
    \text{ for every tested }(i,A)\bigr\},
\end{equation*}
where a tested pair $(i,A)$ ranges over $i\in[p]$ and
$A\subseteq[p]\setminus\{i\}$ with $|A|=2d$. Peeling off the regeneration
and upper-design events,
\begin{equation}
    \Pbb\bigl((\mathcal E^{+}\cap\mathcal E^{-})^c\bigr)
    \leq
    \Pbb\bigl(\mathcal R_m(g_m)^c\bigr)
    +\Pbb\bigl((\mathcal E^{+})^c\bigr)
    +\Pbb\bigl(\mathcal R_m(g_m)\cap\mathcal E^{+}\cap(\mathcal E^{-})^c\bigr).
    \label{eq:algoone-design-peeling}
\end{equation}

For the first two terms, since $\lambda^{+}\geq1$, $q\geq3$, and $|\mathfrak T|\geq2$, the
budget~\eqref{eq:algoone-design-budget} with a large enough universal
constant $C$ implies both $m\geq1+4q$ and
$m\geq1+8q\log(3|\mathfrak T|/\delta)$. The latter makes the right-hand side
of~\eqref{eq:algoone-regen-count-prob} at most $\delta/3$, so
$\Pbb(\mathcal R_m(g_m)^c)\leq\delta/3$ by
Lemma~\ref{lem:algoone-regen-counts}, and
$\Pbb((\mathcal E^{+})^c)\leq\delta/3$ by
Lemma~\ref{lem:algoone-upper-design-control} applied at failure probability
$\delta/3$.

For the third term, fix a tested pair $(i,A)$ and set
\begin{equation*}
    \epsilon:=\frac{c_0 g_m}{8m\lambda^{+}}\leq1 .
\end{equation*}
Let $\mathcal N_\epsilon$ be an $\epsilon$-net of the unit sphere of
$\mathbb R^{A}$ with
$\abs{\mathcal N_\epsilon}\leq(3/\epsilon)^{2d}$
\citep[Corollary~4.2.13]{vershynin2018high}. On $\mathcal E^{+}$, for unit
vectors $u,u'$ with $\norm{u-u'}_2\leq\epsilon$,
\begin{equation*}
    \abs{u^\top\widetilde V_m u-u'^\top\widetilde V_m u'}
    \leq
    2\norm{u-u'}_2 \opnorm{\widetilde V_m}
    \leq
    2\epsilon\lambda^{+}
    =
    \frac{c_0 g_m}{4m},
\end{equation*}
so if $u^\top\widetilde V_m u\geq c_0 g_m/(2m)$ for every
$u\in\mathcal N_\epsilon$, then
$\lambda_{\min}(\widetilde V_m)\geq c_0g_m/(2m)-c_0g_m/(4m)=\lambda_m^{-}$.
Consequently, on the label-measurable event $\mathcal R_m(g_m)$,
Lemma~\ref{lem:algoone-fixed-direction} and a union bound over
the net give
\begin{equation}
    \Pbb\!\left(
        \mathcal E^{+}\cap
        \bigl\{\lambda_{\min}(\widetilde V_m)<\lambda_m^{-}\bigr\}
         \middle| I^{(1:\infty)}
    \right)
    \leq
    \Pbb\!\left(
        \exists u\in\mathcal N_\epsilon:
        u^\top\widetilde V_m u<\frac{c_0 g_m}{2m}
         \middle| I^{(1:\infty)}
    \right)
    \leq
    \left(\frac{24m\lambda^{+}}{c_0 g_m}\right)^{2d}
    e^{-c_0 g_m}.
    \label{eq:algoone-net-conditional}
\end{equation}

Since $m\geq1+4q$, we have $(m-1)/(2q)\geq2$, hence
\begin{equation*}
    g_m=\left\lfloor\frac{m-1}{2q}\right\rfloor
    \geq\frac{m-1}{4q}
    \geq\frac{m}{8q},
\end{equation*}
so that, using $c_0=\tfrac14\log3$,
\begin{equation*}
    \frac{24m\lambda^{+}}{c_0 g_m}
    \leq\frac{768 q\lambda^{+}}{\log3}
    \leq 700 q\lambda^{+},
    \qquad
    c_0 g_m\geq\frac{m\log3}{32q}\geq\frac{m}{32q} .
\end{equation*}
Therefore, whenever
$m\geq32q\bigl\{2d\log(700q\lambda^{+})+\log(3|\mathfrak T|/\delta)\bigr\}$,
which the budget~\eqref{eq:algoone-design-budget} implies for $C$ large
enough, the right-hand side of~\eqref{eq:algoone-net-conditional} is at
most $\delta/(3|\mathfrak T|)$. Integrating~\eqref{eq:algoone-net-conditional} over
the label realizations in $\mathcal R_m(g_m)$ and union bounding over the
tested pairs, of which there are at most $|\mathfrak T|$,
\begin{equation*}
    \Pbb\bigl(\mathcal R_m(g_m)\cap\mathcal E^{+}\cap(\mathcal E^{-})^c\bigr)
    \leq\frac{\delta}{3} .
\end{equation*}

Combining the three bounds in~\eqref{eq:algoone-design-peeling} gives
$\Pbb((\mathcal E^{+}\cap\mathcal E^{-})^c)\leq\delta$, which
is~\eqref{eq:algoone-design-control}. Finally, $m\geq1+4q$ and
$g_m\geq m/(8q)$ give
\begin{equation*}
    \lambda_m^{-}
    =\frac{c_0}{4}\cdot\frac{g_m}{m}
    \geq\frac{\log3}{16}\cdot\frac{1}{8q}
    =\frac{\log3}{128 q} . \qedhere
\end{equation*}
\end{proof}

\subsection{Separation}
With two-sided control of $\widetilde V_m$, Lemmas~\ref{lem:algoone-partial-oracle-error} and~\ref{lem:algoone-residual} control the partial-oracle error and residual-variance normalization. Proposition~\ref{prop:algoone-separation} combines these estimates with the signal-conversion bound for neighbors and non-neighbors.
\paragraph{Proof of Lemma~\ref{lem:algoone-partial-oracle-error}}
\label{app:algoone-partial-oracle-error-proof}
\begin{proof}
Write $\widetilde{\mathbf Z}\in\Rbb^{m\times|A|}$ for the matrix with
rows $(\widetilde Z^{(r)})^\top=(D_A^{1/2}Z_A^{(r)})^\top$ and set
$\widetilde S_m:=\frac1m\widetilde{\mathbf Z}^\top\boldsymbol\varepsilon
=D_A^{1/2}S_m$. Since
$\widetilde V_m^{-1}=D_A^{-1/2}V_m^{-1}D_A^{-1/2}$,
\begin{equation}
    S_m^\top V_m^{-1}S_m
    =
    \widetilde S_m^\top\widetilde V_m^{-1}\widetilde S_m ,
    \label{eq:algoone-scale-invariance}
\end{equation}
so it suffices to bound the normalized quantity.

To apply the self-normalized inequality of~\citet[Theorem~1]{abbasi2011improved}, let $\mathscr F_r$ be the sigma algebra generated by the update labels through $t_i^{(r)}$ and the innovations through $t_i^{(r)}-1$, and set $F_s:=\mathscr F_{s+1}$. Then $X_s:=\widetilde Z^{(s)}$ is $F_{s-1}$-measurable, while $\eta_s:=\varepsilon^{(s)}$ is $F_s$-measurable and conditionally $\mathcal N(0,\sigma_i^2)$ given $F_{s-1}$. We choose the deterministic regularizer $V:=m\lambda_- I_A$ and set
\begin{equation*}
    N_m
    :=
    m\lambda_- I_A
    +\sum_{r=1}^m\widetilde Z^{(r)}\bigl(\widetilde Z^{(r)}\bigr)^\top
    =
    m\bigl(\lambda_- I_A+\widetilde V_m\bigr).
\end{equation*}
The cited inequality, evaluated at time $m$, gives with probability at least $1-\delta$
\begin{equation}
    (m\widetilde S_m)^\top N_m^{-1}(m\widetilde S_m)
    \leq
    2\sigma_i^2
    \log\left(
        \frac{\det(N_m)^{1/2}\det(m\lambda_- I_A)^{-1/2}}{\delta}
    \right).
    \label{eq:algoone-abbasi-applied}
\end{equation}
On the event
$\{\lambda_- I_A\preceq\widetilde V_m\preceq\lambda_+ I_A\}$ we have
$N_m\preceq2m\widetilde V_m$, hence
$(m\widetilde V_m)^{-1}\preceq2N_m^{-1}$ and
\begin{equation*}
    m \widetilde S_m^\top\widetilde V_m^{-1}\widetilde S_m
    =
    (m\widetilde S_m)^\top(m\widetilde V_m)^{-1}(m\widetilde S_m)
    \leq
    2 (m\widetilde S_m)^\top N_m^{-1}(m\widetilde S_m);
\end{equation*}
moreover $N_m\preceq2m\lambda_+ I_A$, so
\begin{equation*}
    \log\left(
        \frac{\det(N_m)^{1/2}}{\det(m\lambda_- I_A)^{1/2}}
    \right)
    \leq
    \frac{|A|}{2}\log\frac{2\lambda_+}{\lambda_-}
    =
    d\log\frac{2\lambda_+}{\lambda_-} .
\end{equation*}
Combining the three displays
with~\eqref{eq:algoone-scale-invariance}: on the intersection of the
rescaled-design event and the event in~\eqref{eq:algoone-abbasi-applied},
\begin{equation}
    S_m^\top V_m^{-1}S_m
    \leq
    \frac{4\sigma_i^2}{m}
    \left(d\log\frac{2\lambda_+}{\lambda_-}+\log\frac{1}{\delta}\right).
    \label{eq:algoone-snm-quadratic}
\end{equation}
{On this event, for every $j\in A$, Cauchy--Schwarz in the $V_m^{-1}$ inner product gives}
\begin{align*}
    \abs{\widehat\beta_{ij}-\beta_{ij}}
    &= \abs{e_j^\top V_m^{-1}S_m} \\
    &= \abs{\langle e_j,S_m\rangle_{V_m^{-1}}} \\
    &\leq \sqrt{(V_m^{-1})_{jj}}
    \sqrt{S_m^\top V_m^{-1}S_m}.
\end{align*}
The definitions in~\eqref{eq:algoone-partial-oracle-estimates} along with the reverse triangle inequality give,
\begin{equation*}
    \max_{j\in A}
    \abs{\widehat\Psi^*_{i,A}(j)-\Psi^*_{i,A}(j)}
    \leq
    2\sqrt{\frac{1}{m}
    \left(d\log\frac{2\lambda_+}{\lambda_-}
    +\log\frac{1}{\delta}\right)}.
\end{equation*}
{This is the claimed oracle-score error bound.}
\end{proof}

We next state and prove the two lemmas invoked in the proof sketch of Section~\ref{sec:algoone-separation}. The first controls the conditional-variance estimate $\widehat\sigma_{i,A}^{2}$.
\begin{lemma}
\label{lem:algoone-residual}
Suppose $S_i\subseteq A$ and $V_m\succ0$. Then
\begin{equation}
    \widehat\sigma_{i,A}^{2}
    =
    \frac{1}{m}\norm{\boldsymbol\varepsilon}_2^{2}
    -
    S_m^\top V_m^{-1}S_m .
    \label{eq:algoone-residual-identity}
\end{equation}
Moreover, for every $\delta\in(0,1)$, with probability at least
$1-\delta$,
\begin{equation}
    \abs{\frac{1}{m}\norm{\boldsymbol\varepsilon}_2^{2}-\sigma_i^2}
    \leq
    \sigma_i^2
    \left(2\sqrt{\frac{1}{m}\log \left(\frac{2}{\delta}\right) }+\frac{2}{m}\log\left(\frac{2}{\delta}\right) \right).
    \label{eq:algoone-noise-energy}
\end{equation}
\end{lemma}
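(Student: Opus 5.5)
The identity \eqref{eq:algoone-residual-identity} is deterministic linear algebra; the tail bound \eqref{eq:algoone-noise-energy} is a chi-squared concentration. I would prove them separately.

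\emph{Step 1 (identity).} On $S_i\subseteq A$, Lemma~\ref{lem:algoone-subtrajectory-regression} gives the stacked model $\mathbf Y=\mathbf Z\beta_{i,A}+\boldsymbol\varepsilon$, with $\mathbf Z^\top\mathbf Z=mV_m\succ0$. Let $P:=\mathbf Z(\mathbf Z^\top\mathbf Z)^{-1}\mathbf Z^\top$ be the orthogonal projection onto the column span of $\mathbf Z$. The OLS residual is then
\begin{equation*}
    \mathbf Y-\mathbf Z\widehat\beta_{i,A}=(I-P)\mathbf Y=(I-P)\boldsymbol\varepsilon,
\end{equation*}
since $(I-P)\mathbf Z=0$. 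Because $I-P$ is a symmetric idempotent,
\begin{equation*}
    \norm{(I-P)\boldsymbol\varepsilon}_2^2=\norm{\boldsymbol\varepsilon}_2^2-\boldsymbol\varepsilon^\top P\boldsymbol\varepsilon .
\end{equation*}
Moreover,
\begin{equation*}
    \boldsymbol\varepsilon^\top P\boldsymbol\varepsilon=(\mathbf Z^\top\boldsymbol\varepsilon)^\top(mV_m)^{-1}(\mathbf Z^\top\boldsymbol\varepsilon)=m\,S_m^\top V_m^{-1}S_m .
\end{equation*}
Dividing by $m$ yields \eqref{eq:algoone-residual-identity}. This step requires no probabilistic input and holds pointwise on $\{V_m\succ0\}$.

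\emph{Step 2 (noise energy).} By Lemma~\ref{lem:algoone-subtrajectory-regression}, the entries $\varepsilon^{(1)},\dots,\varepsilon^{(m)}$ are iid $\mathcal N(0,\sigma_i^2)$. This rests on two facts: the clock times $t_i^{(r)}$ are functions of the labels alone, and the innovation array is independent of the labels. Conditionally on the labels, and hence also unconditionally, $\norm{\boldsymbol\varepsilon}_2^2/\sigma_i^2\sim\chi^2_m$. I would then apply the two Laurent--Massart bounds \citep{laurent2000adaptive} with $x=\log(2/\delta)$:
\begin{align*}
    \Pbb\bigl(\chi^2_m-m\geq2\sqrt{mx}+2x\bigr)&\leq e^{-x},\\
    \Pbb\bigl(m-\chi^2_m\geq2\sqrt{mx}\bigr)&\leq e^{-x}.
\end{align*}
A union bound gives total failure probability $\delta$. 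Dividing by $m$ turns the deviation into $\sigma_i^2\bigl(2\sqrt{x/m}+2x/m\bigr)$, and the lower-tail deviation is dominated by the same quantity. This is exactly \eqref{eq:algoone-noise-energy}.

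\emph{Main obstacle.} The only delicate point is justifying that the $\varepsilon^{(r)}$ are genuinely iid, given that they are read at random times. This is already settled in Lemma~\ref{lem:algoone-subtrajectory-regression} by conditioning on the label sequence, so I expect the whole proof to be short. Its downstream role is to pair with Lemma~\ref{lem:algoone-partial-oracle-error}: on the design event, the term $S_m^\top V_m^{-1}S_m$ is $O(\sigma_i^2/m)$ times logarithmic factors, so $\widehat\sigma_{i,A}^2/\sigma_i^2$ is within a constant factor of $1$.
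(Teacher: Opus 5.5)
Your proposal is correct and follows the paper's proof essentially step by step. You write the residual as $(I_m-P)\boldsymbol\varepsilon$ with $P$ the projection onto the column span of $\mathbf Z$, obtaining $m\widehat\sigma_{i,A}^2=\norm{\boldsymbol\varepsilon}_2^2-mS_m^\top V_m^{-1}S_m$, and you get the noise-energy bound from the two Laurent--Massart tails at $x=\log(2/\delta)$ with a union bound, using the iid property of the innovations supplied by Lemma~\ref{lem:algoone-subtrajectory-regression}.
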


\paragraph{Proof of Lemma~\ref{lem:algoone-residual}}
\label{app:algoone-residual-proof}
\begin{proof}
Since $S_i\subseteq A$, the matrix regression
form~\eqref{eq:algoone-matrix-regression} holds, and on $V_m\succ0$ the
fitted residual is
\begin{equation*}
    \mathbf Y-\mathbf Z\widehat\beta_{i,A}
    =(I_m-P)\boldsymbol\varepsilon,
    \qquad
    P:=\mathbf Z(\mathbf Z^\top\mathbf Z)^{-1}\mathbf Z^\top ,
\end{equation*}
because
$\mathbf Y-\mathbf Z\widehat\beta_{i,A}
=\mathbf Z\beta_{i,A}+\boldsymbol\varepsilon
-\mathbf Z\widehat\beta_{i,A}$ and
$\mathbf Z(\widehat\beta_{i,A}-\beta_{i,A})=P\boldsymbol\varepsilon$.
Since $P$ is an orthogonal projection,
\begin{equation*}
    m\widehat\sigma^2_{i,A}
    =\norm{(I_m-P)\boldsymbol\varepsilon}_2^2
    =\norm{\boldsymbol\varepsilon}_2^2
    -\norm{P\boldsymbol\varepsilon}_2^2,
    \qquad
    \norm{P\boldsymbol\varepsilon}_2^2
    =\boldsymbol\varepsilon^\top
    \mathbf Z(\mathbf Z^\top\mathbf Z)^{-1}\mathbf Z^\top
    \boldsymbol\varepsilon
    =m S_m^\top V_m^{-1}S_m ,
\end{equation*}
which gives~\eqref{eq:algoone-residual-identity}.

For the noise energy, the innovations $\varepsilon^{(r)}$ are iid
$\mathcal N(0,\sigma_i^2)$ by
Lemma~\ref{lem:algoone-subtrajectory-regression}, so
$\norm{\boldsymbol\varepsilon}_2^2/\sigma_i^2\sim\chi^2_m$. The
chi-square tail bounds of \citet{laurent2000adaptive} give, for every
$x>0$, each with probability at most $e^{-x}$,
\begin{equation*}
    \norm{\boldsymbol\varepsilon}_2^2
    >\sigma_i^2\bigl(m+2\sqrt{mx}+2x\bigr),
    \qquad
    \norm{\boldsymbol\varepsilon}_2^2
    <\sigma_i^2\bigl(m-2\sqrt{mx}\bigr).
\end{equation*}
Setting $x=\log(2/\delta)$, a union bound and division by $m$
give~\eqref{eq:algoone-noise-energy}.
\end{proof}

The second lemma identifies the signal carried by the partial-oracle score $\Psi^{\ast}_{i,A}$: whenever $\widetilde V_m\succeq\lambda_-I_A$, the score is at least the normalized edge strength, up to the deterministic factor $\sqrt{\lambda_-}$, on every true edge, and it vanishes off the neighborhood.
\begin{lemma}
\label{lem:algoone-signal-conversion}
Suppose $\widetilde V_m\succeq\lambda_- I_A$ for some $\lambda_->0$.
Then, for every $j\in A$,
\begin{equation}
    (V_m^{-1})_{jj}\leq\frac{\Theta_{jj}}{\lambda_-},
    \label{eq:algoone-inverse-diagonal}
\end{equation}
and consequently
\begin{equation}
    \Psi^{\ast}_{i,A}(j)
    \geq
    \kappa_{ij}\sqrt{\lambda_-}
    \quad\text{if }j\in S_i,
    \qquad
    \Psi^{\ast}_{i,A}(j)=0
    \quad\text{if }j\notin S_i .
    \label{eq:algoone-signal-conversion}
\end{equation}
\end{lemma}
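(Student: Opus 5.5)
The inequality~\eqref{eq:algoone-inverse-diagonal} will follow from the definition $\widetilde V_m=D_A^{1/2}V_mD_A^{1/2}$ in~\eqref{eq:algoone-normalized-design} and operator monotonicity of inversion. Since $\widetilde V_m\succeq\lambda_- I_A\succ0$, the matrix $V_m=D_A^{-1/2}\widetilde V_mD_A^{-1/2}$ is invertible. Its inverse is $V_m^{-1}=D_A^{1/2}\widetilde V_m^{-1}D_A^{1/2}$. From $\widetilde V_m\succeq\lambda_-I_A$ we get $\widetilde V_m^{-1}\preceq\lambda_-^{-1}I_A$. Conjugating by $D_A^{1/2}$ preserves the Loewner order, so $V_m^{-1}\preceq\lambda_-^{-1}D_A$. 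Evaluating this at $e_j$ gives $(V_m^{-1})_{jj}\leq\Theta_{jj}/\lambda_-$. Equivalently, one can write $(V_m^{-1})_{jj}=\Theta_{jj}(\widetilde V_m^{-1})_{jj}$ and bound the diagonal entry of $\widetilde V_m^{-1}$ by its largest eigenvalue $1/\lambda_{\min}(\widetilde V_m)\leq1/\lambda_-$.

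For the consequence~\eqref{eq:algoone-signal-conversion}, take the partial-oracle score $\Psi^{\ast}_{i,A}(j)=|\beta_{ij}|/(\sigma_i\sqrt{(V_m^{-1})_{jj}})$ from~\eqref{eq:algoone-partial-oracle-estimates}, which is well defined because $V_m\succ0$.

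\emph{Non-neighbors.} If $j\notin S_i$, then $\beta_{ij}=-\Theta_{ij}/\Theta_{ii}=0$ by~\eqref{eq:ggm-edge}, so the score is exactly zero.

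\emph{Neighbors.} If $j\in S_i$, the first part gives $\sqrt{(V_m^{-1})_{jj}}\leq\sqrt{\Theta_{jj}/\lambda_-}$, and therefore
\[
\Psi^{\ast}_{i,A}(j)\geq\frac{|\beta_{ij}|\sqrt{\lambda_-}}{\sigma_i\sqrt{\Theta_{jj}}}=\kappa_{ij}\sqrt{\lambda_-},
\]
where the last equality is the third expression for $\kappa_{ij}$ in~\eqref{eq:kappa-ij}.

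No step is a real obstacle; the argument is entirely deterministic linear algebra. The only points needing care are that conjugation by the positive diagonal $D_A^{1/2}$ preserves the Loewner order, and that the normalization $\sigma_i\sqrt{\Theta_{jj}}$ matches the definition of $\kappa_{ij}$ exactly.
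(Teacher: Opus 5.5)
Your proposal is correct and follows essentially the same route as the paper's proof. The paper also writes $(V_m^{-1})_{jj}=\Theta_{jj}(\widetilde V_m^{-1})_{jj}\leq\Theta_{jj}/\lambda_-$ and then reads off the two cases from $\beta_{ij}=0$ and the identity $\kappa_{ij}=\lvert\beta_{ij}\rvert/(\sigma_i\sqrt{\Theta_{jj}})$.
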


\paragraph{Proof of Lemma~\ref{lem:algoone-signal-conversion}}
\begin{proof}
Since $V_m^{-1}=D_A^{1/2}\widetilde V_m^{-1}D_A^{1/2}$, we have $(V_m^{-1})_{jj}=\Theta_{jj}(\widetilde V_m^{-1})_{jj} \leq\Theta_{jj} \lambda_{\max}(\widetilde V_m^{-1}) \leq\Theta_{jj}/\lambda_-$, which is~\eqref{eq:algoone-inverse-diagonal}. For $j\notin S_i$ we have $\beta_{ij}=0$ and hence $\Psi^{\ast}_{i,A}(j)=0$. For $j\in S_i$, by Definition~\ref{def:algoone-partial-oracle-estimates} and~\eqref{eq:algoone-inverse-diagonal},
\begin{equation*}
    \Psi^{\ast}_{i,A}(j)
    =
    \frac{\abs{\beta_{ij}}}{\sigma_i\sqrt{(V_m^{-1})_{jj}}}
    \geq
    \frac{\abs{\beta_{ij}}}{\sigma_i\sqrt{\Theta_{jj}}} \sqrt{\lambda_-}
    =
    \kappa_{ij}\sqrt{\lambda_-},
\end{equation*}
where the last equality is~\eqref{eq:kappa-ij}.
\end{proof}

\paragraph{Proof of Proposition~\ref{prop:algoone-separation}}
\label{app:algoone-separation-proof}
\begin{proof}
Set $\bar\lambda^{+}:=\lambda_m^{+}(\delta/9)$ and define three events.
Let $\mathcal E_1$ be the event of
Proposition~\ref{prop:algoone-design-control} applied at failure
probability $\delta/3$: on $\mathcal E_1$, simultaneously for all tested
pairs $(i,A)$,
$\lambda_m^{-}I_A\preceq\widetilde V_m\preceq\bar\lambda^{+}I_A$, and
$\Pbb(\mathcal E_1^c)\leq\delta/3$. Let
\begin{equation*}
    \mathcal E_2
    :=
    \left\{
    S_m^\top V_m^{-1}S_m
    \leq
    \sigma_i^2\varrho_m^2
    \ \text{for every tested pair }(i,A)
    \right\},
    \qquad
    \varrho_m^2
    :=
    \frac{4}{m}
    \left(
        d\log\frac{2\bar\lambda^{+}}{\lambda_m^{-}}
        +\log\frac{3|\mathfrak T|}{\delta}
    \right),
\end{equation*}
and let
$\mathcal E_3:=\bigl\{\abs{\norm{\boldsymbol\varepsilon_i}_2^2/m
-\sigma_i^2}\leq\sigma_i^2\bigl(2\sqrt{x/m}+2x/m\bigr)
\text{ for every }i\in[p]\bigr\}$ with $x:=\log(6p/\delta)$, where
$\boldsymbol\varepsilon_i$ denotes the innovation vector of the first
$m$ updates of node $i$. {The quadratic-form bound~\eqref{eq:algoone-snm-quadratic}, established in the proof of Lemma~\ref{lem:algoone-partial-oracle-error} and instantiated with $(\lambda_-,\lambda_+)=(\lambda_m^{-},\bar\lambda^{+})$ at failure probability $\delta/(3|\mathfrak T|)$,} together with a union bound over the at most $|\mathfrak T|$
tested pairs, gives
$\Pbb(\mathcal E_1\cap\mathcal E_2^c)\leq\delta/3$;
Lemma~\ref{lem:algoone-residual} applied at failure probability $\delta/(3p)$ and a union
bound over the $p$ nodes gives $\Pbb(\mathcal E_3^c)\leq\delta/3$.
Hence $\Pbb(\mathcal E_1\cap\mathcal E_2\cap\mathcal E_3)\geq1-\delta$.

We first record that the separation
budget~\eqref{eq:algoone-separation-budget} implies, for $C$ large
enough, the design budget~\eqref{eq:algoone-design-budget} at failure
probability $\delta/3$ together with the two numeric conditions
\begin{equation}
    \varrho_m\leq\frac{\kappa}{8}\sqrt{\lambda_m^{-}},
    \qquad
    2\sqrt{\frac{x}{m}}+\frac{2x}{m}\leq\frac14 .
    \label{eq:algoone-separation-numeric}
\end{equation}
Indeed, comparing~\eqref{eq:algoone-Bm} at the failure probabilities
$\delta/9$ and $\delta$ gives $\bar\lambda^{+}\leq5\lambda_m^{+}(\delta)$,
and $\lambda_m^{-}\in[\log3/(128q), 1]$ under the budget, so
$\log(2\bar\lambda^{+}/\lambda_m^{-})
\leq\log\bigl(C'q\lambda_m^{+}(\delta)\bigr)$ for a universal
$C'$. The first condition in~\eqref{eq:algoone-separation-numeric} then
reads
$m\geq\bigl(256/(\kappa^2\lambda_m^{-})\bigr)
\bigl\{d\log(C'q\lambda_m^{+}(\delta))+\log(3|\mathfrak T|/\delta)\bigr\}$,
which holds under~\eqref{eq:algoone-separation-budget} because
$1/\lambda_m^{-}\leq128q/\log3$. The second condition and the
design budget are implied similarly, using $\kappa\leq1$ and $|\mathfrak T|\geq p$.

Now, on $\mathcal E_1\cap\mathcal E_2\cap\mathcal E_3$, fix a
triple $(i,A,j)\in\mathfrak T$ with $S_i\subseteq A$. On $\mathcal E_1$,
$\widetilde V_m\succeq\lambda_m^{-}I_A\succ0$, so $V_m\succ0$ and
$\widehat\beta_{i,A}$ is well defined. By
{the coordinatewise bound in the proof of Lemma~\ref{lem:algoone-partial-oracle-error}}, $\mathcal E_1$, $\mathcal E_2$,
and~\eqref{eq:algoone-separation-numeric},
\begin{equation*}
    \abs{\widehat{\Psi}^{\ast}_{i,A}(j)-\Psi^{\ast}_{i,A}(j)}
    \leq
    \frac{\sqrt{S_m^\top V_m^{-1}S_m}}{\sigma_i}
    \leq
    \varrho_m
    \leq
    \frac{\kappa}{8}\sqrt{\lambda_m^{-}} .
\end{equation*}
By the residual identity~\eqref{eq:algoone-residual-identity} together
with $\mathcal E_2$, $\mathcal E_3$,
and~\eqref{eq:algoone-separation-numeric},
\begin{equation*}
    \frac{\widehat\sigma^2_{i,A}}{\sigma_i^2}
    \geq
    1-\frac14-\varrho_m^2
    \geq
    \frac12,
    \qquad
    \frac{\widehat\sigma^2_{i,A}}{\sigma_i^2}
    \leq
    1+\frac14
    \leq
    2,
\end{equation*}
using $\varrho_m^2\leq\kappa^2\lambda_m^{-}/64\leq1/64$ (recall
$\kappa\leq1$ and $\lambda_m^{-}\leq1$). In particular
$\widehat\sigma_{i,A}>0$, so the statistic takes its regular value and
$\sigma_i/\widehat\sigma_{i,A}\in[1/\sqrt2,\sqrt2]$, with
\begin{equation*}
    \widehat{\tssym}_{i,A}(j)
    =
    \frac{\sigma_i}{\widehat\sigma_{i,A}} 
    \widehat{\Psi}^{\ast}_{i,A}(j) .
\end{equation*}

If $j\notin S_i$, Lemma~\ref{lem:algoone-signal-conversion} gives
$\Psi^{\ast}_{i,A}(j)=0$, so
\begin{equation*}
    \widehat{\tssym}_{i,A}(j)
    \leq
    \sqrt2 \widehat{\Psi}^{\ast}_{i,A}(j)
    \leq
    \sqrt2\cdot\frac{\kappa}{8}\sqrt{\lambda_m^{-}}
    \leq
    \frac{\kappa}{2}\sqrt{\lambda_m^{-}}
    =\tau .
\end{equation*}
If $j\in S_i$, Lemma~\ref{lem:algoone-signal-conversion} gives
$\Psi^{\ast}_{i,A}(j)\geq\kappa_{ij}\sqrt{\lambda_m^{-}}
\geq\kappa\sqrt{\lambda_m^{-}}$, so
\begin{equation*}
    \widehat{\tssym}_{i,A}(j)
    \geq
    \frac{1}{\sqrt2}
    \left(\kappa\sqrt{\lambda_m^{-}}
    -\frac{\kappa}{8}\sqrt{\lambda_m^{-}}\right)
    =
    \frac{7}{8\sqrt2} \kappa\sqrt{\lambda_m^{-}}
    >
    \frac{\kappa}{2}\sqrt{\lambda_m^{-}}
    =\tau ,
\end{equation*}
since $7/(8\sqrt2)>1/2$. This is exactly the separation
property~\eqref{eq:algoone-separation-conclusion}.
\end{proof}

\subsection{Finite-horizon guarantee}
The final proof combines Proposition~\ref{prop:algoone-separation} with the row-collection event from Lemma~\ref{lem:row-collection}. On their intersection, Proposition~\ref{prop:generic-separation-recovery} gives exact recovery for the observed trajectory.
\paragraph{Proof of Theorem~\ref{thm:algoone-main}}
\label{app:algoone-main-proof}
\begin{proof}
Throughout, $m=\lfloor N/(2p)\rfloor$. Let
$\mathcal C_N=\{T_m\leq N\}$ be the row-collection event
of~\eqref{eq:algoone-clock-event}, and let $\mathcal S$ be the event
that $\widehat{\tssym}$ satisfies the separation property of
Definition~\ref{def:generic-separation} with threshold
$\tau=\frac{\kappa}{2}\sqrt{\lambda_m^{-}}$.

\emph{Reduction to the generic constants.} Since $\kappa\leq1$,
$E_0+N+\log(2/\delta)\geq1$, and $\rho_{2d}(\Theta)\geq1$ (indeed
$\rho_{2d}(\Theta)\geq\max_{j}\Theta_{jj}\Sigma_{jj}\geq1$, the
marginal variance dominating the conditional one), the right-hand side
of~\eqref{eq:algoone-main-budget} is at least
$Cpd\{d\log(Cpd)+\log(1/\delta)\}$, so the budget with $C\geq16$
gives
\begin{equation}
    N\geq8p\log\frac{2p}{\delta} .
    \label{eq:algoone-proof-clockscale}
\end{equation}
By the choice of $m$, $2pm\leq N$; combined
with~\eqref{eq:algoone-proof-clockscale}, the definition of $t_m$ in
\eqref{eq:algoone-Bm} gives
$t_m(\delta/2)\leq N+1\leq2N$. Hence, by~\eqref{eq:algoone-Bm} and
$2\sqrt{2Nx}\leq N+2x$,
\begin{equation*}
    B_m(\delta)
    \leq
    E_0+3N+4\log\frac{2}{\delta}
    \leq
    4\left(E_0+N+\log\frac{2}{\delta}\right),
    \qquad
    \lambda_m^{+}(\delta)
    \leq
    4\rho_{2d}(\Theta)\left(E_0+N+\log\frac{2}{\delta}\right).
\end{equation*}
Since also $q=2d+1\leq3d$ and
$\log(|\mathfrak T|/\delta)\leq4d\log p+\log(1/\delta)$, it follows that, for any
prescribed universal constant $C_0$, the
budget~\eqref{eq:algoone-main-budget} with $C$ large enough implies
\begin{equation}
    N
    \geq
    \frac{C_0\,p\,q}{\kappa^{2}}
    \left\{
        2d\log\!\bigl(C_0q\lambda_m^{+}(\delta)\bigr)
        +\log\frac{|\mathfrak T|}{\delta}
    \right\}.
    \label{eq:algoone-proof-generic-budget}
\end{equation}

\emph{Clock.} By~\eqref{eq:algoone-proof-clockscale} and $2pm\leq N$,
since $N$ is an integer we have $N\geq t_m(\delta/2)$, and
Lemma~\ref{lem:row-collection} applied at failure probability
$\delta/2$ yields
\begin{equation*}
    \Pbb\bigl(\mathcal C_N^c\bigr)
    \leq
    \Pbb\bigl(T_m>t_m(\delta/2)\bigr)
    \leq
    \frac{\delta}{2} .
\end{equation*}

\emph{Separation.} The inequality~\eqref{eq:algoone-proof-generic-budget} gives
\begin{equation*}
    m
    \geq
    \frac{N}{2p}-1
    \geq
    \frac{C_0\,q}{2\kappa^{2}}
    \left\{
        2d\log\!\bigl(C_0q\lambda_m^{+}(\delta)\bigr)
        +\log\frac{|\mathfrak T|}{\delta}
    \right\}
    -1 .
\end{equation*}
A direct comparison of~\eqref{eq:algoone-Bm} at the failure
probabilities $\delta/2$ and $\delta$ gives
$\lambda_m^{+}(\delta/2)\leq2\lambda_m^{+}(\delta)$, and
$\log(2|\mathfrak T|/\delta)\leq2\log(|\mathfrak T|/\delta)$. Hence, for $C_0$ large enough, the
inequality above implies the separation
budget~\eqref{eq:algoone-separation-budget} at failure probability
$\delta/2$, and Proposition~\ref{prop:algoone-separation} yields
$\Pbb(\mathcal S^c)\leq\delta/2$.

\emph{Combination.} On $\mathcal C_N$, every node receives at least $m$
updates within the horizon, so the statistic computed from the observed
trajectory coincides with the fixed-row statistic analyzed in
Sections~\ref{sec:design-control} and~\ref{sec:algoone-separation}, and
the empty-output convention is not invoked. On
$\mathcal C_N\cap\mathcal S$, the statistic satisfies the separation
property, so Proposition~\ref{prop:generic-separation-recovery} gives
$\widehat{\Ecal}=\Ecal$. A union bound concludes:
\begin{equation*}
    \Pbb\bigl(\widehat{\Ecal}\neq\Ecal\bigr)
    \leq
    \Pbb\bigl(\mathcal C_N^c\bigr)
    +
    \Pbb\bigl(\mathcal S^c\bigr)
    \leq
    \delta . \qedhere
\end{equation*}
\end{proof}

\section{Proofs for \algotwo{}}\label{app:algo2-proof-of-lemmas}
We present the proofs for \algotwo{} in this section. We first establish the block regression identity and the ingredients required for the ``seperation'' condition for a fixed-triple. We then establish an estimate of the horizon length that ensures the required number of blocks are available with high probability. The final subsection then combines these estimates over the tested family to prove finite-sample recovery.

\subsection{Block regression and fixed-triple separation}
We first turn to one tested triple $(i,A,j)$. The $i,i,j,i$ pattern gives the one-dimensional regression identity used by the statistic. The fixed-triple separation proposition then follows by combining the conditional regression bounds with the null and signal cases, which are proved below.
\paragraph{Proof of Lemma~\ref{lem:iiji-regression}}\label{app:iiji-regression-proof}
\begin{proof}
By definition, the labels of an $A$-$iiji$ block are $i,i,j,i$ at the consecutive {$\bar A$-update} times $t^{(b)}_1 < t^{(b)}_2 < t^{(b)}_3 < t^{(b)}_4$. Therefore, between two consecutive {$\bar A$-update} times only coordinates outside {$\bar A$} are updated. So,
\begin{align*}
    y_b
    &= X_i^{(t^{(b)}_4)} - X_i^{(t^{(b)}_1)} \\
    &\stackrel{\mathrm{(a)}}{=} \sum_{k \in S_i} \beta_{ik}\bigl(X_k^{(t^{(b)}_4 - 1)} - X_k^{(t^{(b)}_1 - 1)}\bigr) + \xi_b \\
    &\stackrel{\mathrm{(b)}}{=} \beta_{ij}\bigl(X_j^{(t^{(b)}_4 - 1)} - X_j^{(t^{(b)}_1 - 1)}\bigr) + \xi_b \\
    &\stackrel{\mathrm{(c)}}{=} \beta_{ij}\bigl(X_j^{(t^{(b)}_3)} - X_j^{(t^{(b)}_2)}\bigr) + \xi_b \\
    &=\beta_{ij}  x_b + \xi_b ,
\end{align*}
which is the claimed identity~\eqref{eq:algo2-iiji-regression}. In step~(b), every neighbor $k \in S_i \subseteq A$ is frozen across the block and $X_k^{(t^{(b)}_4 - 1)} = X_k^{(t^{(b)}_1 - 1)}$; the corresponding terms cancel. In step~(c), coordinate $j$ is updated only at $t^{(b)}_3$, so $X_j^{(t^{(b)}_1 - 1)} = X_j^{(t^{(b)}_2)}$, and $X_j^{(t^{(b)}_4 - 1)} = X_j^{(t^{(b)}_3)}$. For the noise structure, recall that the innovations $\{\varepsilon_i^{(n)} : I^{(n)} = i\}$ are iid $\mathcal{N}(0, \sigma_i^2)$. Each $\xi_b = \varepsilon_i^{(t^{(b)}_4)} - \varepsilon_i^{(t^{(b)}_1)}$ is a difference of two of them. Since the blocks are non-overlapping and the innovations are independent of the update labels, the $\xi_b$ are iid $\mathcal{N}(0, 2\sigma_i^2)$.

For $x_b \perp \xi_b$ it suffices that $x_b$ involves neither innovation composing $\xi_b$. Since new innovations are independent of past values and $x_b$ is fixed by step $t^{(b)}_3 < t^{(b)}_4$, we get $\varepsilon_i^{(t^{(b)}_4)} \perp x_b$. Further, the $j$-update at $t^{(b)}_3$ is
\begin{align*}
    X_j^{(t^{(b)}_3)}
    &=
    \sum_{m \in S_j} \beta_{jm}  X_m^{(t^{(b)}_3 - 1)}
    +
    \varepsilon_j^{(t^{(b)}_3)},
\end{align*}
whose only $i$-dependent regressor is $X_i^{(t^{(b)}_3 - 1)} = X_i^{(t^{(b)}_2)} = \sum_{k \in S_i} \beta_{ik}  X_k^{(t^{(b)}_1 - 1)} + \varepsilon_i^{(t^{(b)}_2)}$, which carries $\varepsilon_i^{(t^{(b)}_2)}$, not $\varepsilon_i^{(t^{(b)}_1)}$. Further, no other regressor carries $\varepsilon_i^{(t^{(b)}_1)}$, since it could spread only through a neighbor of $i$ updating during the block, and every neighbor of $i$ lies in $A$ and stays frozen. Hence $x_b \perp \xi_b$.
\end{proof}
The regression identity reduces the statistic for a fixed triple to a one-dimensional comparison. To turn that identity into separation, we control the noise-normalized regression drop under the null and under a nonzero edge. The proposition records the resulting dichotomy.
\paragraph{Proof of Proposition~\ref{prop:algo2-fixed-triple-separation}}\label{app:algo2-fixed-triple-separation-proof}

\textit{\textbf{Proposition~\ref{prop:algo2-fixed-triple-separation}:}} Fix a triple $(i,A,j)$ with $j\in A$ and $S_i\setminus\{j\}\subseteq A$. For every $\delta \in (0,1)$, if
\begin{align*}
    n_0 \geq \frac{5120}{\kappa^2}\log\frac{3e n_0}{\delta},
\end{align*}
and $\tau = \frac{\kappa^{2}}{128}$, then with probability at least $1-\delta$,
\begin{equation*}
    \begin{cases}
        \Phi_{i,A}(j)\leq \tau, & j\notin S_i,\\[1mm]
        \Phi_{i,A}(j)> \tau, & j\in S_i.
    \end{cases}
\end{equation*}
\begin{proof}
    First suppose $j\notin S_i$. Since $\kappa\leq 1$,
    \begin{equation*}
        n_0
        \geq
        \frac{5120}{\kappa^2}\log\frac{3e n_0}{\delta}
        >
        80\log\frac{2e n_0}{\delta}.
    \end{equation*}
    Lemma~\ref{lem:algo2-null} gives, with probability at least $1-\delta$,
    \begin{equation*}
        \Phi_{i,A}(j)
        \leq
        40\frac{\log(2e n_0/\delta)}{n_0}
        \leq
        40\frac{\log(3e n_0/\delta)}{n_0}
        \leq
        \frac{\kappa^2}{128}
        =
        \tau.
    \end{equation*}

    Now suppose $j\in S_i$. The sample-size condition implies the hypothesis of Lemma~\ref{lem:algo2-alt}. Hence, with probability at least $1-\delta$,
    \begin{equation*}
        \Phi_{i,A}(j)
        \geq
        \frac{\kappa^2}{64}
        >
        \frac{\kappa^2}{128}
        =
        \tau.
    \end{equation*}
\end{proof}

\subsection{Block availability}
The statistic for $(i,A,j)$ is formed from the first $n_0$ completed blocks, so it is available by time $N$ on the event $N_{(i,A,j)}(N)\geq n_0$. We now record the separate clock estimate that guarantees this event by a prescribed global time.
\paragraph{Proof of Lemma~\ref{lem:algo2-fixed-triple-clock}}\label{app:fixed-triple-clock-proof}

\textit{\textbf{Lemma~\ref{lem:algo2-fixed-triple-clock}:}} Fix a triple $(i,A,j)$ with $j\in A$, and write $q:=|A|+1$. For every $\delta\in(0,1)$ and every $n_0\geq 1$, if
\begin{equation*}
    N
    \geq
    128 p q^3\left(n_0+\log\frac{2}{\delta}\right),
\end{equation*}
then
\begin{equation*}
    \Pbb\left(
        N_{(i,A,j)}(N)<n_0
    \right)
    \leq
    \delta.
\end{equation*}

\begin{proof}
Let
\begin{equation*}
    K_{\bar A}(N):=\max\{\ell:t_{\bar A}^{(\ell)}\leq N\}
\end{equation*}
be the number of {$\bar A$}-clock ticks among the first $N$ global updates. Since the global labels are iid uniform on $\Vcal$,
\begin{equation}
    K_{\bar A}(N)\sim \operatorname{Bin}\left(N,\frac{q}{p}\right),
    \qquad
    \mu:=\Ebb K_{\bar A}(N)=\frac{Nq}{p}.
\end{equation}
Set $L:=\log(2/\delta)$. Under~\eqref{eq:algo2-fixed-triple-clock-condition},
\begin{equation}
    \mu
    =
    \frac{Nq}{p}
    \geq
    128q^4(n_0+L).
\end{equation}
By the standard binomial Chernoff lower-tail bound,
\begin{equation}
    \Pbb\left(K_{\bar A}(N)<\frac{\mu}{2}\right)
    \leq
    \exp\left(-\frac{\mu}{8}\right).
    \label{eq:algo2-retained-count-lower-tail}
\end{equation}
Since $q\geq 2$ and $n_0\geq 1$, this is at most $e^{-L}=\delta/2$.

Condition on $K_{\bar A}(N)=m$. The retained labels are iid uniform on {$\bar A$}. After partitioning them into disjoint four-label windows, the number of $A$-$iiji$ blocks has distribution
\begin{equation}
    \operatorname{Bin}\left(
        \left\lfloor \frac{m}{4}\right\rfloor,
        q^{-4}
    \right).
\end{equation}
On the event $m\geq \mu/2$, we have $\mu\geq 16$ and hence
\begin{equation}
    \left\lfloor \frac{m}{4}\right\rfloor
    \geq
    \frac{m}{8}
    \geq
    \frac{\mu}{16}.
\end{equation}
Thus the conditional mean of the block count is at least
\begin{equation}
    \frac{\mu}{16}q^{-4}
    =
    \frac{N}{16pq^3}.
\end{equation}
By~\eqref{eq:algo2-fixed-triple-clock-condition}, this lower bound is at least $8(n_0+L)$, and in particular at least $2n_0$. Therefore the event that the block count is below $n_0$ is contained in its lower-tail event below half its conditional mean. Applying the standard binomial Chernoff lower-tail bound again gives, uniformly over all $m\geq \mu/2$,
\begin{equation}
    \Pbb\left(
        N_{(i,A,j)}(N)<n_0
         \middle| 
        K_{\bar A}(N)=m
    \right)
    \leq
    \exp\left(-\frac{N}{128pq^3}\right).
    \label{eq:algo2-block-count-lower-tail}
\end{equation}
The exponent is at least $L$, so this is at most $\delta/2$.
Combining~\eqref{eq:algo2-retained-count-lower-tail} and~\eqref{eq:algo2-block-count-lower-tail},
\begin{equation}
    \Pbb\left(
        N_{(i,A,j)}(N)<n_0
    \right)
    \leq
    \exp\left(-\frac{Nq}{8p}\right)
    +
    \exp\left(-\frac{N}{128pq^3}\right).
\end{equation}
The two terms are each at most $\delta/2$, which proves the claim.
\end{proof}

\subsection{Conditional regression and case bounds}\label{app:algo2-conditional-regression}
We next supply the concentration estimates invoked in Proposition~\ref{prop:algo2-fixed-triple-separation}. Lemma~\ref{lem:algo2-snm-bound} controls the self-normalized cross term, Lemma~\ref{lem:algo2-noise-energy} controls the noise energy, and Lemma~\ref{lem:algo2-causal-design} explains why these bounds remain valid even though the block covariates are dependent across blocks. The last three proofs treat the null and signal cases separately.
{We first construct the filtration on which the self-normalization argument rests. For a fixed triple $(i,A,j)$, recall that $\xi_b = \varepsilon_i^{(t^{(b)}_4)}-\varepsilon_i^{(t^{(b)}_1)}$, and define $\eta_b := \varepsilon_i^{(t^{(b)}_4)}+\varepsilon_i^{(t^{(b)}_1)}$. Let $\mathcal M:=\{t^{(b)}_1,t^{(b)}_4:1\le b\le n_0\}$ be the set of measurement update times. Recall that {$\xi_{1:n}, \eta_{1:n}$} denotes $\{ \xi_{b}\}_{b \in [n]}$ and $\{ \eta_{b}\}_{b \in [n]}$, respectively.}
\begin{definition}[Base sigma-field and filtration]
\label{def:algo2-base-sigma-field}
The base sigma-field for the first $n_0$ $A$-$iiji$ blocks is
\begin{align}
    \mathscr B
    :=
    \sigma\!\left(
        {I^{(1:\infty)}},
        \{\varepsilon(s):s\notin\mathcal M\},\,
        {\eta_{1:n_{0}}}
    \right),
\end{align}
where $I^{(1:\infty)}$ denotes the full update schedule. {The associated filtration is $\mathcal{F}_{b} := \mathscr B \vee \sigma({\xi_{1:b}})$ for $b \geq 1$, with $\mathcal{F}_0 := \mathscr B$.}
\end{definition}
The following lemma then establishes the required properties of the sequence $\{x_{b}, \xi_{b}\}_{1 \leq b \leq n_{0}}$ under the filtration defined above. 
\begin{lemma}
\label{lem:algo2-causal-design}
Let $\mathscr B$ be the base sigma-field and $(\mathcal{F}_{b})_{b\geq 0}$ {the filtration} of Definition~\ref{def:algo2-base-sigma-field}. Then the following {hold} for $b > 0$:
\begin{enumerate}
    \item $x_{b} \mid \mathscr B$ is an affine function of $(\xi_{1},\dots,\xi_{b-1})$. In particular, there {exist} $u_{b} \in \mathbb{R}$ and $L_{b,a} \in \mathbb{R}, a < b$, $\mathscr B$-{measurable} such that
    \begin{align}
        x_{b} \mid \mathscr B = u_{b} + \sum_{a < b} L_{ba} \xi_{a}.
    \end{align}
    \item  $x_{b}$ is $\mathcal{F}_{b-1}$ measureable.
    \item  $\xi_{b} \mid \mathcal{F}_{b-1} \sim \mathcal{N}(0, 2\sigma_{i}^{2})$.
\end{enumerate}
\end{lemma}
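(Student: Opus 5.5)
The plan is to exploit linearity of the Gaussian Glauber dynamics conditionally on the label sequence. Fix a realization of $I^{(1:\infty)}$. Iterating \eqref{eq:glauber-additive}--\eqref{eq:glauber-frozen} shows that every state $X^{(t)}$ is an affine function of the deterministic $x^{(0)}$ and of the consumed innovations $\{\varepsilon(s): s\le t\}$. The coefficients of this affine map are functions of the labels alone, hence $\mathscr B$-measurable. The block times $t^{(b)}_k$, and therefore the set $\mathcal M$, are also label-measurable. I would then change variables on the measurement innovations. For each $a\le n_0$, the pair $\bigl(\varepsilon_i^{(t^{(a)}_1)},\varepsilon_i^{(t^{(a)}_4)}\bigr)$ is replaced by $(\xi_a,\eta_a)$, using $\varepsilon_i^{(t^{(a)}_1)}=(\eta_a-\xi_a)/2$ and $\varepsilon_i^{(t^{(a)}_4)}=(\eta_a+\xi_a)/2$. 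Substituting shows that $x_b$ is an affine function of $x^{(0)}$, of the non-measurement innovations, and of those $\xi_a,\eta_a$ whose underlying times precede $t^{(b)}_3$.

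The key step, and the one I expect to be the main obstacle, is to show that this affine representation of $x_b$ involves only $\xi_a$ with $a<b$. Since $x_b=X_j^{(t^{(b)}_3)}-X_j^{(t^{(b)}_2)}$ is fixed by time $t^{(b)}_3<t^{(b)}_4$, the innovation $\varepsilon_i^{(t^{(b)}_4)}$ cannot appear, and neither can any later measurement innovation. It remains to exclude $\varepsilon_i^{(t^{(b)}_1)}$. I would reuse the propagation argument from the proof of Lemma~\ref{lem:iiji-regression}, now under the hypothesis $S_i\setminus\{j\}\subseteq A$, which is standing for the triple. The innovation $\varepsilon_i^{(t^{(b)}_1)}$ enters the state only through $X_i$ during the interval $[t^{(b)}_1,t^{(b)}_2)$, and $X_i$ is overwritten at $t^{(b)}_2$. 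During that interval the only coordinates that could absorb it are neighbors of $i$ that get updated. Every such neighbor lies in $A$, since $j\in A$ and $S_i\setminus\{j\}\subseteq A$. But no coordinate of $\bar A$ is updated strictly between consecutive $\bar A$-ticks. So $\varepsilon_i^{(t^{(b)}_1)}$ never reaches $X_j$ before $t^{(b)}_3$, and never reaches $X_j^{(t^{(b)}_2)}$. Hence $x_b=u_b+\sum_{a<b}L_{ba}\xi_a$. Here $u_b$ collects $x^{(0)}$, the non-measurement innovations and the $\eta$'s, all of which are $\mathscr B$-measurable. The $L_{ba}$ are $\mathscr B$-measurable combinations of label-determined coefficients. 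This gives item~1.

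Item~2 then follows at once, because $u_b$, the $L_{ba}$ and $\xi_{1:b-1}$ are all $\mathcal F_{b-1}$-measurable. For item~3, I would argue conditionally on the labels, which is legitimate since the innovation array is independent of $I^{(1:\infty)}$. Given the labels, the innovations at distinct times are independent Gaussians; in particular the measurement innovations $\varepsilon_i^{(t)}$, $t\in\mathcal M$, are iid $\mathcal N(0,\sigma_i^2)$. For each $a$, the pair $(\xi_a,\eta_a)$ is a sum and difference of two iid Gaussians of equal variance, so $\xi_a$ and $\eta_a$ are independent, each $\mathcal N(0,2\sigma_i^2)$. Consequently the whole family $\{\varepsilon(s):s\notin\mathcal M\}\cup\{\eta_a\}\cup\{\xi_a\}$ is mutually independent given the labels, and its law does not depend on the labels. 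Thus $\xi_b$ is independent of $\mathscr B\vee\sigma(\xi_{1:b-1})=\mathcal F_{b-1}$ with law $\mathcal N(0,2\sigma_i^2)$.

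One technical point to check along the way is that the first $n_0$ blocks occur at almost surely finite times, which was noted before \eqref{eq:algo2-vecform}. This makes $\mathcal M$ a well-defined finite label-measurable set, so the change of variables above is legitimate.
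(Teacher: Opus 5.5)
Your proposal is correct and takes essentially the same route as the paper. The paper likewise unrolls the linear dynamics conditionally on the schedule, with label-measurable coefficients, and rewrites each measurement pair as $(\xi_a,\eta_a)$. It absorbs $x^{(0)}$, the non-measurement innovations and the $\eta$'s into $u_b$, and gets item~3 from the independence of the sum and difference of two iid Gaussians.

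The one difference is minor. To remove the same-block term, the paper simply cites $x_b\perp\xi_b$ from Lemma~\ref{lem:iiji-regression}. You instead rerun the propagation argument to show the coefficient of $\varepsilon_i^{(t^{(b)}_1)}$ in $x_b$ is exactly zero, which is the statement actually needed.
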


\paragraph{Proof of Lemma~\ref{lem:algo2-snm-bound}}\label{app:snm-bound-lem-proof}
With Lemma~\ref{lem:algo2-causal-design}, we apply the standard self-normalization machinery~\cite{de2004self}. Write $S:=\mathbf{X}^{\top}\boldsymbol{\xi}$ and $V:=\norm{\mathbf{X}}^{2}$. For each fixed deterministic $\lambda\in\mathbb{R}$, we have the conditional exponential identity $\Ebb[\exp\{\lambda S-\lambda^{2}\sigma_i^{2}V\}\mid\mathscr B]=1$. The exponent is maximized at $\lambda^\ast:=S/(2\sigma_i^{2}V)$, with maximum $S^{2}/(4\sigma_i^{2}V)$. Since $\lambda^\ast$ depends on the realized values of $S$ and the unbounded quantity $V$, we cannot use the fixed-$\lambda$ identity at $\lambda^\ast$ directly. We therefore first restrict $V$ and $\norm{\boldsymbol{\xi}}$ to high-probability ranges, partition the range of $V$ into dyadic intervals, and construct a deterministic grid covering the corresponding values of $\lambda^\ast$ on each interval{; this dyadic partitioning is an instance of the peeling device of empirical process theory~\citep{vandegeer2000empirical}, and parallels the ``stitching'' construction of~\citet{howard2021time} for self-normalized bounds}. Applying the fixed-$\lambda$ identity on these grids and taking a union bound gives the logarithmic estimate below. This bound is sufficient for the separation argument, though  it may be possible to sharpen it.

\begin{proof}
    Write $S = \mathbf{X}^\top \boldsymbol{\xi} = \sum_{b=1}^{n_0} x_b\xi_b, V = \norm{\mathbf{X}}^2 = \sum_{b=1}^{n_0} x_b^2 .$ We prove the bound conditionally on the base sigma-field $\mathscr B$. The unconditional claim follows by averaging. By Lemma~\ref{lem:algo2-causal-design}, conditional on $\mathscr B$ there is a $\mathscr B$-measurable vector $u \in  \mathbb{R}^{n_{0}}$ and a strictly lower triangular $\mathscr B$-measurable matrix $L \in \mathbb{R}^{n_{0} \times n_{0}}$ such that
    \begin{align*}
        x = u+L\xi \text{ with } \xi \sim \mathcal N(0,2\sigma_i^2 I_{n_0}).
    \end{align*}
    Write $\zeta:=\xi/(\sqrt{2}\sigma_i)$ and $\bar L:=\sqrt{2}\sigma_i L$. Then $\zeta\sim\mathcal N(0,I_{n_0})$ and $ x = u+\bar L\zeta$.  Note that the ratio $S^2/V$ is invariant to scaling in $x,u,\bar L$ by any $\mathscr B$-measurable scalar $r>0$. That is, taking $r^2:=\norm{u}^2+\norm{\bar L}_{\mathrm F}^2,$ and setting, $\tilde{x} = \frac{x}{r}, \tilde{u} = \frac{u}{r}$ and $\tilde{L} = \frac{L}{r}$ we have
    \begin{align*}
    \frac{S^{2}}{V} = \frac{(\mathbf{X}^\top \boldsymbol{\xi} )^{2}}{\norm{\mathbf{X}}^{2}} = \frac{(\tilde{x}^\top {\xi})^{2}}{\norm{\tilde{x}}^{2}}
    \end{align*}
    So we can work with the scaled terms or we may assume without loss of generality that
    \begin{align*}
        \norm{u}^2+\norm{\bar L}_{\mathrm F}^2
        &=
        1.
    \end{align*}
    Now, for every fixed $\lambda\in\mathbb R$, the fixed-$\lambda$ exponential identity gives
    \begin{align*}
        \E[]{\exp\left\{\lambda S-\lambda^2\sigma_i^2 V\right\} \mid \mathscr B} = 1
    \end{align*}
    Let $M(\lambda) := \exp\left\{\lambda S-\lambda^2\sigma_i^2 V\right\}$. This function is maximized at the data-dependent value
    \begin{align*}
        \lambda^\ast = \frac{S}{2\sigma_i^2 V}, \qquad \sup_{\lambda\in\mathbb R}\left\{\lambda S-\lambda^2\sigma_i^2 V\right\}
        = \frac{S^2}{4\sigma_i^2 V}.
    \end{align*}
    We cannot substitute $\lambda^\ast$ directly into the preceding exponential identity, since $\lambda^\ast$ depends on the data. The remainder of the proof discretizes a high-probability range for $\lambda^\ast$.

    Under this normalization,
    \begin{align*}
        |\lambda^\ast|
        &=
        \frac{|S|}{2\sigma_i^2 V}
        \leq
        \frac{\norm{\mathbf{X}}\norm{\boldsymbol{\xi}}}{2\sigma_i^2 V}
        \leq
        \frac{\norm{\boldsymbol{\xi}}}{2\sigma_i^2\sqrt V}.
    \end{align*}
    Thus an upper bound on $\norm{\boldsymbol{\xi}}$ and a lower bound on $V$ give a finite high-probability interval for $\lambda^\ast$. We formalize these controls through the event
    \begin{align*}
        \mathcal E
        :=
        \left\{
            \norm{\boldsymbol{\xi}}\leq R
        \right\}
        \cap
        \left\{
            v_0\leq V\leq U
        \right\}.
    \end{align*}
    We choose the three cutoffs so that each failure event has small conditional probability. First, since $\xi\mid\mathscr B\sim \mathcal N(0,2\sigma_i^2 I_{n_0})$, we have $\norm{\boldsymbol{\xi}}^2/(2\sigma_i^2)\sim\chi^2_{n_0}$ conditionally on $\mathscr B$. The Laurent--Massart chi-square bound~\citep{laurent2000adaptive} states that, for $Z\sim\chi^2_m$ and every $t>0$,
    \begin{align*}
        \Pbb\left(Z\geq m+2\sqrt{mt}+2t\right) \leq
        e^{-t}.
    \end{align*}
    Applying this with $m=n_0$ and $t=\log(12/\delta)$ gives
    \begin{align*}
        \Pbb\left(\norm{\boldsymbol{\xi}}>R \middle| \mathscr B\right) \leq \delta/12, \text{ where } R^2 := 2\sigma_i^2
        \left( n_0 + 2\sqrt{n_0\log\frac{12}{\delta}} + 2\log\frac{12}{\delta} \right).
    \end{align*}
    Second, under the normalization we have,
    \begin{align*}
        \Ebb[V\mid\mathscr B]
        &= \Ebb[\norm{u+\bar L\zeta}^{2}\mid\mathscr B] \\
        &= \norm{u}^{2} + \Ebb[\zeta^\top \bar L^\top\bar L\zeta\mid\mathscr B] \\
        &= \norm{u}^{2} + \operatorname{tr}(\bar L^\top\bar L) \\
        &= \norm{u}^{2}+\norm{\bar L}_{\mathrm F}^{2} = 1.
    \end{align*}
    For concentration, notice that the map $f(\zeta)=\norm{u+\bar L\zeta}$ is $\norm{\bar L}_{\mathrm{op}}$-Lipschitz, and $\norm{\bar L}_{\mathrm{op}}\leq\norm{\bar L}_{\mathrm F}\leq 1$. The Gaussian concentration inequality for Lipschitz functions~\citep[Theorem~5.6]{boucheron2013concentration} gives, for every $t>0$,
    \begin{align*}
        \Pbb\left(
            \norm{\mathbf{X}}
            >
            \Ebb[\norm{\mathbf{X}}\mid\mathscr B]+t
             \middle| \mathscr B
        \right)
        &\leq
        \exp\left(-\frac{t^2}{2}\right).
    \end{align*}
    By Jensen's inequality and the preceding second-moment identity, $\Ebb[\norm{\mathbf{X}}\mid\mathscr B]\leq \sqrt{\Ebb[V\mid\mathscr B]}=1$. Taking $t=\sqrt{2\log(12/\delta)}$ gives
    \begin{align*}
        \Pbb\left(
            V
            >
            \left(1+\sqrt{2\log\frac{12}{\delta}}\right)^2
             \middle| \mathscr B
        \right)
        &\leq
        \delta/12 .
    \end{align*}
    Thus we may take
    \begin{align*}
        U
        :=
        \left(1+\sqrt{2\log\frac{12}{\delta}}\right)^2 .
    \end{align*}
    Finally, we lower bound $V=\norm{\mathbf{X}}^2$. Since $\norm{x}\geq|a^\top x|$ for any unit vector $a$, it suffices to exhibit one direction carrying a definite amount of energy and then to rule out that $x$ is atypically small along it. The only properties of $a$ we use are that it is $\mathscr B$-measurable and that $\Ebb[(a^\top x)^2\mid\mathscr B]\geq1/n_0$; any such direction serves. Write $M:=\Ebb[xx^\top\mid\mathscr B]$, which is positive semidefinite, $\mathscr B$-measurable, and satisfies $\operatorname{tr}M=\Ebb[\norm{x}^2\mid\mathscr B]=1$. Its eigenvalues are nonnegative and sum to one, so its largest eigenvalue is at least $1/n_0$; let $a$ be a corresponding unit eigenvector, selected by a fixed measurable rule so that $a$ is $\mathscr B$-measurable, and set $q_a:=\Ebb[(a^\top x)^2\mid\mathscr B]=a^\top Ma\geq1/n_0$. Conditional on $\mathscr B$ the vector $a$ is deterministic, so $a^\top x=a^\top u+(\bar L^\top a)^\top\zeta$ is a univariate Gaussian whose variance $\norm{\bar L^\top a}^2$ may vanish, as it does when $\bar L=0$. We therefore prove a small-ball bound for univariate Gaussians in terms of the second moment. Let $Y\sim\mathcal N(\mu,\sigma^2)$, possibly with $\sigma=0$, and write
    \begin{align*}
        q:=\Ebb[Y^2]=\mu^2+\sigma^2.
    \end{align*}
    We claim that, for every $0\leq r\leq\sqrt q$,
    \begin{align}
        \Pbb\left(|Y|<r\right)
        \leq
        2\Phi\left(\frac{r}{\sqrt q}\right)-1
        \leq
        \frac{2r}{\sqrt{2\pi q}},
        \label{eq:one-dimensional-gaussian-small-ball}
    \end{align}
    where $\Phi$ denotes the standard Gaussian distribution function. If $\sigma=0$, then $|Y|=\sqrt q$ almost surely, and the claim is immediate. Suppose that $\sigma>0$. By symmetry, we may take $\mu\geq0$. Set
    \begin{align*}
        t:=\frac{\mu}{\sigma},
        \qquad
        \rho:=\frac{r}{\sqrt q}\leq1.
    \end{align*}
    For a standard Gaussian random variable $Z$, $\frac{Y}{\sqrt q} \stackrel{\mathrm d}{=} \frac{t+Z}{\sqrt{1+t^2}}.$ 
    Hence, with $\phi$ denoting the standard Gaussian density,
    \begin{align*}
        p_\rho(t) := \Pbb\left(|Y|<r\right) = \Phi\left(\rho\sqrt{1+t^2}-t\right) + \Phi\left(\rho\sqrt{1+t^2}+t\right) -1.
    \end{align*}
    To show that $p_\rho$ is nonincreasing, define $A:=\rho\sqrt{1+t^2}, b:=\frac{\rho t}{\sqrt{1+t^2}}$. Differentiation gives
    \begin{align*}
        p_\rho'(t)
        &=
        \phi(A-t)
        \left[
            b-1+(b+1)e^{-2At}
        \right].
    \end{align*}
    If $y:=t/\sqrt{1+t^2}$, then
    \begin{align*}
        \operatorname{arctanh}(b)
        =
        \operatorname{arctanh}(\rho y)
        \leq
        \frac{\rho y}{1-\rho^2y^2}
        \leq
        \frac{\rho y}{1-y^2}
        =
        At.
    \end{align*}
    Thus $e^{-2At}\leq(1-b)/(1+b)$, and therefore $p_\rho'(t)\leq0$. It follows that
    \begin{align*}
        p_\rho(t)
        \leq
        p_\rho(0)
        =
        2\Phi(\rho)-1
        \leq
        \frac{2\rho}{\sqrt{2\pi}},
    \end{align*}
    which proves~\eqref{eq:one-dimensional-gaussian-small-ball}.  Recall that, conditional on $\mathscr B$, the random variable $a^\top x$ is a possibly noncentered Gaussian with second moment $q_a\geq1/n_0$. We now take
    \begin{align*}
        v_0
        &:=
        \frac{\pi\delta^2}{288n_0}.
    \end{align*}
    Since $\delta\in(0,1)$, we have $v_0<1/n_0\leq q_a$. We may therefore apply~\eqref{eq:one-dimensional-gaussian-small-ball} conditionally with $Y=a^\top x$ and $r=\sqrt{v_0}$ to obtain
    \begin{align*}
        \Pbb\left(V<v_0 \middle| \mathscr B\right)
        &\leq
        \Pbb\left(|a^\top x|<\sqrt{v_0} \middle| \mathscr B\right)\\
        &\leq
        2\sqrt{v_0}
        \sqrt{\frac{n_0}{2\pi}}\\
        &=
        2
        \sqrt{\frac{\pi\delta^2}{288n_0}}
        \sqrt{\frac{n_0}{2\pi}}
        =
        \frac{\delta}{12}.
    \end{align*}
    By a union bound over the three failure events defining $\mathcal E$, we get $\Pbb(\mathcal E^c\mid\mathscr B) \leq \frac{\delta}{4}.$ Now, on the event $\mathcal E$, the random maximizer $\lambda^\ast$ lies in a bounded interval, but its location still depends on the data. In particular,
    \begin{align}
        |\lambda^{\ast}|
        =
        \frac{|S|}{2\sigma^{2}_{i}V}
        \leq
        \frac{R}{2\sigma^{2}_{i}\sqrt{V}}.
        \label{eq:lambda*-range}
    \end{align}
    The fixed-$\lambda$ identity can only be applied to values chosen before observing $\boldsymbol{\xi}$, so we replace the random choice $\lambda^\ast$ by a finite deterministic net. The net is constructed separately on dyadic ranges of $V$ such that, within each dyadic range, the possible interval for $\lambda^\ast$ and the mesh size are both fixed, and every possible $\lambda^\ast$ is close to some grid point.

    We now make this construction explicit. Set $K:=\left\lceil \log_2(U/v_0)\right\rceil$ and $v_k:=2^k v_0$ for $k=0,\ldots,K$. The intervals $I_k:=[v_k,2v_k]$, $k=0,\ldots,K$, cover $[v_0,U]$. For each such $k$, define
    \begin{align}
        B_k = \frac{R}{2\sigma_i^2\sqrt{v_k}}
    \end{align}
    representing the range $\lambda^{\ast}$ if $V$ happens to fall in the $k$th dyadic interval $I_{k}$, i.e. if $V \in I_{k}$ then $v_k\leq V$. From \eqref{eq:lambda*-range} we see $\lambda^\ast \in  [-B_k, B_{k}]$. Next we define the grid spacing for each $B_{k}$.
    \begin{align}
        h_k = \frac{1}{4\sigma_i\sqrt{v_k}}
    \end{align}

    With this grid spacing we now define the deterministic set of $\lambda$'s such that adjacent grid points are at most $h_k$ apart. Let
    \begin{align}
        \Lambda_k = \left\{-B_{k} + m h_{k}: m = 0, 1, \dots \left\lceil{\frac{2B_{k}}{h_{k}}}\right\rceil\right\}.
    \end{align}
    Within the $k$th grid, there are $N_{k} = \ceil{\frac{2B_{k}}{h_{k}}} + 1\leq \frac{2B_{k}}{h_{k}} + 2$ points spread at $h_{k}$. Finally, let $\Lambda:=\bigcup_{k=0}^K\Lambda_k$. For this grid, if $V\in I_k$, there is a point $\lambda_k\in\Lambda_k$ such that $|\lambda_k-\lambda^\ast|\leq h_k/2$. Now, to see what approximating $\lambda^\ast$ by the fixed grid point $\lambda_k$ costs us, define the quadratic exponent and its exponential by
    \begin{align*}
        g(\lambda) = \lambda S-\lambda^2\sigma_i^2 V,\qquad
        M(\lambda) = \exp\{g(\lambda)\}.
    \end{align*}
    Computing the difference between $\lambda^{\ast}$ and a fixed $\lambda_{k}$ we get,
    \begin{align*}
        g(\lambda^\ast)-g(\lambda_k)
        &=
        \sigma_i^2 V(\lambda_k-\lambda^\ast)^2
        \leq
        \sigma_i^2(2v_k)\frac{h_k^2}{4}
        =
        \frac{1}{32}.
    \end{align*}
    Approximating $\lambda^\ast$ by the fixed grid point $\lambda_k$ loses at most a constant in the exponent. Consequently, on $\mathcal E\cap\{S^2/V>t\}$,
    \begin{align*}
        g(\lambda_k) &\geq g(\lambda^\ast)-\frac{1}{32} = \frac{S^2}{4\sigma_i^2V}-\frac{1}{32} > \frac{t}{4\sigma_i^2}-\frac{1}{32}.
    \end{align*}
    Equivalently, for some grid point $\lambda\in\Lambda$,
    \begin{align*}
        M(\lambda) &\geq \exp\left\{\frac{t}{4\sigma_i^2}-\frac{1}{32}\right\}.  \end{align*}
    By construction, $ |\Lambda_k| = N_k \leq \frac{2B_k}{h_k}+2 = \frac{4R}{\sigma_i}+2$.  Therefore
    \begin{align*}
        |\Lambda| &\leq (K+1) \left( \frac{4R}{\sigma_i}+2 \right).
    \end{align*}
    Let $A_{\Lambda} := (K+1) \left( \frac{4R}{\sigma_i}+2 \right),$ and $t_{\Lambda} := 4\sigma_i^2 \left( \log A_{\Lambda} + \log\frac{4}{3\delta} + \frac{1}{32} \right)$.
    Applying Markov's inequality to the fixed grid yields, for any $t>0$,
    \begin{align*}
        \Pbb\left( \mathcal E\cap \left\{ \frac{S^2}{V}>t \right\} \middle| \mathscr B \right)
        &\leq \Pbb\left( \bigcup_{\lambda\in\Lambda} \left\{ M(\lambda) \geq \exp\left\{ \frac{t}{4\sigma_i^2} - \frac{1}{32} \right\} \right\} \middle| \mathscr B \right)
        \\
        &\leq \sum_{\lambda\in\Lambda} \exp\left\{ -\frac{t}{4\sigma_i^2} + \frac{1}{32} \right\} \Ebb[M(\lambda)\mid\mathscr B] \\
        &= |\Lambda| \exp\left\{ -\frac{t}{4\sigma_i^2} + \frac{1}{32} \right\}.
    \end{align*}
    Taking $t=t_{\Lambda}$ gives
    \begin{align*}
        |\Lambda| \exp\left\{ -\frac{t_{\Lambda}}{4\sigma_i^2} + \frac{1}{32} \right\} &\leq
        A_{\Lambda} \exp\left\{ -\log A_{\Lambda} - \log\frac{4}{3\delta} \right\} = \frac{3\delta}{4}.
    \end{align*}
    Thus
    \begin{align*}
        \Pbb\left(
            \mathcal E\cap
            \left\{
                \frac{S^2}{V}>
                t_{\Lambda}
            \right\}
             \middle| \mathscr B
        \right)
        &\leq
        \frac{3\delta}{4}.
    \end{align*}
    Combining this with $\Pbb(\mathcal E^c\mid\mathscr B)\leq\delta/4$ yields
    \begin{align*}
        \Pbb\left(
            \frac{S^2}{V}
            >
            t_{\Lambda}
             \middle| \mathscr B
        \right)
        &\leq
        \delta .
    \end{align*}

    To show that $t_{\Lambda} = \mathcal{O}(\sigma _{i}^{2}\log(en_{0}/\delta ))$, let $L=\log(e n_0/\delta)$ and $a=\log(12/\delta)$. We first collect the deterministic bounds that enter the calculation. First $a = \log 12+\log\frac{1}{\delta} \leq 3L$. Using this we can see,
    \begin{align}
        \frac{R^2}{\sigma_i^2}
        &=
        2\left(n_0+2\sqrt{n_0a}+2a\right)
        \leq
        4n_0+6a
        \leq
        22n_0L,
    \end{align}
    Here we use $2\sqrt{n_0a}\leq n_0+a$, $a\leq3L$, $n_0\leq n_0L$, and $L\leq n_0L$. This implies $\frac{R}{\sigma_i} \leq \sqrt{22n_0L}$. Now for count of the elements in the grid $\Lambda$, we have
    \begin{align}
        U
        &=
        \left(1+\sqrt{2a}\right)^2
        \leq
        6a
        \leq
        18L,
    \end{align}
    Here we use $a\geq1$ and $2\sqrt{2a}\leq 3a$. Putting in the value of chosen $v_{0}$ we get
    \begin{align}
        \log\frac{U}{v_0}
        &\leq
        \log 2000+\log n_0+\log L+2\log\frac{1}{\delta}
        \leq
        11L.
    \end{align}
    Therefore, $K+1 \leq \log_2\frac{U}{v_0}+2 \leq 18L$. With these preliminary bounds, the grid-size estimate becomes
    \begin{align*}
        \log A_\Lambda &= \log(K+1) + \log\left(\frac{4R}{\sigma_i}+2\right) \\
        &\leq \log(18L) + \log\left(\frac{4R}{\sigma_i}+2\right) \\
        &\leq 4L+5L \\
        &= 9L.
    \end{align*}
    The same constants give
    \begin{align*}
        t_\Lambda &= 4\sigma_i^2 \left( \log A_\Lambda + \log\frac{4}{3\delta} + \frac{1}{32} \right) \\
        &\leq 4\sigma_i^2 \left( 9L + \log\frac{4}{3\delta} + \frac{1}{32} \right) \\
        &\leq 4\sigma_i^2(10L) \\
        &= 40\sigma_i^2L.
    \end{align*}
    Averaging over $\mathscr B$ completes the proof.
\end{proof}

\paragraph{Proof of Lemma~\ref{lem:algo2-causal-design}}\label{app:causal-design-proof}
\begin{proof}
    We use the standing condition for the tested triples in this section: $j\in A$ and $S_i\setminus\{j\}\subseteq A$. Hence all neighbors of $i$ belong to $A$. The initial state is fixed. Let $D_r\in\Rbb^{p\times p}$ be the diagonal matrix with the single nonzero entry $(D_r)_{rr}={\Theta_{rr}^{-1}}$, and set $B_n := I-D_{I^{(n)}}{\Theta}$.
    The Glauber update can be written as
    \begin{align*}
        X^{(n)} = B_n X^{(n-1)} + e_{I^{(n)}}\varepsilon_{I^{(n)}}^{(n)} .
    \end{align*}
    For $n_1\leq n_2$, define $\Pi_{n_1}^{n_2}:=B_{n_2}B_{n_2-1}\cdots B_{n_1}$, and set $\Pi_{n+1}^{n}:=I$.
    Since the full update schedule is contained in $\mathscr B$, every matrix $\Pi_{n_1}^{n_2}$ is $\mathscr B$-measurable. Unrolling the recursion gives
    \begin{align*}
        X^{(n)} = \Pi_1^n X^{(0)} + \sum_{s=1}^{n} \Pi_{s+1}^n e_{I^{(s)}}\varepsilon_{I^{(s)}}^{(s)}.
    \end{align*}

    Fix a block $b$. Applying the last equation at times $t_3^{(b)}$ and $t_2^{(b)}$ gives an affine expansion
    \begin{align*}
        x_b = e_j^\top \left( X^{(t_3^{(b)})} - X^{(t_2^{(b)})} \right) = d_b + \sum_{s\leq t_3^{(b)}} c_{b,s} \varepsilon_{I^{(s)}}^{(s)},
    \end{align*}
    where
    \begin{align*}
        d_b:=e_j^\top(\Pi_1^{t_3^{(b)}}-\Pi_1^{t_2^{(b)}})X^{(0)} \quad \text{ and } \quad c_{b,s}:=e_j^\top(\Pi_{s+1}^{t_3^{(b)}}-\mathds{1}\{s\leq t_2^{(b)}\}\Pi_{s+1}^{t_2^{(b)}})e_{I^{(s)}}.
    \end{align*}
    The coefficients $d_b$ and $c_{b,s}$ are $\mathscr B$-measurable. We now separate the innovations at the measurement times $\mathcal M$ from the rest. For each measurement pair recall the definitions $\eta_{a}, \xi_{a}$ as sums and differences of the measurement pair and so,
    \begin{align*}
        \varepsilon_i^{(t_1^{(a)})} = \frac{\eta_a-\xi_a}{2}, \quad \text{ and } \quad \varepsilon_i^{(t_4^{(a)})} = \frac{\eta_a+\xi_a}{2}.
    \end{align*}
    Only measurement pairs with index $a<b$ can contribute to $x_b$. If $a>b$, then both measurement times occur after $t_3^{(b)}$. If $a=b$, the same-block term is absent because Lemma~\ref{lem:iiji-regression} gives $x_b\perp \xi_b$. Therefore, with
    \begin{align*}
        u_b &:= d_b + \sum_{\substack{s\leq t_3^{(b)}\\ 
        s\notin\mathcal M}} c_{b,s}\varepsilon_{I^{(s)}}^{(s)} + \frac{1}{2} \sum_{a<b} \left( c_{b,t_1^{(a)}} + c_{b,t_4^{(a)}} \right) \eta_a \quad \text{ and } \quad L_{ba} := \frac{1}{2} \left( c_{b,t_4^{(a)}} - c_{b,t_1^{(a)}} \right),
        \qquad a<b,
    \end{align*}
    we can write $x_b = u_b+\sum_{a<b}L_{ba}\xi_a$. By definition of $\mathscr B$, both $u_b$ and $L_{ba}$ are $\mathscr B$-measurable.
    This proves the affine representation and immediately implies that $x_b$ is $\mathcal F_{b-1}$-measurable.

    It remains to identify the conditional law of $\xi_b$. The two innovations $\varepsilon_i^{(t_1^{(b)})}$ and $\varepsilon_i^{(t_4^{(b)})}$ are independent $\mathcal N(0,\sigma_i^2)$ variables, independent of the update schedule and of all other innovations. Their sum $\eta_b$ and difference $\xi_b$ are independent centered Gaussians, and
    \begin{align*}
        \xi_b
        \sim
        \mathcal N(0,2\sigma_i^2).
    \end{align*}
    Since $\mathcal F_{b-1}$ is generated by $\mathscr B$ and $\xi_1,\ldots,\xi_{b-1}$, it does not reveal any information about $\xi_b$ beyond $\eta_b$, which is independent of $\xi_b$. Therefore
    \begin{align*}
        \xi_b\mid\mathcal F_{b-1}
        \sim
        \mathcal N(0,2\sigma_i^2).
    \end{align*}
\end{proof}

\paragraph{Proof of Lemma~\ref{lem:algo2-noise-energy}}\label{app:algo2-noise-energy-proof}

\begin{lemma}\label{lem:algo2-noise-energy}
Fix a triple $(i,A,j)$ and consider the first $n_0$ completed
$A$-$iiji$ blocks. Let
$\boldsymbol{\xi}=(\xi_1,\ldots,\xi_{n_0})^\top$ be the corresponding
noise-contrast vector from~\eqref{eq:algo2-vecform}. For every
$\delta\in(0,1)$, with probability at least $1-\delta$,
\begin{equation}
    2\sigma_i^2 \left( n_0 - 2\sqrt{n_0\log\frac{2}{\delta}} \right) \leq \norm{\boldsymbol{\xi}}^2 \leq 2\sigma_i^2 \left( n_0 + 2\sqrt{n_0\log\frac{2}{\delta}} + 2\log\frac{2}{\delta}
    \right).
\end{equation}
\end{lemma}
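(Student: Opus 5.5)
The plan is to identify the exact conditional law of $\norm{\boldsymbol{\xi}}^2$ and then apply the two-sided chi-square tail bounds of \citet{laurent2000adaptive}, in the same way as in the proof of Lemma~\ref{lem:algoone-residual}.

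The first step is distributional. Condition on the full update schedule $I^{(1:\infty)}$. The tick times $t^{(b)}_1<t^{(b)}_2<t^{(b)}_3<t^{(b)}_4$ of the first $n_0$ $A$-$iiji$ blocks are then deterministic. So are the measurement times $t^{(b)}_1$ and $t^{(b)}_4$, which are $2n_0$ distinct $i$-update times because the blocks are disjoint. The innovation array is independent of the labels, so the variables $\varepsilon_i^{(t^{(b)}_1)}$ and $\varepsilon_i^{(t^{(b)}_4)}$, $b\le n_0$, are conditionally iid $\mathcal N(0,\sigma_i^2)$. Each $\xi_b$ is a difference of two of them, and distinct $b$ use disjoint pairs. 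Hence $\boldsymbol{\xi}\mid I^{(1:\infty)}\sim\mathcal N(0,2\sigma_i^2 I_{n_0})$, and therefore $\norm{\boldsymbol{\xi}}^2/(2\sigma_i^2)\sim\chi^2_{n_0}$ conditionally. This agrees with Lemma~\ref{lem:iiji-regression} and with item~3 of Lemma~\ref{lem:algo2-causal-design}. The argument must note that $n_0$ is fixed in advance, not data-dependent, so the random block times cause no selection bias. The key point is that the block times are functions of the labels alone.

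The second step is concentration. Apply the Laurent--Massart upper tail $\Pbb(Z\ge n_0+2\sqrt{n_0x}+2x)\le e^{-x}$ and lower tail $\Pbb(Z\le n_0-2\sqrt{n_0x})\le e^{-x}$ with $x=\log(2/\delta)$. Each failure event then has conditional probability at most $\delta/2$, and a union bound gives conditional probability at least $1-\delta$ for the two-sided bound. Multiplying through by $2\sigma_i^2$ gives the stated inequalities. Since the conditional bound does not depend on the labels, averaging over $I^{(1:\infty)}$ yields the unconditional claim.

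I do not expect a real obstacle. The only delicate point is the first step, namely justifying that selecting blocks via the label sequence preserves the iid Gaussian law of the selected innovations. Conditioning on the labels and using the independence of the innovation array from the labels settles this.
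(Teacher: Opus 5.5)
Your proposal is correct and follows essentially the same route as the paper's proof. Both identify $\norm{\boldsymbol{\xi}}^2/(2\sigma_i^2)\sim\chi^2_{n_0}$ from the disjoint innovation pairs, then apply the two Laurent--Massart tails with $x=\log(2/\delta)$ and a union bound. Your explicit conditioning on the label sequence makes rigorous a step the paper asserts in one line: selecting blocks through the labels does not change the law of the selected innovations.
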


\begin{proof}
For each block, $\xi_b$ is the difference of the two fresh
$i$-innovations at the first and fourth update times of that block:
\begin{align}
    \xi_b = \varepsilon_i^{(t_4^{(b)})} - \varepsilon_i^{(t_1^{(b)})}.
\end{align}
These innovation pairs are disjoint across blocks, and each innovation is distributed as $\mathcal N(0,\sigma_i^2)$. Hence $\xi_1,\ldots,\xi_{n_0}$ are iid $\mathcal N(0,2\sigma_i^2)$, and
\begin{align}
        Z
        :=
        \frac{\norm{\boldsymbol{\xi}}^2}{2\sigma_i^2}
        \sim
        \chi^2_{n_0}.
\end{align}
The Laurent--Massart chi-square inequalities state that, for $Z\sim\chi^2_m$ and every $x>0$,
\begin{align*}
    \Pbb\left(
        Z\geq m+2\sqrt{mx}+2x
    \right)
    &\leq e^{-x},\\
    \Pbb\left(
        Z\leq m-2\sqrt{mx}
    \right)
    &\leq e^{-x}.
\end{align*}
Applying these bounds with $m=n_0$ and $x=\log(2/\delta)$ and taking a union bound gives the claim.
\end{proof}

We finish the separation ingredients by treating the two possible values of the edge coefficient. Under the null, the self-normalized cross term is small relative to the noise energy. Under the alternative, the fresh $j$-innovation supplies enough covariate energy for the signal term to dominate the same fluctuations.
\paragraph{Proof of Lemma~\ref{lem:algo2-null}}\label{app:null-control-lem-proof}

\begin{proof}
    Under the null hypothesis $\beta_{ij}=0$, equation~\eqref{eq:algo2-null-stat} gives
    \begin{align*}
        \Phi_{i,A}(j) = \frac{ \frac{(\mathbf{X}^\top \boldsymbol{\xi})^2}{\norm{\mathbf{X}}^2}}{ \norm{\boldsymbol{\xi}}^2 - \frac{(\mathbf{X}^\top \boldsymbol{\xi})^2}{\norm{\mathbf{X}}^2} }.
    \end{align*}
    Applying Lemma~\ref{lem:algo2-snm-bound} with failure probability $\delta/2$ gives, with probability at least $1-\delta/2$,
    \begin{align*}
        \frac{(\mathbf{X}^\top \boldsymbol{\xi})^2}{\norm{\mathbf{X}}^2} \leq 40\sigma_i^2 \log\frac{2e n_0}{\delta}.
    \end{align*}
    Applying the lower-tail part of Lemma~\ref{lem:algo2-noise-energy} with failure probability $\delta/2$ gives, with probability at least $1-\delta/2$,
    \begin{align*}
        \norm{\boldsymbol{\xi}}^2 \geq 2\sigma_i^2 \left( n_0 - 2\sqrt{n_0\log\frac{4}{\delta}} \right)
    \end{align*}
    By a union bound, both events hold with probability at least $1-\delta$. Assume $n_0>80 \log\frac{2e n_0}{\delta}$. Since $\log(4/\delta)\leq \log\frac{2e n_0}{\delta}$ for $n_0\geq 1$, we have
    \begin{align*}
        \sqrt{n_0\log\frac{4}{\delta}} \leq \sqrt{n_0\log\frac{2e n_0}{\delta}} \leq \frac{n_0}{8}.
    \end{align*}
    Therefore, on the same event,
    \begin{align*}
        \norm{\boldsymbol{\xi}}^2 - \frac{(\mathbf{X}^\top \boldsymbol{\xi})^2}{\norm{\mathbf{X}}^2}
        \geq
        \sigma_i^2 \left\{ 2\left(n_0-\frac{n_0}{4}\right) - \frac{n_0}{2} \right\}
        =
        \sigma_i^2n_0.
    \end{align*}
    Hence
    \begin{align*}
        \Phi_{i,A}(j)
        \leq
        \frac{
            40\sigma_i^2\log(2e n_0/\delta)
        }{
            \sigma_i^2n_0
        }
        =
        40\frac{\log(2e n_0/\delta)}{n_0}.
    \end{align*}
\end{proof}

\paragraph{Proof of Lemma~\ref{lem:algo2-excitation}}\label{app:algo2-excitation-proof}

\begin{proof}
    Set $\zeta_b := \sqrt{\Theta_{jj}} \varepsilon_j^{(t_3^{(b)})}$.
    We use the natural pre-update filtration. Since $t_3^{(b)}$ is determined by the update schedule, for $b=1,\ldots,n_0$ define
    \begin{align*}
        \mathcal H_{b-1} := \sigma\left( {I^{(1:\infty)}},\, \{\varepsilon_{I^{(s)}}^{(s)}:s<t_3^{(b)}\} \right).
    \end{align*}
    These sigma-fields are increasing in $b$ and freeze the whole trajectory up to time $t_3^{(b)}-1$. Since the schedule is independent of the update innovations, the stopped innovation $\zeta_b$ is independent of $\mathcal H_{b-1}$ and has distribution $\mathcal N(0,1)$.  We next express $x_b$ in terms of the fresh $j$-innovation at time $t_3^{(b)}$. Since $t_3^{(b)}$ is the first update of coordinate $j$ after $t_2^{(b)}$ {along the $\bar A$-clock}, coordinate $j$ is not updated between $t_2^{(b)}$ and $t_3^{(b)}$. Hence
    \begin{align*}
        X_j^{(t_3^{(b)}-1)} = X_j^{(t_2^{(b)})}.
    \end{align*}
    The Glauber update at time $t_3^{(b)}$ gives
    \begin{align*}
        X_j^{(t_3^{(b)})} = \sum_{k\in S_j}\beta_{jk}X_k^{(t_3^{(b)}-1)} + \varepsilon_j^{(t_3^{(b)})} .
    \end{align*}
    Therefore,
    \begin{align*}
        x_b &= X_j^{(t_3^{(b)})} - X_j^{(t_2^{(b)})} \\
        &= \sum_{k\in S_j}\beta_{jk}X_k^{(t_3^{(b)}-1)} - X_j^{(t_3^{(b)}-1)} + \varepsilon_j^{(t_3^{(b)})} .
    \end{align*}
    Thus, after multiplying by $\sqrt{\Theta_{jj}}$, we may write
    \begin{align*}
        \sqrt{\Theta_{jj}} x_b = a_b+\zeta_b,
    \end{align*}
    where $a_b$ is $\mathcal H_{b-1}$-measurable. Note that $a_b$ may depend on earlier fresh $j$-innovations, but not on $\zeta_b$ itself. Since $(a+Z)^2$ has a noncentral chi-square distribution with one degree of freedom and noncentrality parameter $a^2$, the noncentral chi-square moment-generating function gives, for any deterministic $a\in\mathbb R$, any $\lambda>0$, and $Z\sim\mathcal N(0,1)$,
    \begin{align*}
        \Ebb\left[ \exp\{-\lambda(a+Z)^2\} \right] =
        \frac{1}{\sqrt{1+2\lambda}} \exp\left\{ -\frac{\lambda a^2}{1+2\lambda} \right\} \leq (1+2\lambda)^{-1/2}.
    \end{align*}
    Applying this conditionally with $\lambda=1$ gives
    \begin{align*}
        \Ebb\left[ \exp\{-\Theta_{jj}x_b^2\}  \middle|  \mathcal H_{b-1} \right] \leq 3^{-1/2}.
    \end{align*}
    Since $x_1,\ldots,x_{b-1}$ are $\mathcal H_{b-1}$-measurable, iterating conditional expectations over this increasing sequence yields
    \begin{align*}
        \Ebb\left[ \exp\{-\Theta_{jj}\norm{\mathbf{X}}^2\} \right] \leq 3^{-n_0/2}.
    \end{align*}
    By Markov's inequality applied to the decreasing exponential,
    \begin{align*}
        \Pbb\left( \Theta_{jj}\norm{\mathbf{X}}^2 < \frac{n_0}{4} \right)
        &\leq e^{n_0/4} \Ebb\left[ \exp\{-\Theta_{jj}\norm{\mathbf{X}}^2\} \right] \\
        &\leq \exp\left\{ -\left( \frac{1}{2}\log 3-\frac{1}{4} \right)n_0 \right\} \leq e^{-n_0/4}.
    \end{align*}
    If $n_0\geq 4\log(1/\delta)$, the last bound is at most $\delta$.
\end{proof}

\paragraph{Proof of Lemma~\ref{lem:algo2-alt}}\label{app:algo2-alt-proof}

\begin{proof}
Under the assumption $S_i\setminus\{j\}\subseteq A$, the regression identity applies. Recall from~\eqref{eq:algo2-signal-drop} that
\begin{align*}
    \mathrm{RSS}_0-\mathrm{RSS}_1
    \geq
    \frac{1}{2}\beta_{ij}^2\|x\|^2
    -
    \frac{(\mathbf{X}^\top \boldsymbol{\xi})^2}{\|x\|^2}.
\end{align*}

The sample-size condition is stronger than the excitation requirement $n_0\geq 4\log(3/\delta)$, since $\kappa_{ij}^2\leq 1$. Applying Lemmas~\ref{lem:algo2-excitation}, \ref{lem:algo2-snm-bound}, and \ref{lem:algo2-noise-energy} with failure probability $\delta/3$ each gives the following three events:
\begin{enumerate}[label=(\roman*)]
    \item By Lemma~\ref{lem:algo2-excitation}, with probability at least $1-\delta/3$,
    \begin{align*}
        \|x\|^2
        \geq
        \frac{1}{4}\sigma_j^2 n_0.
    \end{align*}
    \item By Lemma~\ref{lem:algo2-snm-bound}, with probability at least $1-\delta/3$,
    \begin{align*}
        \frac{(\mathbf{X}^\top \boldsymbol{\xi})^2}{\|x\|^2}
        \leq
        40\sigma_i^2\log\frac{3e n_0}{\delta}.
    \end{align*}
    \item By the upper-tail part of Lemma~\ref{lem:algo2-noise-energy}, with probability at least $1-\delta/3$,
    \begin{align*}
        \|\xi\|^2
        \leq
        2\sigma_i^2
        \left(
            n_0
            +
            2\sqrt{n_0\log\frac{6}{\delta}}
            +
            2\log\frac{6}{\delta}
        \right).
    \end{align*}
\end{enumerate}
By a union bound, all three events hold simultaneously with probability at least $1-\delta$.

Set $L:=\log(3e n_0/\delta)$ and $a:=\log(6/\delta)$. Then $a\leq L$ and the sample-size condition gives $L\leq \kappa_{ij}^2 n_0/640\leq n_0/640$. Also,
\begin{align*}
    \beta_{ij}^2\sigma_j^2
    =
    \frac{\Theta_{ij}^2}{\Theta_{ii}^2\Theta_{jj}}
    =
    \kappa_{ij}^2\sigma_i^2.
\end{align*}
Therefore,
\begin{align*}
    \mathrm{RSS}_0-\mathrm{RSS}_1
    &\geq
    \frac{1}{8}\kappa_{ij}^2\sigma_i^2 n_0
    -
    40\sigma_i^2 L
    \\
    &\geq
    \frac{1}{8}\kappa_{ij}^2\sigma_i^2 n_0
    -
    \frac{1}{16}\kappa_{ij}^2\sigma_i^2 n_0
    \\
    &=
    \frac{1}{16}\kappa_{ij}^2\sigma_i^2 n_0 .
\end{align*}

For the denominator, $\mathrm{RSS}_1\leq\|\xi\|^2$. Since $a\leq n_0/640$, we have $2\sqrt{n_0a}\leq n_0/2$ and $2a\leq n_0/8$. Hence
\begin{align*}
    \mathrm{RSS}_1
    &\leq
    2\sigma_i^2
    \left(
        n_0
        +
        2\sqrt{n_0a}
        +
        2a
    \right)
    \\
    &\leq
    2\sigma_i^2
    \left(
        n_0+\frac{n_0}{2}+\frac{n_0}{8}
    \right)
    \\
    &=
    \frac{13}{4}\sigma_i^2 n_0
    \leq
    4\sigma_i^2 n_0.
\end{align*}
If $\mathrm{RSS}_1=0$, then $\Phi_{i,A}(j)=+\infty$ and the claim is immediate. Otherwise,
\begin{align*}
    \Phi_{i,A}(j)
    =
    \frac{\mathrm{RSS}_0-\mathrm{RSS}_1}{\mathrm{RSS}_1}
    \geq
    \frac{\kappa_{ij}^2\sigma_i^2 n_0/16}{4\sigma_i^2 n_0}
    =
    \frac{\kappa_{ij}^2}{64}.
\end{align*}
This completes the proof.
\end{proof}

\subsection{Finite-sample recovery}
The preceding groups establish the two events needed for the global guarantee: every tested triple has a separated statistic, and every statistic is available by the observation horizon. We now combine those events with the deterministic recovery proposition.
\paragraph{Proof of Theorem~\ref{thm:algo2-finite-sample}}
\label{app:algo2-finite-sample-proof}
\begin{proof}
Set $n_0:=n_\star$ and $\delta_0:=\delta/(2|\mathfrak T|)$. We first remove the fixed-point dependence in the separation requirement. Put
\begin{align}
    a:=\frac{5120}{\kappa^2}, \qquad b:=\frac{6e|\mathfrak T|}{\delta}, \qquad L:=\log(2ab).
\end{align}
Then~\eqref{eq:algo2-final-nstar} gives $n_0\geq 2aL$. Since $L\geq 1$, the function $x\mapsto x-a\log(bx)$ is increasing on $[2aL,\infty)$, and
\begin{equation*}
    2aL-a\log(2abL) = a(L-\log L) \geq 0.
\end{equation*}
Therefore
\begin{equation}
    n_0
    \geq
    \frac{5120}{\kappa^2}
    \log\frac{6e|\mathfrak T| n_0}{\delta}.
    \label{eq:algo2-final-n0-implicit}
\end{equation}

For every $(i,A,j)\in\mathfrak T$ satisfying $S_i\setminus\{j\}\subseteq A$, condition~\eqref{eq:algo2-final-n0-implicit} is the hypothesis of Proposition~\ref{prop:algo2-fixed-triple-separation} with failure probability $\delta_0$, since $\log(3e n_0/\delta_0)=\log(6e|\mathfrak T|n_0/\delta)$. A union bound therefore shows that, with probability at least $1-\delta/2$, all relevant triples satisfy the separation inequalities.

For every $(i,A,j)\in\mathfrak T$, $|A|=2d$ and hence $q=|A|+1$. Since $\kappa\leq1$, the definition~\eqref{eq:algo2-final-nstar} gives $n_\star\geq\log(4|\mathfrak T|/\delta)$, so condition~\eqref{eq:algo2-final-horizon} implies $N\geq128pq^{3}\bigl(n_\star+\log(4|\mathfrak T|/\delta)\bigr)$, which is the hypothesis of Lemma~\ref{lem:algo2-fixed-triple-clock} with failure probability $\delta_0$, since $\log(2/\delta_0)=\log(4|\mathfrak T|/\delta)$. A second union bound shows that, with probability at least $1-\delta/2$, every tested triple has accumulated $n_0$ blocks by time $N$.

On the intersection of these two events, the statistics are available on $\mathfrak T$ and satisfy the separation property of Definition~\ref{def:generic-separation}. Proposition~\ref{prop:generic-separation-recovery} then gives $\widehat \Ecal=\Ecal$, and a union bound over the two failures yields $\Pbb(\widehat \Ecal\neq\Ecal)\leq\delta$.
\end{proof}

\section{Large Degree-Constrained Spectral Radius and Slow Mixing}
\label{app:large-rho-slow-mixing}

In this appendix we prove
Proposition~\ref{prop:large-rho-slow-relaxation}, stated in
Section~\ref{sec:algoone}. Its proof constructs, for each
precision matrix $\Theta$ and each sufficiently large potential level,
an initialization at which the Gaussian Gibbs sampler mixes slowly.
The result is not used in either recovery proof. Its role is to make a
matched comparison with mixing-based reductions:
Theorem~\ref{thm:algoone-main} pays only logarithmically in
$\rho_{2d}(\Theta)$ at the constructed initialization, whereas a
mixing horizon that is uniform over the corresponding potential ball
must account for that initialization and hence pays at least linearly
in $\rho_{2d}(\Theta)$. We proceed in three stages. First, we fix the
relevant notions of mixing on the unbounded state space. Next, we
record an elementary two-moment lower bound on total variation distance
(Lemma~\ref{lem:moment-tv-obstruction}). Finally, we verify the lemma's
hypotheses for the random-scan chain and prove the proposition in
quantitative form.

Let $\pi_\Theta=\mathcal N(0,\Theta^{-1})$, and let $P_\Theta$ denote the one-step transition kernel of the discrete-time random-scan Gaussian Glauber dynamics of Section~\ref{sec:glauber}. A sweep is $p$ consecutive global updates, with kernel $P_\Theta^p$. We work with the normalized precision matrix
\begin{equation}
    D:=\diag(\Theta_{11},\ldots,\Theta_{pp}),
    \qquad
    \widetilde\Theta:=D^{-1/2}\Theta D^{-1/2}.
    \label{eq:normalized-precision}
\end{equation}
Thus $\widetilde\Theta$ is the precision matrix of the normalized coordinates $Y=D^{1/2}X$, and its diagonal entries are all equal to one. The degree-constrained spectral radius $\rho_{2d}(\Theta)$ of Definition~\ref{def:rho2d} will enter through these coordinates: each matrix in the maximum defining it is a principal block of $\widetilde\Theta^{-1}=D^{1/2}\Sigma D^{1/2}$.

\paragraph{Mixing time on a continuous state space.}
For probability measures $\mu$ and $\nu$ on $\Rbb^p$, we define the total variation distance by
\begin{equation}
    \norm{\mu-\nu}_{\mathrm{TV}}
    :=
    \sup_{B\in\mathcal B(\Rbb^p)}
    \abs{\mu(B)-\nu(B)}.
    \label{eq:tv-distance-continuous}
\end{equation}
Because the Gaussian state space is unbounded, the unrestricted worst-case quantity
\begin{equation*}
    \sup_{x\in\Rbb^p}
    \norm{P_\Theta^t(x,\cdot)-\pi_\Theta}_{\mathrm{TV}}
\end{equation*}
is not the right object for comparison with our finite-horizon theorems, which start the chain from a deterministic initialization of bounded potential (recall the potential $\mathfrak{L}(x)=\norm{x}_\Theta^2$ from Section~\ref{sec:design-control}). We therefore restrict the supremum to the set of such initializations,
\begin{equation}
    K_{E_0}:=\{x\in\Rbb^p:\norm{x}_\Theta^2\leq E_0\}
    \label{eq:energy-ball}
\end{equation}
and define the corresponding restricted distance and mixing time by
\begin{align}
    d_{E_0}(t) &:= \sup_{x\in K_{E_0}} \norm{P_\Theta^t(x,\cdot)-\pi_\Theta}_{\mathrm{TV}}, \nonumber\\
    t_{\mathrm{mix}}^{(E_0)}(\varepsilon) &:= \inf\{t\geq0:d_{E_0}(t)\leq\varepsilon\}.
    \label{eq:energy-ball-mixing-time}
\end{align}
This is the natural total-variation notion for statements that start the chain from an initialization of potential at most $E_0$ and measure time in the same global-update units as the observed trajectory. The corresponding sweep count is $t_{\mathrm{mix}}^{(E_0)}(\varepsilon)/p$, up to integer rounding.

For a reversible Markov kernel $K$ with stationary law $\pi$, we define the spectral gap and the relaxation time by
\begin{equation}
    \gamma_{\mathrm{rel}}(K) := 1-\sup\left\{ \frac{\norm{K f}_{L^2(\pi)}} {\norm{f}_{L^2(\pi)}}: f\in L^2(\pi),\ \Ebb_{\pi}f=0,\ f\neq0 \right\}, \qquad \tau_{\mathrm{rel}}(K):=\gamma_{\mathrm{rel}}(K)^{-1}.
    \label{eq:relaxation-time-definition}
\end{equation}
The spectral behavior of the Gaussian Gibbs sampler is classical: for a Gaussian target, the relaxation of the Gibbs sampler is attained on linear functions~\citep{AMIT199182,amit1996convergence}, and on linear functions the random-scan kernel acts through the matrix $I-p^{-1}D^{-1}\Theta$. In the notation above, this gives the exact identity
\begin{equation}
    {\gamma_{\mathrm{rel}}(P_\Theta) = \frac{\lambda_{\min}(\widetilde\Theta)}{p}, \qquad \tau_{\mathrm{rel}}(P_\Theta) = \frac{p}{\lambda_{\min}(\widetilde\Theta)}.}
    \label{eq:amit-gaussian-gibbs-gap}
\end{equation}
Since $P_\Theta$ is reversible and positive on $L^2(\pi_\Theta)$ {(it is the average over $i\in[p]$ of the conditional-expectation operators $f\mapsto\Ebb[f\mid X_{-i}]$, each an orthogonal projection in $L^2(\pi_\Theta)$)}, the sweep kernel obeys $1-\gamma_{\mathrm{rel}}(P_\Theta^p)=(1-\gamma_{\mathrm{rel}}(P_\Theta))^p$; in sweep units, therefore,
\begin{equation}
    {\gamma_{\mathrm{rel}}(P_\Theta^p)
    =
    1-\Bigl(1-\frac{\lambda_{\min}(\widetilde\Theta)}{p}\Bigr)^{\!p}
    \asymp
    \lambda_{\min}(\widetilde\Theta),
    \qquad
    \tau_{\mathrm{rel}}(P_\Theta^p)
    \asymp
    \lambda_{\min}(\widetilde\Theta)^{-1},}
    \label{eq:sweep-relaxation-from-update}
\end{equation}
{where the implied constants may be taken to be $1-e^{-1}$ and $1$, using only $\lambda_{\min}(\widetilde\Theta)\leq1$, which holds because $\widetilde\Theta$ has unit diagonal.}
It remains to convert a slow spectral mode into a lower bound on the total variation distance. The next lemma records the elementary two-moment bound we use; Proposition~\ref{prop:large-rho-slow-relaxation} then verifies its hypotheses for the random-scan chain by conditioning on the scan schedule.

\begin{lemma}[Two-moment total-variation lower bound]
\label{lem:moment-tv-obstruction}
Let $\mu$ and $\pi$ be probability measures on a common measurable space, and let $f$ be square-integrable under both laws with $\Ebb_\pi f=0$. If, for some $a>0$,
\begin{equation}
    \Ebb_\mu f\geq 2a,
    \qquad
    \Var_\mu(f)\leq V_\mu,
    \qquad
    \Var_\pi(f)\leq V_\pi,
    \label{eq:moment-tv-conditions}
\end{equation}
then, for $A:=\{x:f(x)\geq a\}$,
\begin{equation}
    \mu(A)\geq 1-\frac{V_\mu}{a^2},
    \qquad
    \pi(A)\leq \frac{V_\pi}{a^2},
    \qquad
    \norm{\mu-\pi}_{\mathrm{TV}}
    \geq
    1-\frac{V_\mu+V_\pi}{a^2}.
    \label{eq:moment-tv-lower}
\end{equation}
In particular, $\norm{\mu-\pi}_{\mathrm{TV}}\geq 3/4$ whenever $V_\mu+V_\pi\leq a^2/4$.
\end{lemma}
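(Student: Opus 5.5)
The plan is to test both measures on the single set $A=\{x:f(x)\geq a\}$. Each of the three bounds then follows from Chebyshev's inequality applied with the right centering, together with $\norm{\mu-\pi}_{\mathrm{TV}}\geq \mu(A)-\pi(A)$, which comes directly from definition~\eqref{eq:tv-distance-continuous}.

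\textbf{Step 1 (the $\mu$-side).} The complement of $A$ is $\{f<a\}$. Because $\Ebb_\mu f\geq 2a$, every $x$ with $f(x)<a$ satisfies $\Ebb_\mu f-f(x)>a$, so
\begin{equation*}
    \{f<a\}\subseteq\{\abs{f-\Ebb_\mu f}> a\}.
\end{equation*}
Chebyshev's inequality under $\mu$ gives $\mu(A^c)\leq \Var_\mu(f)/a^2\leq V_\mu/a^2$. This is the first claim.

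\textbf{Step 2 (the $\pi$-side).} Under $\pi$ we have $\Ebb_\pi f=0$, so $A\subseteq\{\abs{f-\Ebb_\pi f}\geq a\}$. Chebyshev's inequality under $\pi$ gives $\pi(A)\leq V_\pi/a^2$. If one wants the slightly sharper one-sided (Cantelli) bound it is also available, but it is not needed.

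\textbf{Step 3 (combining).} Since $A$ is measurable ($f$ is measurable),
\begin{equation*}
    \norm{\mu-\pi}_{\mathrm{TV}}\geq\mu(A)-\pi(A)\geq 1-\frac{V_\mu+V_\pi}{a^2}.
\end{equation*}
The final sentence of the lemma follows by substitution: if $V_\mu+V_\pi\leq a^2/4$, the right-hand side is at least $3/4$.

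No step is a genuine obstacle. The only point needing care is Step 1, where the inclusion must use the margin $2a-a=a$ supplied by the hypothesis $\Ebb_\mu f\geq 2a$ so that the Chebyshev deviation is at least $a$. Square-integrability under both laws is what makes both variances finite and both Chebyshev applications legitimate.
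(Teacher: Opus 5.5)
Your proposal is correct and matches the paper's proof essentially step for step. It applies Chebyshev under $\mu$, using the margin $2a-a=a$, to bound $\mu(A^c)$, applies Chebyshev under $\pi$ to bound $\pi(A)$, and combines them through $\norm{\mu-\pi}_{\mathrm{TV}}\geq\mu(A)-\pi(A)$.
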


\begin{proof}
We begin by observing that, on the event $A^c$, we have $f-\Ebb_\mu f\leq-a$, since $\Ebb_\mu f\geq2a$. Chebyshev's inequality therefore gives
\begin{equation*}
    \mu(A^c)
    \leq
    \frac{V_\mu}{a^2}.
\end{equation*}
Next, since $\Ebb_\pi f=0$, Chebyshev's inequality under $\pi$ gives
\begin{equation*}
    \pi(A)
    \leq
    \frac{V_\pi}{a^2}.
\end{equation*}
Combining the two bounds, we therefore have
\begin{equation*}
    \norm{\mu-\pi}_{\mathrm{TV}}
    \geq
    \mu(A)-\pi(A)
    \geq
    1-\frac{V_\mu+V_\pi}{a^2},
\end{equation*}
which is~\eqref{eq:moment-tv-lower}. The final assertion follows by substituting $V_\mu+V_\pi\leq a^2/4$.
\end{proof}

\begin{proof}[Proof of Proposition~\ref{prop:large-rho-slow-relaxation}]
We prove the proposition in two stages. First, we show that
$\rho_{2d}(\Theta)$ lower bounds the relaxation time. Next, we use the
resulting slow spectral mode to construct an initialization
$x_\star\in K_{E_0}$ that remains far from stationarity. In particular,
we prove that there are universal constants $c>0$ and
$E_\star<\infty$ such that, for every positive definite precision
matrix $\Theta$ and every $E_0\geq E_\star$,
\begin{equation}
    t_{\mathrm{mix}}^{(E_0)}(1/4)
    \geq
    c\,p\,\rho_{2d}(\Theta),
    \qquad
    \tau_{\mathrm{rel}}(P_\Theta)
    \geq
    p\,\rho_{2d}(\Theta).
    \label{eq:energy-ball-mixing-lower}
\end{equation}
The construction below gives
$\norm{x_\star}_\Theta^2=E_0$ and establishes the stronger
pointwise bound~\eqref{eq:slow-start-main} at that single
initialization; taking the supremum over $K_{E_0}$ then yields the
first inequality in~\eqref{eq:energy-ball-mixing-lower}.

For the first stage, we begin by observing that, for every coordinate set $A$,
\begin{equation}
    D_A^{1/2}\Sigma_{AA}D_A^{1/2}
    =
    (\widetilde\Theta^{-1})_{AA}
\end{equation}
where $D_A=\diag(\Theta_{jj}:j\in A)$. Since the largest eigenvalue of a principal submatrix of a positive semidefinite matrix is at most the largest eigenvalue of the full matrix, we therefore have
\begin{equation}
    \rho_{2d}(\Theta)
    =
    \max_{|A|=2d}
    \lambda_{\max}\!\left((\widetilde\Theta^{-1})_{AA}\right)
    \leq
    \lambda_{\max}(\widetilde\Theta^{-1})
    =
    \lambda_{\min}(\widetilde\Theta)^{-1}.
    \label{eq:rho-controls-normalized-gap}
\end{equation}
Combining~\eqref{eq:rho-controls-normalized-gap} with the exact identity~\eqref{eq:amit-gaussian-gibbs-gap} gives $\tau_{\mathrm{rel}}(P_\Theta)=p/\lambda_{\min}(\widetilde\Theta)\geq p\,\rho_{2d}(\Theta)$, the second inequality in~\eqref{eq:energy-ball-mixing-lower}, with constant one. This completes the first stage.

For the second stage, we must connect relaxation to total-variation mixing over the initializations in $K_{E_0}$. The plan is to exhibit a slow linear eigenfunction, start the chain at a bounded-potential point aligned with it, and apply Lemma~\ref{lem:moment-tv-obstruction} to the law of the chain at time $t$. Let $\lambda_\star:=\lambda_{\min}(\widetilde\Theta)$, let $v_\star$ be a unit eigenvector of $\widetilde\Theta$ for $\lambda_\star$, and set
\begin{equation}
    f_\star(x):=v_\star^\top D^{1/2}x,
    \qquad
    \alpha_\star:=1-\frac{\lambda_\star}{p}.
    \label{eq:slow-linear-mode}
\end{equation}
Each update replaces one uniformly chosen coordinate by its conditional mean plus centered noise, so the one-step conditional mean is linear: $\Ebb[X^{(t)}\mid X^{(t-1)}=x]=(I-p^{-1}D^{-1}\Theta)\,x$. Since $D^{1/2}(I-p^{-1}D^{-1}\Theta)=(I-p^{-1}\widetilde\Theta)D^{1/2}$ and $v_\star^\top\widetilde\Theta=\lambda_\star v_\star^\top$, applying $v_\star^\top D^{1/2}$ gives $P_\Theta f_\star=\alpha_\star f_\star$: this is the slow mode behind~\eqref{eq:amit-gaussian-gibbs-gap}, with $\tau_{\mathrm{rel}}(P_\Theta)=(1-\alpha_\star)^{-1}=p/\lambda_\star$. We start the chain at the point $x_\star$ defined by
\begin{equation}
    D^{1/2}x_\star
    =
    \sqrt{\frac{E_0}{\lambda_\star}}\,v_\star .
    \label{eq:slow-energy-ball-start}
\end{equation}
This point satisfies $\mathfrak{L}(x_\star)=\norm{x_\star}_\Theta^2=E_0$, so $x_\star\in K_{E_0}$, and iterating the eigenfunction identity gives
\begin{equation}
    \Ebb_{x_\star} f_\star(X^{(t)})
    =
    \alpha_\star^t f_\star(x_\star).
    \label{eq:slow-eigenfunction-mean}
\end{equation}
Under the stationary law, we have
\begin{equation}
    \Var_{\pi_\Theta}(f_\star)
    =
    v_\star^\top\widetilde\Theta^{-1}v_\star
    =
    \lambda_\star^{-1},
    \qquad
    \frac{f_\star(x_\star)^2}{\Var_{\pi_\Theta}(f_\star)}
    =
    E_0.
    \label{eq:slow-mode-visibility}
\end{equation}
We now bring in the scan schedule. A word on why we condition on it: the covariance-domination bound we are about to use holds for each fixed schedule, but not after averaging over the random labels, because the randomness of the schedule contributes an additional variance term through the conditional mean. We therefore argue conditionally on the schedule throughout. To this end, we write each update in affine form: if $I^{(s)}=i$, then
\begin{equation}
    X^{(s)}=M_iX^{(s-1)}+\Theta_{ii}^{-1/2}e_i\,\zeta^{(s)},
    \qquad
    M_i:=I-\Theta_{ii}^{-1}e_ie_i^\top\Theta,
    \qquad
    \zeta^{(s)}\stackrel{\mathrm{iid}}{\sim}\mathcal N(0,1),
    \label{eq:affine-gibbs-update}
\end{equation}
Fix an update schedule $\sigma=(\sigma_1,\ldots,\sigma_t)\in[p]^t$. For integers $1\leq a\leq b\leq t$, define the partial products and the accumulated noise
\begin{equation}
    \Pi_a^b(\sigma):=M_{\sigma_b}M_{\sigma_{b-1}}\cdots M_{\sigma_a},
    \qquad
    \mathbf B_a^b(\sigma):=\sum_{s=a}^{b}\Pi_{s+1}^{b}(\sigma)\,\Theta_{\sigma_s\sigma_s}^{-1/2}e_{\sigma_s}\,\zeta^{(s)},
    \label{eq:schedule-products}
\end{equation}
with the convention $\Pi_{b+1}^{b}(\sigma):=I$. Iterating~\eqref{eq:affine-gibbs-update} along the schedule gives, conditionally on $\sigma$,
\begin{equation}
    X^{(t)}=M_\sigma x_\star+\mathbf B_\sigma,
    \qquad
    M_\sigma:=\Pi_1^t(\sigma),
    \qquad
    \mathbf B_\sigma:=\mathbf B_1^t(\sigma)\sim\mathcal N\bigl(0,V_\sigma V_\sigma^\top\bigr),
    \label{eq:fixed-schedule-affine}
\end{equation}
where $V_\sigma:=\begin{bmatrix}V_\sigma(1)&\cdots&V_\sigma(t)\end{bmatrix}$ has columns $V_\sigma(s):=\Pi_{s+1}^{t}(\sigma)\,\Theta_{\sigma_s\sigma_s}^{-1/2}e_{\sigma_s}$. Since each fixed coordinate update leaves $\pi_\Theta=\mathcal N(0,\Sigma)$ invariant and $\mathbf B_\sigma$ is independent of the initialization, the stationary covariance decomposes as $\Sigma=M_\sigma\Sigma M_\sigma^\top+V_\sigma V_\sigma^\top$. This yields the covariance domination announced above:
\begin{equation}
    V_\sigma V_\sigma^\top
    =
    \Sigma-M_\sigma\Sigma M_\sigma^\top
    \preceq \Sigma.
    \label{eq:fixed-schedule-covariance-domination}
\end{equation}
Define the noiseless slow-mode value along the schedule $\sigma$ as the conditional mean of $f_\star(X^{(t)})$,
\begin{equation}
    s_\sigma:=v_\star^\top D^{1/2}M_\sigma x_\star .
\end{equation}
Averaging over the random labels and using~\eqref{eq:slow-eigenfunction-mean} gives $\Ebb_\sigma s_\sigma=\alpha_\star^t f_\star(x_\star)$. We next observe that each noiseless coordinate update decreases the potential:
\begin{equation}
    \mathfrak{L}(M_ix)
    =
    \mathfrak{L}(x)-\Theta_{ii}^{-1}\bigl((\Theta x)_i\bigr)^2
    \leq \mathfrak{L}(x) .
    \label{eq:noiseless-energy-contraction}
\end{equation}
Combining this contraction with the Cauchy--Schwarz inequality, $\abs{s_\sigma}\leq\norm{\widetilde\Theta^{-1/2}v_\star}_2\,\norm{M_\sigma x_\star}_{\Theta}=\lambda_\star^{-1/2}\sqrt{\mathfrak{L}(M_\sigma x_\star)}$, we conclude that $\abs{s_\sigma}\leq\lambda_\star^{-1/2}\sqrt{\mathfrak{L}(x_\star)}=f_\star(x_\star)$ for every schedule $\sigma$. We now take $c_0:=1/100$ and restrict to horizons $t\leq c_0p/\lambda_\star$. On this range, Bernoulli's inequality $(1-u)^t\geq1-tu$ gives
$\alpha_\star^t=(1-\lambda_\star/p)^t\geq1-t\lambda_\star/p\geq1-c_0=0.99$, and hence $\Ebb_\sigma s_\sigma\geq0.99\,f_\star(x_\star)$. Consider the set of favorable schedules
\begin{equation}
    \mathfrak S_t
    :=
    \left\{\sigma\in[p]^t:
    s_\sigma\geq \frac12 f_\star(x_\star)
    \right\}.
\end{equation}
Let $q:=\Pbb(\mathfrak S_t^{c})$. The deterministic envelope gives $s_\sigma\leq f_\star(x_\star)$ on $\mathfrak S_t$, while $s_\sigma<\frac12 f_\star(x_\star)$ on $\mathfrak S_t^{c}$ by definition. Splitting $\Ebb_\sigma s_\sigma$ over the two events,
\begin{equation}
    0.99\,f_\star(x_\star)
    \leq
    \Ebb_\sigma s_\sigma
    \leq
    (1-q)\,f_\star(x_\star)+\frac{q}{2}\,f_\star(x_\star)
    =
    \Bigl(1-\frac{q}{2}\Bigr)f_\star(x_\star),
\end{equation}
so $q\leq0.02$; that is, $\Pbb(\mathfrak S_t)\geq0.98$. For every schedule in $\mathfrak S_t$, the conditional law of $f_\star(X^{(t)})$ has mean at least $f_\star(x_\star)/2$ and variance at most $\Var_{\pi_\Theta}(f_\star)$: indeed, by~\eqref{eq:fixed-schedule-covariance-domination}, the conditional variance $v_\star^\top D^{1/2}V_\sigma V_\sigma^\top D^{1/2}v_\star$ is at most $v_\star^\top D^{1/2}\Sigma D^{1/2}v_\star=v_\star^\top\widetilde\Theta^{-1}v_\star$. Consider now the event
\begin{equation}
    A_t:=\left\{x:f_\star(x)\geq \frac14 f_\star(x_\star)\right\}.
\end{equation}
Applying Lemma~\ref{lem:moment-tv-obstruction} conditionally on each $\sigma\in\mathfrak S_t$, with $\mu$ the conditional law of $X^{(t)}$, $\pi=\pi_\Theta$, $f=f_\star$, and $a=f_\star(x_\star)/4$, and using the variance identity~\eqref{eq:slow-mode-visibility}, we obtain
\begin{equation}
    \Pbb_{x_\star}\!\left(X^{(t)}\in A_t\,\middle|\,\sigma\right)
    \geq
    1-\frac{16}{E_0},
    \qquad
    \pi_\Theta(A_t)
    \leq
    \frac{16}{E_0}.
    \label{eq:good-schedule-tv-separation}
\end{equation}
Since $\Pbb_{x_\star}(X^{(t)}\in A_t)\geq\Pbb(\mathfrak S_t)\bigl(1-\tfrac{16}{E_0}\bigr)$, taking $E_\star$ large enough, the last two displays imply that, for all $E_0\geq E_\star$ and all $t\leq c_0p/\lambda_\star$,
\begin{equation}
    \norm{P_\Theta^t(x_\star,\cdot)-\pi_\Theta}_{\mathrm{TV}}
    \geq
    \Pbb_{x_\star}(X^{(t)}\in A_t)-\pi_\Theta(A_t)
    \geq
    \frac34.
    \label{eq:slow-start-main}
\end{equation}
We therefore have $d_{E_0}(t)\geq3/4$ throughout a constant fraction of the relaxation time. Since $\lambda_\star^{-1}\geq\rho_{2d}(\Theta)$ by~\eqref{eq:rho-controls-normalized-gap}, taking $c=c_0$ proves the mixing-time bound in~\eqref{eq:energy-ball-mixing-lower} and completes the second stage.
\end{proof}
\end{document}